\documentclass{article}

\usepackage{microtype}
\usepackage{graphicx}
\usepackage{subfigure}
\usepackage{booktabs} 

\usepackage{hyperref}

\usepackage[accepted]{arxiv_tem}

\usepackage{amsmath}
\usepackage{wrapfig}
\usepackage{amssymb}
\usepackage{mathtools}
\usepackage{amsthm}
\usepackage{xcolor}
\usepackage{algorithm}
\usepackage{algorithmic}
\usepackage{amsmath,amssymb}
\usepackage{caption}

\usepackage[capitalize,noabbrev]{cleveref}

\newcommand{\highlight}[1]{\textcolor{blue}{#1}} 

\theoremstyle{plain}
\newtheorem{theorem}{Theorem}[section]

\newtheorem{lemma}{Lemma}

\theoremstyle{definition}

\newtheorem{assumption}{Assumption}
\theoremstyle{remark}

\usepackage[textsize=tiny]{todonotes}

\icmltitlerunning{Trajectory Consistency for One-Step Generation on Euler Mean Flows}

\begin{document}

\twocolumn[
\icmltitle{Trajectory Consistency for One-Step Generation on Euler Mean Flows}



\begin{icmlauthorlist}
\icmlauthor{Zhiqi Li}{yyy}
\icmlauthor{Yuchen Sun}{yyy}
\icmlauthor{Duowen Chen}{yyy}
\icmlauthor{Jinjin He}{yyy}
\icmlauthor{Bo Zhu}{yyy}
\end{icmlauthorlist}

\icmlaffiliation{yyy}{College of Computing, Georgia Tech, Location, Country}

\icmlcorrespondingauthor{Zhiqi Li}{zli3167@gatech.edu}

\icmlkeywords{}

\vskip 0.3in
]

\newcommand{\bo}[1]{\textcolor{blue}{[Bo: #1]}}



\printAffiliationsAndNotice{}  

\begin{abstract}
We propose \emph{Euler Mean Flows (EMF)}, a flow-based generative framework for one-step and few-step generation that enforces long-range trajectory consistency with minimal sampling cost. 
The key idea of EMF is to replace the trajectory consistency constraint, which is difficult to supervise and optimize over long time scales, with a principled linear surrogate that enables direct data supervision for long-horizon flow-map compositions.  We derive this approximation from the semigroup formulation of flow-based models and show that, under mild regularity assumptions, it faithfully approximates the original consistency objective while being substantially easier to optimize.  This formulation leads to a unified, JVP-free training framework that supports both $u$-prediction and $x_1$-prediction variants, avoiding explicit Jacobian computations and significantly reducing memory and computational overhead.  Experiments on image synthesis, particle-based geometry generation, and functional generation demonstrate improved optimization stability and sample quality under fixed sampling budgets, together with approximately $50\%$ reductions in training time and memory consumption compared to existing one-step methods for image generation.

\end{abstract}

\section{Introduction}



Recent advances in generative modeling, particularly diffusion models and flow matching methods, have achieved remarkable success in image generation \cite{lipman2023flow, song2021denoising}, video synthesis \cite{ho2022video, ho2022imagen}, and 3D geometry modeling \cite{luo2021diffusion, vahdat2022lion, zhang2025geometry}.
From a continuous-time perspective, these methods can be unified by the continuity equation, which learns a time-dependent velocity field of probability flow to transform simple noise distributions into complex data distributions \cite{lipman2024flow}. Under this formulation, the generation process corresponds to a continuous trajectory evolving from noise space to data space, and model training aims to characterize the dynamics of this flow-map trajectory at different time points.


While such trajectory-based models provide strong expressive power, sampling from the learned dynamics typically requires a large number of time steps, resulting in substantial inference cost. To improve efficiency, a growing body of recent work focuses on one- and few-step generation, aiming to approximate long sampling trajectories with only a small number of steps \cite{song2023consistency, frans2025shortcut, guo2025splitmeanflow, geng2025mean}, thereby reducing inference time while maintaining competitive generation quality. A central challenge in one-step and few-step generation lies in learning trajectory consistency \cite{frans2025shortcut, guo2025splitmeanflow}, meaning that predictions at different points along the trajectory should agree with each other.

Mathematically, trajectory consistency can be characterized by the semigroup property of flow maps: for all $t \le s \le r$, the flow maps $\phi_{t\to r}$ satisfy $\phi_{t\to r} = \phi_{s\to r}\circ \phi_{t\to s}$.  Here, the flow map $\phi_{t\to r} : \mathcal{X} \to \mathcal{X}$ is defined as the mapping that transports a state in the space $\mathcal{X}$ from time $t$ to time $r$ along the underlying dynamics, and satisfies $\phi_{t\to r}(x_t)=x_r$ for any trajectory $(x_t)_{t\in[0,1]}$.  
This semi-group property ensures coherent long-range flow maps across different time scales~\cite{webb1985semi}. However, learning such flow maps with trajectory consistency with supervision from data is nontrivial, because in traditional flow-based models (e.g., \cite{lipman2023flow, liu2023flow}) there is no explicit reference flow map $\phi_{t\to r}$ derived from the data distribution. As a result, trajectory consistency constraints cannot be directly supervised during model training. Moreover, inaccurate formulations of trajectory consistency may disrupt the underlying flow-map structure, leading to unstable training or degraded generation quality \cite{boffi2025flow}.


Existing approaches for addressing this issue can be categorized into two classes.
The first category methods progressively extend short-range transitions to longer intervals by composing locally learned dynamics \cite{frans2025shortcut,guo2025splitmeanflow}. Although conceptually simple, such methods suffer from error accumulation for trajectories, as long-range behavior is inferred indirectly from short-range estimates without explicit global supervision.  The second class, represented by MeanFlow and related methods \cite{geng2025mean, zhang2025alphaflow}, derives training objectives directly from continuity equations. By introducing consistency constraints at the level of flow maps, these methods provide principled supervision for long-range dynamics. However, they rely on explicit gradient computation with several practical limitations: (1) Explicit gradient computation incurs substantial memory and computational overhead that limits efficient network architectures and training procedures (e.g., FlashAttention \cite{dao2022flash}). (2) Incorporating explicit gradients into the loss may lead to numerical instability, especially under mixed-precision training, as observed in our image and SDF generation experiments. 
(3) Gradient-based objectives are poorly compatible with sparse computation primitives, limiting their applicability to domains such as functional generation and point cloud modeling.


In this work, we propose a new approach for trajectory-consistent one-step generation by revisiting the semigroup structure of flow maps. Our key idea is to apply a local linearization to the trajectory consistency equation and enable direct supervision from the data distribution for long-range flow maps. This linear approximation transforms the original long-range consistency constraint into a learnable surrogate objective without calculating derivatives. We proved that, under reasonable conditions (\autoref{assupmption} and \autoref{thm:validity}), this surrogate loss faithfully approximates the original consistency objective and enables accurate learning of the instantaneous velocity along long-range flow maps. Based on this analysis, we further develop a gradient-free training framework that significantly reduces memory and computational cost and leads to more stable optimization. 
%
%
Motivated by the manifold assumption 
as advocated in~\citep{li2025back}, we formulate a unified framework for one-step and few-step generation that supports both $u$-prediction and $x_1$-prediction, with the latter emphasizing direct supervision on the terminal state of the flow. Our linearized formulation is inspired by Euler time integration~\cite{hairer1993solving} in numerical mathematics; accordingly, we refer to our approach as \emph{Euler Mean Flows} (EMF).
%

Our main contributions are summarized as follows:
\begin{itemize}
    \item We propose \emph{Euler Mean Flows (EMF)}, a trajectory-consistent framework for one-step and few-step generation based on a linearized semigroup formulation.
    \item We introduce a surrogate loss obtained by local linearization of the semigroup consistency objective, with theoretical guarantees under mild assumptions.
    \item We develop a unified, JVP-free training scheme that avoids explicit derivative computations and supports both $u$-prediction and $x_1$-prediction variants.
\end{itemize}

\section{Related Work}
\paragraph{Diffusion and Flow Matching.}
Diffusion models \cite{ho2020denoising, song2019generative, song2021score} have achieved remarkable success in data generation by progressively denoising random initial samples to produce high-quality data. This generative process is commonly formulated as the solution of stochastic differential equations (SDEs). In contrast, Flow Matching methods \cite{liu2023flow, lipman2023flow, albergo2023building} learn the velocity fields that define continuous flow trajectories between probability distributions. 

\paragraph{Few-step Diffusion/Flow Models.} Consistency models~\cite{song2023consistency, song2023improved, geng2025consistency, lu2025simplifying} were proposed as independently trainable one-step generators in parallel to model distillation \cite{salimans2022progressive, meng2023distillation, geng2023onestep}. Motivated by consistency models, recent works have introduced self-consistency principles into related generative frameworks~\cite{yang2024consistency,frans2025shortcut,zhou2025inductive}. Mean Flow \cite{geng2025mean} models the time-averaged velocity by differentiating the Mean Flow identity. $\alpha$-Flow \cite{zhang2025alphaflow} improves the training process by disentangling the conflicting components in the Mean Flow objective. SplitMeanFlow \cite{guo2025splitmeanflow} leverages interval-splitting consistency to eliminate the need for JVP computations in Mean Flow models. While both SplitMeanFlow and our method are JVP-free, SplitMeanFlow is limited to a distillation-based setting, whereas our approach enables fully independent training.
\begin{figure*}[t]
    \centering
    \includegraphics[width=\linewidth]{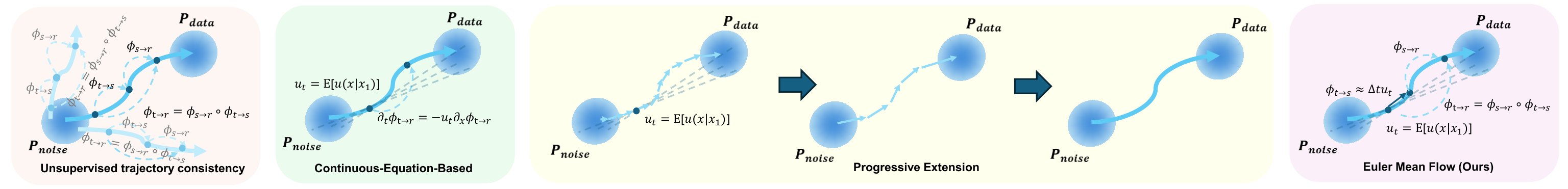}
    \caption{Illustration of trajectory consistency and the Euler Mean Flow (EMF) method.
    Left: Multiple flow maps can satisfy trajectory consistency, but only the solid path correctly transports noise to the data distribution, highlighting the necessity of data supervision.
    Middle: Two existing approaches for learning long-range trajectories, including continuous-equation-based methods and progressive extension.
    Right: Our EMF reformulates the trajectory consistency equation via a local linear approximation and introduces direct data supervision for long-range dynamics through the resulting linearized segment.}
    \vspace{-5mm}
    \label{fig:overview}
\end{figure*}
\section{Background}
Let $\mathcal{D} = \{ x^i \in \mathcal{X}\}_{i=1}^n$ be a dataset drawn from an unknown data distribution $p_{\text{data}}$ on space $\mathcal{X}$. Flow Matching aims to learn $p_1 = p_{\text{data}}$ by learning a continuous-time velocity field $u(x,t)$, $t \in [0,1]$, that transports a base distribution $p_0$, typically Gaussion distribution $\mathcal{N}(0,\sigma^2)$, to $p_1$ along a continuous path of distributions $(p_t)_{t \in [0,1]}$.  The evolution of the distribution path is governed by the continuity equation
\begin{equation}
    \begin{aligned}
    \frac{\partial}{\partial t} p_t(x) + \nabla \cdot \big( p_t(x)u_t(x) \big) = 0
    \end{aligned}
\end{equation}
Given learned $u_t(x)$, sampling $x_0 \sim p_0$, samples from $p_1$ can be obtained by integrating the ODE
\begin{equation}\label{eq:ODE_sampling}
    \begin{aligned}
        \frac{d x_t}{d t} = u(x_t, t), \quad x_0\sim p_0         
    \end{aligned}
\end{equation}
The associated flow $\phi_t : \mathcal{X} \to \mathcal{X}$ is defined by $\phi_t(x_0) = x_t$ for any $x_0, x_t$ satisfying the ODE and it satisfies
\begin{equation}\label{eq:evolv_flow_map}
    \begin{aligned}
\frac{\partial}{\partial t} \phi_t = u_t \circ \phi_t,  \quad\phi_0 = \mathrm{Id}_{\mathcal{X}} .        
    \end{aligned}
\end{equation}
We further define the flow map $\phi_{t \to r} = \phi_r \circ \phi_t^{-1}$. The path $p_t$ can be written as a pushforward $p_t = (\phi_t)_\sharp p_0$.

Flow Matching seeks to learn the velocity field $u_t(x)$.  Given a parameterized model $u_t^\theta(x)$, samples are generated by numerically integrating \autoref{eq:ODE_sampling} from $t=0$ to $t=1$.  A natural training objective for training is $\mathcal{L}^{FM}(\theta) = \mathbb{E}_{t, x\sim p_t(x)}\|u_t^\theta(x) - u_t(x)\|^2$, which directly matches the model velocity to the reference velocity field.  However, this objective cannot be optimized in practice, since both $u_t(x)$ and the marginal distribution $p_t(x)$ are not directly observable from the dataset.  To incorporate supervision from data, Flow Matching introduces conditional velocities $u_t(x|x_1) = \frac{x_1-x}{1-t}$ and conditional flows $\phi_t(x| x_1)=t(x_1-x)+x$ for arbitrary $x_1 \in \mathcal{X}$. These conditional quantities induce a conditional distribution  $p_t(x | x_1)=(\phi(\cdot| x_1))_\sharp p_0$, and the marginal velocity field and distribution can then be recovered by marginalization $u_t(x) = \mathbb{E}_{x_1\sim p_t(x_1|x)}[u_t(x | x_1)]$ and $p_t(x) = \mathbb{E}_{x_1\sim p_t(x_1|x)}[p(x | x_1)]$respectively.  Based on these constructions, Flow Matching defines the conditional surrogate objective $\mathcal{L}_c^{FM}(\theta) = \mathbb{E}_{t,x_1\sim p_{data}, x\sim p_t(x|x_1)}\|u_t^\theta(x) - u_t(x|x_1)\|^2$, which admits supervision from data samples.  It has been shown that $\nabla_\theta \mathcal{L}_c^{FM}(\theta) = \nabla_\theta \mathcal{L}^{FM}(\theta)$
and therefore $\mathcal{L}_c^{\mathrm{FM}}$ serves as a valid surrogate for optimizing $\mathcal{L}^{\mathrm{FM}}$.

Flow Matching learns the instantaneous velocity field $u_t(x)$.
As a result, sample generation requires iterative numerical integration, making it inherently a multi-step process.
In contrast, one-step and few-step generative models aim to directly learn the flow maps $\phi_{t \to r}$, enabling efficient generation with a small number of transitions.

\section{One-Step Generation on Euler Mean Flows} \label{sec:tranject_consist}
According to \autoref{eq:evolv_flow_map}, a valid flow map $\phi_{t\to r}(x)$ must satisfy (1) trajectory consistency and (2) the boundary conditions $\phi_{t\to t}(x)=x$ and $\partial_t \phi_{t\to r}(x)\vert_{r=t}=u_t(x)$. While the boundary conditions can be easily supervised from data, enforcing trajectory consistency is considerably more challenging, which hinders the learning of accurate long-range dynamics.
In this section, we study how to introduce effective data-driven supervision for long-range trajectory consistency and present Euler Mean Flow with its theoretical justification and the $x_1$-prediction variant.

\subsection{Challenge of Trajectory Consistency} \label{sec:tranject_consist}
Consider a trajectory $(x_t)_{t\in[0,1]}$, with $x_t=\phi_t(x_0)$, that satisfies \autoref{eq:ODE_sampling}, where $\phi_t$ denotes the flow defined in \autoref{eq:evolv_flow_map}.  For any $t \le s \le r$ with $t,s,r \in [0,1]$, the following \textbf{trajectory consistency} holds:
\begin{equation}\label{eq:trajectory_consistency}
\phi_{t\to r}(x_t) = \phi_{s\to r}(x_s), x_s = \phi_{t\to s}(x_t)
\end{equation}
as illustrated in \autoref{fig:overview}. Taking the limit $s\to t$, this formulation admits a continuous formulation, 
\begin{equation}\label{eq:continous_trajectory_consistency}
    \begin{aligned}
        \partial_t \phi_{t\to r}(x) + \partial_x \phi_{t\to r}(x)(\partial_{s}\phi_{t\to s}(x))|_{s=t} = 0
    \end{aligned}
\end{equation}
Leveraging the trajectory consistency formulation, we can derive discrete trajectory consistency loss $\mathcal{L}^C(\theta)$ to train a long-range model $\phi^\theta_{t\to r}(x_t)$ that represents transitions across arbitrary temporal horizons $(t,r)$.  
\begin{equation}\label{eq:consistency_flow}
    \begin{aligned}
        \mathcal{L}^C(\theta)&=\mathbb{E}_{t,s,r, x_t = (1-t)x_0 + tx_1, x_1\sim p_{data}, x_0\sim p_0}\\
        &\frac{1}{w(t,r)}\|\phi^\theta_{t\to r}(x_t) - \phi^\theta_{s\to r}(\phi^\theta_{t\to s}(x))\|_2^2
    \end{aligned}
\end{equation}
where $\frac{1}{w(t,r)}$ denotes a tunable weight. For efficiency, parts of the formulation can be implemented with a stop-gradient operator (sg) without altering the underlying semantics.  

However, trajectory consistency alone is insufficient to uniquely determine the flow map. In particular, the consistency constraint $\mathcal{L}^C(\theta)$ admits infinitely many solutions and does not, by itself, introduce supervision from the data distribution. This ambiguity stems from two fundamental issues:  (1) Like velocity fields in Flow Matching, flow maps $\phi_{t\to r}(x)$ do not admit an analytic reference derived from the data distribution $p_{data}$ and dataset $\mathcal{D}$;  (2) Flow maps do not possess a conditional counterpart $\phi_{t\to r}(x|x_1)$ analogous to conditional velocities $u_{t}(x|x_1)$ which could calculate from dataset, as formalized below.
\begin{theorem}[Non-existence of conditional flow maps]\label{thm:non_exist}There exists no conditional flow maps $\phi_{t\to r}(x | x_{t_1})$ that simultaneously (i) is consistent with the conditional velocity $u(x|x_1)$ under \autoref{eq:evolv_flow_map}, and (ii) satisfies the consistency relation $\phi_{t\to r}(x)=\mathbb{E}_{x_{1}\sim p_t(x_1|x)}[\phi_{t\to r}(x|x_1)]$ with marginal flow maps. As a result, a self-consistent conditional cumulative field does not exist.  (See \autoref{thm:non_exist_proof} for a proof.)
\end{theorem}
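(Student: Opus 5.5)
The plan is to argue by contradiction and exhibit a simple data distribution where the two requirements are incompatible. Condition (i) pins down the conditional flow map uniquely. Integrating \autoref{eq:evolv_flow_map} with the conditional velocity $u_t(x|x_1)=\frac{x_1-x}{1-t}$ forces the straight-line interpolation toward $x_1$:
\begin{equation*}
\phi_{t\to r}(x|x_1)=x+\frac{r-t}{1-t}(x_1-x).
\end{equation*}
This follows because along the ODE the quantity $(x_1-x_s)/(1-s)$ is constant. Uniqueness of ODE solutions (the conditional velocity is Lipschitz in $x$ for $t<1$) means no other choice is consistent with (i).

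Next we substitute this into condition (ii). Since the expression is affine in $x_1$, the expectation passes through:
\begin{equation*}
\mathbb{E}_{x_1\sim p_t(x_1|x)}[\phi_{t\to r}(x|x_1)]=x+(r-t)\,\frac{\mathbb{E}[x_1|x_t=x]-x}{1-t}=x+(r-t)\,u_t(x),
\end{equation*}
using the marginalization identity $u_t(x)=\mathbb{E}[u_t(x|x_1)\mid x_t=x]$ from the Background section. So (ii) would force the marginal flow map to be the single Euler step $\phi_{t\to r}(x)=x+(r-t)u_t(x)$ for all $t\le r$.

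The remaining step, and the main obstacle, is to show that this identity fails for the true marginal flow in general. One option is to argue that it would make every marginal trajectory a straight line with constant velocity. Substituting $x_r=x+(r-t)u_t(x)$ into \autoref{eq:evolv_flow_map} gives $u_r(x_r)=u_t(x)$, and differentiating in $r$ yields $\partial_t u+(\nabla u)u=0$ (Burgers' equation). I would rather avoid the general PDE argument and use a concrete counterexample instead.

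The counterexample takes $p_0=\mathcal{N}(0,1)$ in one dimension and $p_{\text{data}}=\tfrac12(\delta_{-1}+\delta_{1})$. Then $u_t(x)=\frac{\mathbb{E}[x_1|x_t=x]-x}{1-t}$, where $\mathbb{E}[x_1|x_t=x]=\tanh\!\big(\tfrac{tx}{(1-t)^2}\big)$. At $t=0$ the velocity is $u_0(x)=-x$, so the Euler map sends $x\mapsto x+r(-x)=(1-r)x$. For $r=1$ this collapses everything to $0$, while the true $\phi_{0\to1}$ pushes $p_0$ onto $\{\pm1\}$; this contradicts $p_1=(\phi_{0\to 1})_\sharp p_0$. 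Hence (i) and (ii) cannot hold simultaneously, which also rules out a self-consistent conditional cumulative field.

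I need to double-check the conditioning: at $t=0$, $x_0$ is independent of $x_1$, so $\mathbb{E}[x_1|x_0]=0$, which confirms $u_0(x)=-x$. The pushforward contradiction makes the argument short.
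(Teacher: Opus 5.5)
Your proposal is correct and is essentially the paper's argument. At $t=0$ the posterior $p_0(x_1\mid x)$ equals $p_{\text{data}}$, so averaging the straight-line conditional maps forces $\phi_{0\to r}(x)=x+r(\mathbb{E}[x_1]-x)$, which sends all of $p_0$ to the single point $\mathbb{E}[x_1]$ at $r=1$. The paper reaches this by differentiating in $r$ under the integral; you use affineness in $x_1$ directly, which is cleaner.

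The only difference is your specialization to a two-point $p_{\text{data}}$, and it is unnecessary. Your $t=0$ computation already gives a contradiction for every non-Dirac $p_{\text{data}}$, which is exactly the paper's conclusion.
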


Existing methods resolve this indeterminacy through two main strategies. \textbf{Progressive Extension} methods, such as Split-Mean Flow (SplitMF)~\cite{guo2025splitmeanflow} and ShortCut~\cite{frans2025shortcut}, learn the instantaneous velocity $\partial_s \phi_{t\to s}(x)\vert_{s=t}=u_t(x)$ and progressively extend it to longer horizons using the semigroup constraint in \autoref{eq:consistency_flow}. While effective in practice, these methods rely on indirect supervision accumulated from local dynamics, leading to weak long-range constraints and error accumulation. In contrast, \textbf{Continuous-Equation-Based Formulations}, exemplified by MeanFlow, derive long-range objectives from the continuous consistency equation in \autoref{eq:continous_trajectory_consistency} and provide more direct supervision of long-range flow maps, but require explicit gradient computation via Jacobian--vector products (JVPs), incurring high overhead and unstable optimization, particularly in sparse settings.




\subsection{Euler Mean Flow} \label{sec:u_pred}

To address these issues, we propose the Euler Mean Flows (EMF) framework.  Our key idea is to start from the semigroup  objective in~\autoref{eq:consistency_flow} and reformulate this objective via a local linear approximation, which enables direct supervision from data.  We also provide a rigorous theoretical justification for the validity of this approximation in \autoref{thm:validity} under reasonable \autoref{assupmption} on the flow maps.

\begin{theorem}[Local Linear Approximation]\label{thm:local_approximation}
Let $f : X \to Y$ be a smooth mapping between finite-dimensional spaces, and let $\mathbf{x}_0 \in X$.  When $\mathbf{x}$ is sufficiently close to $\mathbf{x}_0$, $f$ can be approximated by a linear function of the perturbation:
\begin{equation}
    \begin{aligned}
        f(\mathbf{x})\approx f(\mathbf{x}_0) + Df(\mathbf{x}_0)(\mathbf{x}-\mathbf{x}_0) + o(\|x-\mathbf{x}_0\|) .
    \end{aligned}
\end{equation}
which means in the small-perturbation limit, nonlinear effects enter only at higher order, and the local behavior of $f$ is governed by its linearization.
\end{theorem}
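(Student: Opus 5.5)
This is the classical first-order Taylor statement, i.e.\ differentiability of $f$ at $\mathbf{x}_0$. We read it precisely as
\[
f(\mathbf{x}) = f(\mathbf{x}_0) + Df(\mathbf{x}_0)(\mathbf{x}-\mathbf{x}_0) + R(\mathbf{x}), \qquad \lim_{\mathbf{x}\to\mathbf{x}_0} \frac{\|R(\mathbf{x})\|}{\|\mathbf{x}-\mathbf{x}_0\|}=0 .
\]
The $\approx$ together with $o(\cdot)$ in \autoref{thm:local_approximation} is then an equality up to a remainder of this order. Because $X$ and $Y$ are finite-dimensional, all norms are equivalent. So we may fix bases, identify $X\cong\mathbb{R}^n$ and $Y\cong\mathbb{R}^m$, and argue one component $f_j$ at a time: a vector remainder is $o(\|\mathbf{x}-\mathbf{x}_0\|)$ exactly when each of its finitely many components is.

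For a single component we use smoothness, and $C^1$ already suffices. Choose a convex ball $B$ around $\mathbf{x}_0$ on which $f$ is defined. For $\mathbf{x}\in B$, let $h=\mathbf{x}-\mathbf{x}_0$ and $g(\tau)=f_j(\mathbf{x}_0+\tau h)$. The fundamental theorem of calculus and the chain rule give
\[
f_j(\mathbf{x}) - f_j(\mathbf{x}_0) = \int_0^1 Df_j(\mathbf{x}_0+\tau h)\,h\,d\tau .
\]
Subtracting $Df_j(\mathbf{x}_0)h$ leaves
\[
R_j(\mathbf{x}) = \int_0^1 \big(Df_j(\mathbf{x}_0+\tau h)-Df_j(\mathbf{x}_0)\big)h\,d\tau ,
\]
and therefore
\[
|R_j(\mathbf{x})|\le \sup_{\tau\in[0,1]}\|Df_j(\mathbf{x}_0+\tau h)-Df_j(\mathbf{x}_0)\|\,\|h\| .
\]

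The only real step is showing that this supremum tends to $0$ as $h\to0$. It follows from continuity of $Df$ at $\mathbf{x}_0$: for every $\varepsilon>0$ there is $\delta>0$ such that $\|\mathbf{y}-\mathbf{x}_0\|<\delta$ implies $\|Df(\mathbf{y})-Df(\mathbf{x}_0)\|<\varepsilon$. Every point $\mathbf{x}_0+\tau h$ with $\tau\in[0,1]$ lies in this ball once $\|h\|<\delta$. Hence $\|R(\mathbf{x})\|\le C\varepsilon\|h\|$, where $C$ is the norm-equivalence constant. This is the required $o(\|\mathbf{x}-\mathbf{x}_0\|)$ bound.

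The main obstacle is not technical but interpretive: we have to make the informal ``$\approx$'' precise as above. Once that is done, the argument is the standard one. Stronger smoothness gives more: if $f\in C^2$, the same integral formula bounds the remainder by $\tfrac12\sup\|D^2f\|\,\|h\|^2$, which is the quantitative form later useful for \autoref{thm:validity}.
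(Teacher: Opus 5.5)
Your proof is correct. Reading the informal ``$\approx\cdots+o(\|\mathbf{x}-\mathbf{x}_0\|)$'' as an exact identity with an $o(\|\mathbf{x}-\mathbf{x}_0\|)$ remainder, and bounding that remainder through $\int_0^1\bigl(Df(\mathbf{x}_0+\tau h)-Df(\mathbf{x}_0)\bigr)h\,d\tau$, is the standard textbook argument. The paper gives no proof of this statement. It cites it as a known fact and uses it only to motivate approximations such as $u_{t\to s}\approx u_{t\to t}$ for $s$ close to $t$, so there is no alternative route to compare with.

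Two minor comments. First, if $Df(\mathbf{x}_0)$ is the Fr\'echet derivative, the conclusion is literally the definition of differentiability at $\mathbf{x}_0$. Your chain-rule step also already presupposes differentiability along the segment. So the integral argument's real contribution is the uniform control through the modulus of continuity of $Df$.

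Second, your closing $C^2$ remark is the most useful part. The proof of \autoref{thm:validity} writes $u^\theta_{t\to t+\Delta t}(x)=u^\theta_{t\to t}(x)+\Delta t\,\partial_s u^\theta_{t\to s}|_{s=t}+O(\Delta t^2)$. The $o(\Delta t)$ form of this theorem does not justify that by itself; it needs the second-order remainder bound you supply.
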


To reformulate the trajectory consistency objective, we follow MeanFlow~\cite{geng2025mean} and define the mean velocity field $u_{t \to r}(x) = \frac{\phi_{t \to r}(x) - x}{r - t}$.  Under this definition, the trajectory consistency relation can be rewritten as:
\begin{equation}\label{eq:traj_u_consistency}
    \begin{aligned}
        (r-t)u_{t\to r}(x_t)= (s-t)u_{t\to s}(x_t) + (r-s)u_{s\to r}(x_s)
    \end{aligned}
\end{equation}
Dividing both sides by $s-t$, we obtain
\begin{equation}\label{eq:traj_u_dev}
    \begin{aligned}
        u_{t\to s}(x_t) 
        &= (r-s)\frac{u_{t\to r}(x_t) - u_{s\to r}(x_s)}{(s-t)}+u_{t\to r}(x_t)\\
    \end{aligned}
\end{equation}
Unlike Shortcut and SplitMF, in our EMF we choose $s$ and $t$ to be close by setting $s = t + \Delta t$ with a small fixed step size $\Delta t$. We then apply \autoref{thm:local_approximation} to obtain a local approximation of the flow maps with respect to $s$: $\phi_{t\to s}(x)\approx \phi_{t\to t}(x) + \frac{\partial \phi_{t\to s}(x)}{\partial s}|_{s=t}(s-t)$.  Substituting this into the relation $\phi_{t\to s}(x) = (s-t) u_{t\to s}(x) + x$ between flows and average velocity yields $u_{t\to s}(x) \approx u_{t\to t}(x)$ when $s$ is sufficiently close to $t$. Based on this approximation, we obtain the following approximation with \autoref{eq:traj_u_dev}:
\begin{equation}\label{eq:EFM_equation}
\resizebox{\linewidth}{!}{$
\begin{aligned}
    \highlight{u_{t\to t}}(x_t)  & \approx (r-t-\Delta t)\frac{u_{t\to r}(x_t) - u_{t+\Delta t\to r}(x_{t+\Delta t})}{\Delta t}
    +u_{t\to r}(x_t)\\
   u_{t\to r}(x_t) &\approx \highlight{u_{t\to t}}(x_t)
   + (r-t-\Delta t)\frac{u_{t+\Delta t\to r}(x_{t+\Delta t}) - u_{t\to r}(x_t)}{\Delta t}
\end{aligned}
$}
\end{equation}
where $x_{t+\Delta t}$ is calculated as $x_{t+\Delta t} = \Delta tu_{t\to t+\Delta t}(x_t) + x_t \approx \Delta t \highlight{u_{t\to t}}(x_t) + x_t$.  In the above derivation, the highlighted velocity field $\highlight{u_{t\to t}}$ is obtained using the local linear approximation in \autoref{thm:local_approximation}.  Similar to MeanFlow, we replace $u_{t\to t}$ on the right-hand side of \autoref{eq:EFM_equation} with the conditional instantaneous velocity to obtain supervision from the dataset, which leads to the following loss function
\begin{equation}\label{eq:EFM_loss_equation}
\resizebox{\linewidth}{!}{$
\begin{aligned}
        &\mathcal{L}^{E}(\theta) = \mathbb{E}_{t,r,x_1 \sim p_1, x \sim p_t(x| x_1), x'=sg(\Delta t u^\theta_{t\to t}(x))+x}\\
        &[|u_{t\to r}^\theta(x)  - (u_t(x | x_1)+ (r-t-\Delta t)_+ \text{sg}(\frac{u^\theta_{t+\Delta t\to r}(x')-u^\theta_{t\to r}(x)}{\Delta t}))\|^2]  
\end{aligned}
$}
\end{equation}
Following MeanFlow, we sample a fraction of training pairs with $r = t$. With the positive clamp $(r - t - \Delta t)_+$, the proposed loss \autoref{eq:EFM_loss_equation} reduces to the Flow Matching objective $\|u^\theta_{t\to t}(x) - u_t(x|x_1)\|^2$ when $r = t$. This encourages $u^\theta_{t\to t}(x)$ to accurately learn the instantaneous velocity $u_t(x)$, which plays a crucial role in both the theoretical correctness and the stability of practical training. 

\begin{figure*}
    \centering
    \includegraphics[width=0.99\linewidth]{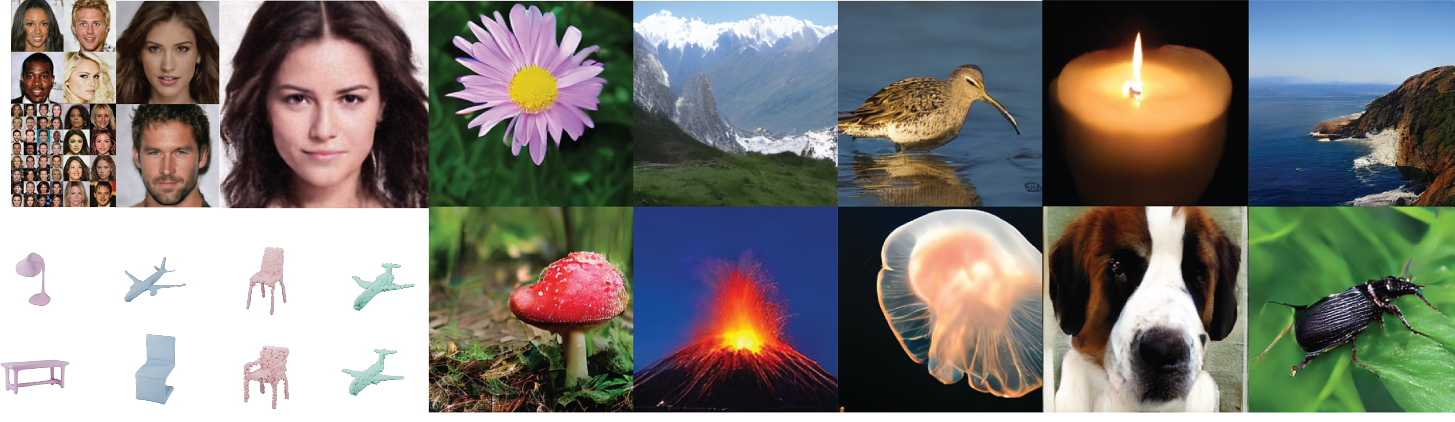}
    \caption{We present 1-step generation results of our EMF method for functional image generation (top left), SDF generation conditioned on 64 surface points (bottom left), unconditional point cloud generation on ShapeNet~\cite{chang2015shapenet} (bottom right), and ImageNet~\cite{deng2009imagenet} class-conditional generation (right).}
    \label{fig:all_results}
    \vspace{-5pt}
\end{figure*}

To theoretically justify the validity of this loss, we first introduce the following assumption on $u^\theta_{t\to r}$, which is empirically verified in \autoref{sec:assumtion_validation} ($M_g\sim 1e-3$, $M_x\sim 1e-4$ and $M_x\sim 1e1$).
\begin{assumption}[Assumption of $u^\theta_{t\to r}$]\label{assupmption}
We assume that $u^\theta_{t\to r}$ is differentiable with respect to its parameters and satisfies the following regularity conditions: (1) $M_g = \sqrt{\mathbb{E}_{t,r,x \sim  p_t(x) }[\frac{1}{m}\|\nabla_\theta u_{t\to r}^\theta(x)\|_2^2]} < +\infty$,(2) $M_x = \sqrt{\mathbb{E}_{t,r,x \sim  p_t(x) }[\frac{1}{m}\|\partial_x u_{t+\Delta t\to r}^\theta(x')\|_2^2]} < +\infty$, (3) $M_t = \sqrt{\mathbb{E}_{t,r,x \sim  p_t(x) }[\|\partial_s u_{t\to s}^\theta|_{s=t}\|^2]} < +\infty$
where $x' = x + \Delta tu_{t\to t}^\theta(x)$, $m$ is the model size and and $\|\cdot\|_2$ denotes the matrix $2$-norm.
\end{assumption}

Next, we show that,  up to an $O(\Delta t)$ error, the proposed loss serves as a valid surrogate for the trajectory consistency objective and leads to comparable optimization behavior. We begin with the following lemma.
\begin{lemma}\label{lemma_for_u}
With $M_g <+\infty$ holds in \autoref{assupmption}, our Euler Mean Flow loss $\mathcal{L}^{E}(\theta)$ and the approximated trajectory consistency loss $\mathcal{L}^{\tilde C}(\theta)$ satisfy
 \begin{equation}
     \resizebox{\linewidth}{!}{$\begin{aligned}
         D(\nabla  \mathcal{L}^{E}(\theta),\nabla  \mathcal{L}^{ \tilde C}(\theta)) \le M_g \sqrt{\mathbb{E}_{t,r, x \sim  p_t(x)} [\|u_{t\to t}^\theta(x)-u_t(x)\|^2]}
     \end{aligned}
     $}
 \end{equation}
where $D$ denotes the root mean squared error (RMSE). Consequently, during training, if $\|u^\theta_{t\to t}(x) - u_t(x)\|^2 \to 0$, then $\mathcal{L}^{E}(\theta)$ and $\mathcal{L}^{\tilde C}(\theta)$ share the same optimal target at $\theta$.  The term $u_t(x)$ denotes the reference velocity at $x$, defined as $u_t(x) = \mathbb{E}_{x_1 \sim p(x_1|x)}\big[u_t(x|x_1)\big]$,  which is intractable to compute analytically. (see \autoref{sec:proof_lemma_for_u} for proof.)

Here, the approximated trajectory consistency loss $\mathcal{L}^{\tilde C}(\theta)$ are defined as 
\begin{equation}\label{eq:define_approx_loss}
   \resizebox{\linewidth}{!}{$
    \begin{aligned}
  &\mathcal{L}^{\tilde C}(\theta) = \mathbb{E}_{t,r,x \sim  p_t(x), x'=sg(\Delta t u^\theta_{t\to t}(x))+x}\\
  &[|u_{t\to r}^\theta(x)  - (u^\theta_{t\to t}(x) + (r-t-\Delta t) \text{sg}(\frac{u^\theta_{t+\Delta t\to r}(x')-u^\theta_{t\to r}(x)}{\Delta t}))\|^2] 
    \end{aligned}
    $}
\end{equation}
It is straightforward to verify that the loss $\mathcal{L}^{\tilde C}(\theta)$ is the mean-velocity formulation of $\mathcal{L}^{C}(\theta)$ under the local linear approximation in \autoref{eq:EFM_equation}, expressed via $u_{t\to r}(x) = \frac{\phi_{t\to r}(x)-x}{r-t}$, and differs by a temporal scaling factor $1/\Delta t$.
\end{lemma}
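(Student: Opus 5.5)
The proof comes down to one observation: the two losses share everything except the target's velocity term. $\mathcal{L}^E$ uses the conditional velocity $u_t(x|x_1)$ there, while $\mathcal{L}^{\tilde C}$ uses the model's own $u^\theta_{t\to t}(x)$. Everything else is either the prediction $u^\theta_{t\to r}(x)$ or wrapped in $\text{sg}$.

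Write $g^\theta(x) = (r-t-\Delta t)\,\text{sg}\big(\frac{u^\theta_{t+\Delta t\to r}(x')-u^\theta_{t\to r}(x)}{\Delta t}\big)$; its gradient in $\theta$ is zero. The positive clamp in $\mathcal{L}^E$ only matters when $r<t+\Delta t$. On that set I will treat it as the same convention in both losses, or note that it only changes the constant target.

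We read $u^\theta_{t\to t}$ in the target of $\mathcal{L}^{\tilde C}$ as stop-gradiented, as MeanFlow does and as the statement's "same optimal target" phrasing suggests. Then the gradients are
\begin{equation*}
\nabla\mathcal{L}^{E} = 2\,\mathbb{E}\big[(\nabla_\theta u^\theta_{t\to r}(x))^\top (u^\theta_{t\to r}(x) - u_t(x|x_1) - g^\theta(x))\big],
\end{equation*}
and analogously for $\nabla\mathcal{L}^{\tilde C}$ with $u^\theta_{t\to t}(x)$ replacing $u_t(x|x_1)$.

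First I remove the conditioning in $\nabla\mathcal{L}^E$, using the same tower-property argument as in the Flow Matching equivalence $\nabla\mathcal{L}^{FM}_c=\nabla\mathcal{L}^{FM}$ from the Background section. Condition on $(t,r,x)$: the Jacobian, $u^\theta_{t\to r}(x)$, $x'$ and $g^\theta(x)$ depend only on $x$, not on $x_1$. So $\mathbb{E}_{x_1\sim p_t(x_1|x)}[u_t(x|x_1)]=u_t(x)$ can be substituted, giving $\nabla\mathcal{L}^E = 2\,\mathbb{E}_{t,r,x\sim p_t}[(\nabla_\theta u^\theta_{t\to r})^\top(u^\theta_{t\to r}-u_t(x)-g^\theta)]$. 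The difference then collapses to
\begin{equation*}
\nabla\mathcal{L}^{\tilde C}-\nabla\mathcal{L}^{E} = 2\,\mathbb{E}_{t,r,x\sim p_t}\big[(\nabla_\theta u^\theta_{t\to r}(x))^\top (u_t(x)-u^\theta_{t\to t}(x))\big].
\end{equation*}
The factor $2$ can be absorbed into the loss normalization.

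Finally I bound this in RMSE, that is, the norm divided by $\sqrt{m}$ over the $m$ parameter coordinates. Take norms inside the expectation and use $\|A^\top v\|\le\|A\|_2\|v\|$. Cauchy–Schwarz on $\mathbb{E}$ then gives $\frac{1}{\sqrt m}\|\cdot\|\le \sqrt{\mathbb{E}[\frac1m\|\nabla_\theta u^\theta_{t\to r}\|_2^2]}\,\sqrt{\mathbb{E}\|u^\theta_{t\to t}-u_t\|^2}$. The first factor is exactly $M_g$ from \autoref{assupmption}, which yields the claimed bound. The consequence is immediate: if $\mathbb{E}\|u^\theta_{t\to t}-u_t\|^2\to0$ with $M_g$ finite, the gradients coincide, so the two losses have the same stationary points and optimal target.

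The main obstacle is justifying the conditioning step and the stop-gradient convention. If $u^\theta_{t\to t}$ in $\mathcal{L}^{\tilde C}$ is not stop-gradiented, an extra term $-(\nabla_\theta u^\theta_{t\to t})^\top(\cdot)$ appears. I would handle this by making the sg convention explicit, consistent with the statement that parts may be stop-gradiented "without altering the underlying semantics". One must also check that $x'$ depends only on $x$ and not on $x_1$, which holds because it is built from $u^\theta_{t\to t}(x)$.
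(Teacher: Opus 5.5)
Your proposal is correct and follows essentially the same route as the paper. The paper introduces an intermediate reference loss $\mathcal{L}^R$ with $u_t(x)$ in place of $u_t(x|x_1)$, shows $\nabla_\theta\mathcal{L}^E=\nabla_\theta\mathcal{L}^R$ by the same marginalization (tower-property) argument, and then bounds $\nabla_\theta\mathcal{L}^R-\nabla_\theta\mathcal{L}^{\tilde C}=\mathbb{E}[\nabla_\theta u^\theta_{t\to r}(x)\,(u^\theta_{t\to t}(x)-u_t(x))]$ with the same spectral-norm and Cauchy--Schwarz step to obtain $M_g$.

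The paper silently uses the conventions you make explicit: a stop-gradient on $u^\theta_{t\to t}$ in the target of $\mathcal{L}^{\tilde C}$, the same unclamped factor $(r-t-\Delta t)$ in both losses, and dropping the factor $2$. Of your two options for the clamp, only the first (the same convention in both losses) gives the stated bound. A target that differs between the two losses would leave an extra gradient term on the set $r<t+\Delta t$.
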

The above lemma links the surrogate Euler Mean Flow loss $\mathcal{L}^{E}(\theta)$ to the approximated trajectory consistency loss $\mathcal{L}^{\tilde C}(\theta)$. Building on this result, we can further relate  $\mathcal{L}^{E}(\theta)$ to the original trajectory consistency objective $\mathcal{L}^{C}(\theta)$ thereby showing that $\mathcal{L}^{E}(\theta)$ serves as a valid surrogate for the trajectory consistency objective. 
\begin{theorem}[Surrogate Loss Validity]\label{thm:validity} With $M_g =  < +\infty$, $M_x < +\infty$, and $M_t < +\infty$ hold in \autoref{assupmption}, Our Euler Mean Flow loss $\mathcal{L}^{E}(\theta)$ and the trajectory consistency loss $\mathcal{L}^{C}(\theta)$ satisfy
\begin{equation}
   \resizebox{\linewidth}{!}{$
    \begin{aligned}
        &D(\nabla_\theta L^E(\theta),\nabla_\theta L^C(\theta)) \le  M_g \sqrt{\mathbb{E}_{t,r, x \sim  p_t(x)} [\|u_{t\to t}^\theta(x)-u_t(x)\|^2]}\\ 
        &+ (M_gM_t + M_x M_t)\Delta t + O(\Delta t^2)
    \end{aligned}$}
\end{equation}
see \autoref{sec:u_validity_proof} for proof.
\end{theorem}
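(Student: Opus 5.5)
The plan is a triangle inequality on the RMSE distance $D$:
\[
D(\nabla_\theta L^E,\nabla_\theta L^C)\le D(\nabla_\theta L^E,\nabla_\theta L^{\tilde C})+D(\nabla_\theta L^{\tilde C},\nabla_\theta L^C).
\]
The first term is bounded by \autoref{lemma_for_u}, which supplies $M_g\sqrt{\mathbb{E}\|u^\theta_{t\to t}-u_t\|^2}$. The remaining work is to show that $D(\nabla L^{\tilde C},\nabla L^C)\le (M_gM_t+M_xM_t)\Delta t+O(\Delta t^2)$.

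\textbf{Common per-sample form.} With $s=t+\Delta t$, I would rewrite $L^C$ in mean-velocity variables using \autoref{eq:traj_u_consistency}, divided by $\Delta t$ as indicated after \autoref{lemma_for_u}. Because of the stop-gradients, both losses then have the form $\mathbb{E}\|u^\theta_{t\to r}(x)-T(x)\|^2$. Their gradients are $2\,\mathbb{E}[\nabla_\theta u^\theta_{t\to r}(x)^\top (u^\theta_{t\to r}(x)-T(x))]$. So the gradient difference is $2\,\mathbb{E}[\nabla_\theta u^\theta_{t\to r}(x)^\top (T^C-T^{\tilde C})]$, plus any gradient flowing through non-stopped copies of $u^\theta_{t\to t}$ or $u^\theta_{t\to s}$.

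\textbf{Step 1: local-velocity residual.} The targets differ in two places. First, $L^C$ uses $u^\theta_{t\to t+\Delta t}(x)$ where $L^{\tilde C}$ uses $u^\theta_{t\to t}(x)$. Taylor expansion in $s$ (\autoref{thm:local_approximation}) gives
\[
u^\theta_{t\to t+\Delta t}(x)-u^\theta_{t\to t}(x)=\Delta t\,\partial_s u^\theta_{t\to s}|_{s=t}+O(\Delta t^2).
\]
Its $L^2$ norm is therefore at most $M_t\Delta t+O(\Delta t^2)$. After applying Cauchy--Schwarz with the Jacobian factor, this contributes $M_gM_t\Delta t$. The normalization $1/m$ in $M_g$ is what makes the bound an RMSE over the $m$ parameters.

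\textbf{Step 2: evaluation-point residual.} Second, the inner point differs: $x_s=x+\Delta t\,u^\theta_{t\to t+\Delta t}(x)$ versus $x'=x+\Delta t\,u^\theta_{t\to t}(x)$. By Step 1, $x_s-x'=\Delta t^2\partial_s u^\theta_{t\to s}|_{s=t}+O(\Delta t^3)$. A mean-value expansion gives
\[
u^\theta_{t+\Delta t\to r}(x_s)-u^\theta_{t+\Delta t\to r}(x')=\partial_x u^\theta_{t+\Delta t\to r}(x')(x_s-x')+O(\Delta t^4).
\]
This is multiplied by the prefactor $(r-t-\Delta t)/\Delta t$, which has $|r-t|\le 1$. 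The contribution is therefore at most $\|\partial_x u^\theta\|_2\,\Delta t\,\|\partial_s u^\theta\|$. Applying Cauchy--Schwarz in expectation and using the $1/m$ normalization in $M_x$ yields $M_xM_t\Delta t$.

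\textbf{Main obstacle.} The hard part is making these Cauchy--Schwarz steps rigorous. The Jacobian $\nabla_\theta u$ multiplies the residual, and the expectation of a product must be split into a product of second moments. Done naively, this gives a product like $M_gM_xM_t$ rather than the stated sum. I would first try to treat the gradient difference as linear in the residual. For the Step 2 residual, I would attribute the $\theta$-dependence to the explicit $x'$-derivative term, so that $M_x$ replaces $M_g$ there. I would absorb any cross terms and higher-order Taylor remainders, assumed uniformly bounded by smoothness, into $O(\Delta t^2)$.

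Summing the three contributions gives the stated bound.
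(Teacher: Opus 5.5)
Your decomposition is exactly the paper's. You use the triangle inequality through $\mathcal{L}^{\tilde C}$, Lemma~\ref{lemma_for_u} for the first piece, and then the two target residuals $v_1=u^\theta_{t\to t+\Delta t}(x)-u^\theta_{t\to t}(x)$ and $v_2=\frac{r-t-\Delta t}{\Delta t}\bigl(u^\theta_{t+\Delta t\to r}(x_s)-u^\theta_{t+\Delta t\to r}(x')\bigr)$. To leading order, $\|v_2\|\le\|\partial_x u^\theta_{t+\Delta t\to r}(x')\|_2\,\|v_1\|$. Your Step~1 is fine and gives $M_gM_t\Delta t$.

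The obstacle you flag in Step~2, however, is a genuine gap, and your proposed patch cannot close it. Both targets are under stop-gradient, so $\nabla_\theta\mathcal{L}^{\tilde C}-\nabla_\theta\mathcal{L}^{C}$ is exactly $\mathbb{E}[\nabla_\theta u^\theta_{t\to r}(x)\,(v_1+v_2)]$, in the paper's convention without the factor $2$. The parameter Jacobian therefore multiplies $v_2$ just as it multiplies $v_1$. There is no $\theta$-dependence left in the target to ``attribute'' to the $x'$-derivative. The factor $\partial_x u^\theta$ only measures the size of $v_2$; it cannot stand in for $\nabla_\theta u^\theta_{t\to r}$. What the argument actually gives for this piece is $\frac{1}{\sqrt m}\mathbb{E}\bigl[\|\nabla_\theta u^\theta_{t\to r}\|_2\,\|\partial_x u^\theta_{t+\Delta t\to r}(x')\|_2\,\|v_1\|\bigr]$. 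This can only be split under a stronger hypothesis. One option is a uniform bound $\|\nabla_\theta u^\theta_{t\to r}\|_2\le G_\infty$, which yields $G_\infty M_xM_t\Delta t+O(\Delta t^2)$ with the paper's $1/m$-normalized $M_x$. The other is fourth moments via H\"older.

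The paper's proof has the same hole, and does not acknowledge it. In its final chain of inequalities it passes from $\|\nabla_\theta u^\theta_{t\to r}(v_1+v_2)\|$ to $\|\nabla_\theta u^\theta_{t\to r}v_1\|+(r-t-\Delta t)\|D(\theta;r,t,x)-C(\theta;r,t,x)\|$. That step silently drops the Jacobian on the second term. The otherwise unmotivated $1/m$ inside $M_x$ is what makes the subsequent bookkeeping look consistent.

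The additive form is in fact false in general. Reparametrizing $\theta=c\theta'$ with $c>0$ scales the left side and $M_g$ by $c$, but leaves $M_x$, $M_t$ and the velocity error unchanged. Here is a concrete instance:
\begin{itemize}
\item Take $m=d=1$ and $x_1\sim\mathcal N(0,4)$. Then $u_t(x)=\beta(t)x$ with $\beta(t)=\frac{5t-1}{(1-t)^2+4t^2}>0$ for $t>0.2$.
\item Sample $t\sim\mathcal U[0.3,0.5]$ and set $r=t+0.3$.
\item Use the model $u^\theta_{t\to s}(x)=c\theta+\beta(t)x+a(s-t)$ with $a,c>0$, evaluated at $\theta=0$.
\end{itemize}
Then $u^\theta_{t\to t}=u_t$, $M_g=c$ and $M_t=a$. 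The left side equals $c\,a\Delta t\,\bigl(1+0.3\,\mathbb{E}\beta(t)\bigr)+O(\Delta t^2)$. The right side is $(c+M_x)\,a\Delta t+O(\Delta t^2)$. So the bound fails once $c>M_x/(0.3\,\mathbb{E}\beta(t))\approx 3.4$.

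Neither your route nor the paper's can deliver a bare $M_xM_t\Delta t$. A correct statement needs a factor such as $G_\infty$ in front of it.
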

\autoref{thm:validity} shows that, provided condition $\mathbb{E}_{t,x \sim  p_t(x)} [\|u_{t\to t}^\theta(x)-u_t(x)\|^2] \to 0$ holds during training, $\mathcal{L}^{E}(\theta)$ serves as a valid surrogate for $\mathcal{L}^{C}(\theta)$ up to $O(\Delta t)$. This condition can be promoted by local linear approximation and by mixing a fixed proportion of samples with $r=t$ in time sampling, as discussed below. 

\paragraph{Rationale for the Local Linear Approximation}
In \autoref{eq:EFM_equation}, we apply the local linear approximation in two places. First, we approximate $u_{t\to s}(x)$ in the summation by $u_{t\to t}(x)$, enabling conditioning as $u(x\mid x_1)$ and introducing direct data supervision for long-range trajectory consistency. This choice reduces the objective to standard Flow Matching when $r=t$, allowing $u^\theta_{t\to t}(x)$ to be optimized toward $u_t(x)$ and providing the boundary condition required by \autoref{thm:validity}. Second, in the update $x_{t+\Delta t}=x_t+\Delta t\,u_{t\to t+\Delta t}(x_t)$, we approximate $u_{t\to t+\Delta t}$ by $u_{t\to t}$. This approximation is motivated by efficiency, as $u_{t\to t}(x)$ is substantially easier to estimate under memory constraints, while using $u_{t\to t+\Delta t}$ offers no noticeable quality improvement (see \autoref{tab:memory_time_comparison2}).


\paragraph{Comparison with Previous Methods} To provide an intuitive comparison highlighting the key differences among related methods, we summarize them in \autoref{tab:method_comparison}.
\begin{table}[H]
    \centering
    \caption{Comparison of flow map--based one-step methods by training strategy, JVP usage, and prediction type.}
    \label{tab:method_comparison}
    \resizebox{0.99\linewidth}{!}{
    \begin{tabular}{cccccc}
        \toprule
        \textbf{Method} & \textbf{Scratch} & \textbf{Distill} & \textbf{JVP-free} & 
        \textbf{$u$-pred} & 
        \textbf{$x_1$-pred}\\
        \midrule
        MF \cite{geng2025mean, geng2025improved} & $\checkmark$ & $\checkmark$ & $\times$ & $\checkmark$ & $\times$\\ 
        $\alpha-$Flow \cite{zhang2025alphaflow} & $\checkmark$ & $\checkmark$ & $\times$ & $\checkmark$ & $\times$\\
        ShortCut \cite{frans2025shortcut} & $\checkmark$ & $\checkmark$ & $\times$ & $\checkmark$ & $\times$\\
        SplitMF \cite{guo2025splitmeanflow}& $\times$ & $\checkmark$ & $\checkmark$ & $\checkmark$ & $\times$\\
        Ours& $\checkmark$ & $\checkmark$ & $\checkmark$ & $\checkmark$ & $\checkmark$\\
        \bottomrule
    \end{tabular}
    }    
\end{table}

\subsection{$x_1$-prediction Euler Mean Flows}\label{sec:x1_pred}
Whether minimizing $\mathcal{L}^E(\theta)$ in \autoref{eq:EFM_loss_equation} correctly enforces trajectory consistency depends on condition $\mathbb{E}_{t,x \sim  p_t(x)} [\|u_{t\to t}^\theta(x)-u_t(x)\|^2] \to 0$, namely that $u^\theta_{t\to t}(x)$ accurately approximates the reference instantaneous velocity $u_t(x)$. However, in several applications, including pixel-space image generation \autoref{sec:pixel_experiment} and our SDF experiments \autoref{sec:SDF_experiment}, $u$-prediction fails to reliably learn $u_t(x)$, as also discussed in~\cite{li2025back} from a data-manifold perspective. As a result, the loss $\mathcal{L}^E(\theta)$ that relies on accurate velocity learning may become ineffective.

To overcome this limitation, inspired by~\cite{li2025back}, we adopt an $x_1$-prediction formulation and introduce the $x_1$-prediction Euler mean flow. Specifically, we define the $x_1$-prediction mean field
\begin{equation}\label{traj_consist_x1}
    \tilde{x}_{t\to r}(x) = (1-t)\frac{\phi_{t\to r}(x)-x}{r-t}+ x
\end{equation}
where $\tilde{x}_{t\to r}(x)$ satisfies $\tilde{x}_{t\to r}(x) = (1-t)\,u_{t\to r}(x) + x$, which mirrors the instantaneous $x_1$-prediction flow-matching field $\tilde{x}_{t}(x) = (1-t)\,u_{t}(x) + x$.  Under this formulation, the trajectory consistency relation can be rewritten as
\begin{equation}\label{eq:traj_x1_dev}
     \resizebox{\linewidth}{!}{$\begin{aligned}
       \tilde{x}_{t\to r}(x_t)&= \tilde{x}_{t\to s}(x_t) + (r-s)\frac{(1-t)}{(1-r)}\frac{\tilde{x}_{s\to r}(x_s)-\tilde{x}_{t\to r}(x_t)}{s-t}\\
    \end{aligned}$}
\end{equation}
Following the $u$-prediction case, we set $s=t+\Delta t$ and use a local approximation of the flow map, giving $\tilde{x}_{t\to s} \approx \tilde{x}_{t\to t}$ for small $\Delta t$. This leads to the approximation in \autoref{traj_consist_x1}
\begin{equation}
     \resizebox{\linewidth}{!}{$\begin{aligned}
        \tilde{x}_{t\to r}(x_t)&\approx \highlight{\tilde{x}_{t\to t}}(x_t) + (r-t-\Delta t)\frac{(1-t)}{(1-r)}\frac{\tilde{x}_{t+\Delta t\to r}(x_{t+\Delta t})-\tilde{x}_{t\to r}(x_t)}{\Delta t}\\
    \end{aligned}$}
\end{equation}
where $x_{t+\Delta t}$ is calculated as $x_{t+\Delta t} = \frac{\Delta t}{1-t}(\tilde{x}_{t\to s}(x_t) - x_t)+x_t\approx \frac{\Delta t}{1-t}(\highlight{\tilde{x}_{t\to t}}(x_t) - x_t)+x_t$.  The highlighted field $\highlight{\tilde{x}_{t\to t}}$ is obtained using the local linear approximation for $\tilde{x}_{t\to r}$.   Similar to $u-$prediction version, we replace $\tilde{x}_{t\to t}$ on the right-hand side of \autoref{eq:EFM_equation} with the conditional instantaneous $\tilde{x}$ field, namely $\tilde{x}(x|x_1) = x_1$, to obtain supervision from the dataset, which leads to the following loss function
 \begin{equation}\label{eq:EFM_loss_equation_x_1}
    \resizebox{\linewidth}{!}{$ \begin{aligned}
        &\mathcal{L}^{E'}(\theta) = \mathbb{E}_{t,r,x_1 \sim p_1, x \sim p_t(x| x_1), x'=sg(\Delta t \frac{\tilde{x}^\theta_{t\to t}(x)-x}{1-t})+x}[|\tilde{x}^{\theta}_{t\to r}(x)  - \\
        &(x_t^1(x|x_1)+ (r-t-\Delta t)_+ \frac{1-t}{1-r}\text{sg}(\frac{\tilde{x}^{\theta}_{t+\Delta t\to r}(x')-\tilde{x}^{\theta}_{t\to r}(x)}{\Delta t}))\|^2]  
    \end{aligned}$}
\end{equation}
As in the $u$-prediction setting, we sample a fraction of training pairs with $r = t$, such that \autoref{eq:EFM_loss_equation_x_1} reduces to the $x_1$-prediction flow-matching objective $\|\tilde{x}^{\theta}_{t\to t}(x) - \tilde{x}_t(x \mid x_1)\|^2$ when $r = t$. For the field $\tilde{x}^{\theta}$, we make the following assumption, under which a surrogate loss validity result analogous to that of the $u$-prediction EMF can be established.

\begin{assumption}[Assumption of $\tilde{x}^{\theta}_{t\to r}$]\label{assupmption_x1}
We assume that $\tilde{x}^\theta_{t\to r}$ is differentiable with respect to its parameters and satisfies the following regularity conditions: (1) $M'_g = \sqrt{\mathbb{E}_{t,r,x \sim  p_t(x) }[\frac{1}{m}\|\nabla_\theta \tilde{x}_{t\to r}^\theta(x)\|_2^2]} < +\infty$,  (2) $M'_x = \frac{1}{1-r}\sqrt{\mathbb{E}_{t,r,x \sim  p_t(x) }[\frac{1}{m}\|\partial_x \tilde{x}_{t+\Delta t\to r}^\theta(x')\|_2^2]} < +\infty$,  (3) $M'_t = \sqrt{\mathbb{E}_{t,r,x \sim  p_t(x) }[\|\partial_s \tilde{x}_{t\to s}^\theta|_{s=t}\|^2]} < +\infty$, where $x' = x + \Delta t\frac{\tilde{x}_{t\to t}^\theta(x)-x}{1-t}$, $m$ is the model size and and $\|\cdot\|_2$ denotes the matrix $2$-norm (spectral norm).
\end{assumption}

\begin{theorem}[Surrogate Loss Validity for $x_1$-Prediction]\label{thm:validity_x1} With $M'_g  < +\infty$, $M'_x < +\infty$, and $M'_t < +\infty$ hold in \autoref{assupmption_x1} and \autoref{lemma_for_x1}, our Euler Mean Flow loss $\mathcal{L}^{E}(\theta)$ and the trajectory consistency loss $\mathcal{L}^{C}(\theta)$ satisfy
\begin{equation}
\resizebox{\linewidth}{!}{$
    \begin{aligned}
         &MSE(\nabla_\theta L^E(\theta) , \nabla_\theta L^C(\theta))\le  M_g \sqrt{\mathbb{E}_{t,r, x \sim  p_t(x)} [\|\tilde{x}_{t\to t}^\theta(x)-\tilde{x}_t(x)\|^2]}\\
        &+ (M_gM_t + M_x M_t)\Delta t + O(\Delta t^2)
    \end{aligned}$}
\end{equation}
See \autoref{thm:x1_validity_proof} for proof.
\end{theorem}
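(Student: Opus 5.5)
The argument mirrors the $u$-prediction case (\autoref{lemma_for_u} and \autoref{thm:validity}) and splits the gradient discrepancy into two pieces using an intermediate loss.

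\textbf{Intermediate loss.} Define the approximated $x_1$-prediction consistency loss $\mathcal{L}^{\tilde C'}(\theta)$ by taking $\mathcal{L}^{E'}(\theta)$ and replacing the conditional target $x_1$ by the model's own instantaneous field $\tilde{x}^\theta_{t\to t}(x)$, with $x\sim p_t(x)$. The triangle inequality for the RMSE $D$ then gives
\[
D(\nabla_\theta\mathcal{L}^{E'},\nabla_\theta\mathcal{L}^{C})\le D(\nabla_\theta\mathcal{L}^{E'},\nabla_\theta\mathcal{L}^{\tilde C'})+D(\nabla_\theta\mathcal{L}^{\tilde C'},\nabla_\theta\mathcal{L}^{C}).
\]

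\textbf{First term (the analogue of \autoref{lemma_for_u}, i.e.\ \autoref{lemma_for_x1}).} All terms with $\mathrm{sg}$ carry no gradient, so both gradients have the form $2\,\mathbb{E}[\nabla_\theta\tilde{x}^\theta_{t\to r}(x)^\top(\tilde{x}^\theta_{t\to r}-\text{target})]$. The two targets differ only in $x_1$ versus $\tilde{x}^\theta_{t\to t}(x)$. Write $\mathbb{E}_{x_1\sim p_t(x_1|x)}[\cdot]$ inside the expectation over $x\sim p_t$, exactly as in the flow-matching gradient-equivalence argument. Conditional linearity replaces $x_1$ by its conditional mean $\tilde{x}_t(x)=(1-t)u_t(x)+x$. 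The gradient difference is therefore $2\,\mathbb{E}[\nabla_\theta\tilde{x}^{\theta\top}_{t\to r}(\tilde{x}^\theta_{t\to t}-\tilde{x}_t)]$. Cauchy--Schwarz with the $1/m$ normalisation bounds its RMSE by $M'_g\sqrt{\mathbb{E}\|\tilde{x}^\theta_{t\to t}-\tilde{x}_t\|^2}$, with the factor $2$ absorbed into the normalisation as in the $u$-case.

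\textbf{Second term: linearization error.} Rewrite $\mathcal{L}^C$ in $\tilde{x}$-variables through \autoref{eq:traj_x1_dev} with $s=t+\Delta t$, scaled by $1/\Delta t$ so it matches $\mathcal{L}^{\tilde C'}$. The two targets then differ in two places.

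\emph{(a)} The exact $\tilde{x}^\theta_{t\to t+\Delta t}(x)$ is replaced by $\tilde{x}^\theta_{t\to t}(x)$. By Taylor expansion in $s$ (\autoref{thm:local_approximation}) this difference equals $\Delta t\,\partial_s\tilde{x}^\theta_{t\to s}|_{s=t}+O(\Delta t^2)$, which has RMS size $M'_t\Delta t$. Multiplying by $\nabla_\theta\tilde{x}^\theta_{t\to r}$ and applying Cauchy--Schwarz contributes $M'_gM'_t\Delta t$.

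\emph{(b)} The argument $x_{t+\Delta t}$ inside $\tilde{x}_{s\to r}$ is replaced by $x'$. The gap is $\frac{\Delta t}{1-t}(\tilde{x}_{t\to t+\Delta t}-\tilde{x}_{t\to t})=O(\Delta t^2)$ in state space. This gets divided by $\Delta t$ and multiplied by the prefactor $(r-t-\Delta t)\frac{1-t}{1-r}$. The $(1-t)$ cancels, and the $\frac{1}{1-r}$ is exactly what the definition of $M'_x$ absorbs. A mean-value bound via $\partial_x\tilde{x}^\theta_{t+\Delta t\to r}(x')$ therefore gives an $M'_xM'_t\Delta t$ contribution. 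Terms left over from $r-t\le1$ and second-order Taylor remainders are collected into $O(\Delta t^2)$.

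Summing the pieces yields the stated bound, with $M_g,M_x,M_t$ read as the primed constants.

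\textbf{Main obstacle.} I expect step (b) to be the hardest. The displacement error is $O(\Delta t^2)$, but dividing by $\Delta t$ leaves it at $O(\Delta t)$. Keeping track of how the weight $\frac{1-t}{1-r}$, which blows up as $r\to1$, is controlled requires exactly the $\frac{1}{1-r}$ normalisation built into $M'_x$.

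One more point needs care: the stop-gradient on the target means only the product of $\nabla_\theta\tilde{x}_{t\to r}$ with the residual difference enters the bound. The Jacobian of the target must not enter, so that only $M'_g$ appears as a multiplier and the target's derivatives appear only through $M'_x$ and $M'_t$.
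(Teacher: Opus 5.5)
Your route is the paper's own: the intermediate loss $\mathcal{L}^{\tilde C'}$, the triangle inequality, and \autoref{lemma_for_x1} for the first piece. For the second piece you make the same split into (a) $\tilde{x}^\theta_{t\to t+\Delta t}$ versus $\tilde{x}^\theta_{t\to t}$ and (b) $x''$ versus $x'$, using a Taylor expansion in $s$ and the Jacobian $\partial_x\tilde{x}^\theta_{t+\Delta t\to r}(x')$. The first piece and (a) are fine, but step (b) does not give $M'_xM'_t\Delta t$. As you note yourself at the end, the (b) residual is contracted against $\nabla_\theta\tilde{x}^\theta_{t\to r}(x)$ exactly like (a). After the $(1-t)$ cancellation, what must be bounded is therefore
\[
\Delta t\,\mathbb{E}_{t,r,x}\Big[\tfrac{1}{\sqrt{m}}\|\nabla_\theta\tilde{x}^\theta_{t\to r}(x)\|_2\cdot\tfrac{r-t-\Delta t}{1-r}\|\partial_x\tilde{x}^\theta_{t+\Delta t\to r}(x')\|_2\cdot\|\partial_s\tilde{x}^\theta_{t\to s}(x)|_{s=t}\|\Big]+O(\Delta t^2),
\]
which is an expectation of a product of three random factors. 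Only second moments are assumed, and Cauchy--Schwarz splits two factors, not three. Indeed, $\mathbb{E}[abc]\le\|a\|_{L^2}\|b\|_{L^2}\|c\|_{L^2}$ is false in general: take $a=b=c=\varepsilon^{-1/2}\mathbf{1}_A$ with $P(A)=\varepsilon$. The paper's proof reaches $M'_xM'_t$ only by silently dropping $\|\nabla_\theta\tilde{x}^\theta_{t\to r}\|_2$ from this summand where it splits the two terms. That is also why a $\frac{1}{m}$ appears in $M'_x$, although $\partial_x\tilde{x}$ is a state-space Jacobian unrelated to model size. It is likewise why $\frac{1}{1-r}$ sits outside an expectation over $r$. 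A quick scaling check confirms that your per-term claim cannot hold: reparametrizing $\theta=c\theta'$ multiplies the (b) contribution by $c$ but leaves $M'_x$ and $M'_t$ unchanged.

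To repair it, replace the moment condition on $\partial_x\tilde{x}$ by a uniform bound $L'_x:=\sup_{t,r,x}\frac{1}{1-r}\|\partial_x\tilde{x}^\theta_{t+\Delta t\to r}(x')\|_2<\infty$. This is plausible here, since the implementation clamps $1-r\ge 0.02$. Pulling $L'_x$ out and applying Cauchy--Schwarz to the two remaining factors, (b) contributes $M'_gL'_xM'_t\Delta t$. The theorem then holds with $(M'_gM'_t+M'_xM'_t)\Delta t$ replaced by $M'_g(1+L'_x)M'_t\Delta t$. Alternatively, use H\"older with fourth moments of $\frac{1}{1-r}\|\partial_x\tilde{x}\|_2$ and $\|\partial_s\tilde{x}\|$. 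The qualitative $O(\Delta t)$ conclusion survives; the stated constant does not. Two smaller points remain. First, the factor relating $\mathcal{L}^C$ to the $\tilde{x}$-form \autoref{eq:traj_x1_dev} depends on $(t,r)$ rather than being $1/\Delta t$; this is harmless, since it can be absorbed into $w(t,r)$. Second, the $O(\Delta t^2)$ Taylor remainder must be uniform in $(t,r,x)$, e.g.\ via a bound on $\partial_s^2\tilde{x}^\theta_{t\to s}$, before it can be taken through the expectation.
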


\paragraph{Optimization of Time Weights}
When $r=t$, $\mathcal{L}^{E'}$ in \autoref{eq:EFM_loss_equation_x_1} reduces to the $x_1$-prediction flow-matching objective $\|\tilde x^\theta_{t\to t}(x) - \tilde{x}_t(x|x_1)\|^2$. As shown in \autoref{thm:validity_x1}, enforcing trajectory consistency further depends on how well $\tilde x^\theta_{t\to t}(x)$ approximates $\tilde{x}_t(x|x_1)$.  However, \cite{li2025back} demonstrate that loss $\|\tilde x^\theta_{t\to t}(x) - \tilde{x}_t(x|x_1)\|^2$ yields suboptimal fitting, and to mitigate this issue, \cite{li2025back} introduces a time weight $\frac{1}{(1-t)^2}$, leading to the weighted loss $\frac{1}{(1-t)^2}\|\tilde x^\theta_{t\to t}(x) - \tilde{x}_t(x|x_1)\|^2$ (referred to as the $x$-pred \& $u$-loss).  Following this strategy,  we adopt the same strategy and incorporate the time weight $\|\tilde x^\theta_{t\to t}(x) - \tilde{x}_t(x|x_1)\|^2$ into $\mathcal{L}^{E'}$ in \autoref{eq:EFM_loss_equation_x_1} to improve the learning of $\tilde x^\theta_{t\to t}(x)$. For numerical stability, we clamp the denominators $1-t$ and $1-r$ to a minimum value of 0.02.

\subsection{Algorithm}
Building on the above discussion, we derive the training and sampling procedures of Euler Mean Flows for both conditional and unconditional generation, as summarized in Algorithms~\ref{alg:training} and~\ref{alg:sampling}.  For conditional generation, following~\cite{geng2025mean}, we adopt classifier-free guidance (CFG) during training, with an effective guidance scale given by $w' = \frac{w}{1-k}$, where $w$ and $k$ denote the CFG coefficients.
Additional details on CFG, adaptive loss weighting, and time sampling strategies are provided in~\autoref{sec:alg_details}.

\begin{algorithm}[t]
\caption{Euler Mean Flow: Training\\
\footnotesize\emph{Highlighted steps are used for conditional generation. $C$ represents the class label, and $C_0$ the corresponding unconditional label. $w$ and $k$ are parameter for CFG}}
\label{alg:training}
\begin{algorithmic}[1]
\REQUIRE Dataset $\mathcal{D}$, parameters $\theta$, learning rate $\eta$, noise sampler $\mathcal{N}$, time sampler $\mathcal{T}$  
\REPEAT
    \STATE Sample $x_1,\highlight{C} \sim \mathcal{D}$, $x_0\sim \mathcal{N}$, $t, r\sim \mathcal{T}$
    \STATE $x_t \leftarrow  (1-t)x_t + (1-t)x_t$ 
    \IF{$u$-prediction}
        \STATE $\highlight{U^u \leftarrow  u_{t\to t}^\theta(x_t, C_0)}$, $U^c \leftarrow u_{t\to t}^\theta(x_t, \highlight{C})$
        \STATE $u_t(x|x_1) \!\leftarrow\! \highlight{(1\!-\!w\!-\!k)U^u + kU^c + w}(x_1-x_0)$
        \STATE $x_{t+\Delta t} \leftarrow  \Delta tU^c_t + x_t$
        \STATE $\mathcal{L} \leftarrow  \|u_{t\to r}^\theta(x_t,\highlight{C}) -sg(u_t(x|x_1)+(r-t-$\\$\qquad\Delta t)_+\frac{u_{t + \Delta t\to r}^\theta(x_{t+\Delta t},\highlight{C}) - u_{t \to r}^\theta(x_{t},\highlight{C})}{\Delta t})\|^2$\\
    \hspace{-3.7mm}\textbf{else if }{$x_1$-prediction} \textbf{then}
        \STATE $\highlight{X^u \leftarrow  \tilde{x}^{\theta}_{t\to t}(x_t, C_0)}$, $X^c \leftarrow \tilde{x}^{\theta}_{t\to t}(x_t, \highlight{C})$
        \STATE $\tilde{x}_t(x|x_1) \!\leftarrow\! \highlight{(1\!-\!w\!-\!k)X^u + kX^c + w}x_1$
        \STATE $x_{t+\Delta t} \leftarrow  \Delta t\frac{X^c-x_t}{1-x_t} + x_t$
        \STATE $\mathcal{L} \leftarrow  \|\tilde{x}^{\theta}_{t\to r}(x_t,\highlight{C}) -sg(\tilde{x}_t(x|x_1)+(r-t-$\\$\qquad\Delta t)_+\frac{1-t}{1-r}\frac{\tilde{x}^{\theta}_{t + \Delta t\to r}(x_{t+\Delta t}) -\tilde{x}^{\theta}_{t \to r}(x_{t})}{\Delta t})\|^2$
    \ENDIF
    \STATE $\theta \gets \theta - \eta\nabla_\theta\mathcal{L}$
\UNTIL{convergence}
\end{algorithmic}
\end{algorithm}

\section{JVP-Free Training}
\subsection{Training Speed and Memory Efficiency}

The comparison of memory and computational cost is reported in \autoref{tab:memory_time_comparison}. Here, we further analyze the memory and computational cost of our training algorithm in \autoref{alg:training}. For conditional generation, our training requires three stop-gradient forward passes $u^\theta_{t\to t}(x,C)$, $u^\theta_{t\to r}(x,C_0)$ and $u_{t+\Delta t\to r}^\theta(x_{t+\Delta t},C)$ and one optimized forward pass $u_{t\to r}^\theta(x,C)$, while MeanFlow~\cite{geng2025mean} requires two stop-gradient forward passes $u^\theta_{t\to t}(x,C)$, $u^\theta_{t\to r}(x,C_0)$, one JVP computation, and one optimized forward pass $u_{t\to r}^\theta(x,C)$. Although the latter two are jointly computed via \texttt{torch.jvp} in PyTorch, the JVP operation still introduces non-negligible overhead. Compared to MeanFlow, our method replaces one JVP computation with an additional stop-gradient forward pass, resulting in lower memory and runtime costs.  Moreover, by avoiding JVP operations, our method is compatible with FlashAttention, whereas MeanFlow does not support FlashAttention due to its reliance on JVP.  

\begin{wrapfigure}{r}{0.35\linewidth}
  \centering
  \vspace{-5pt}
  \includegraphics[width=\linewidth]{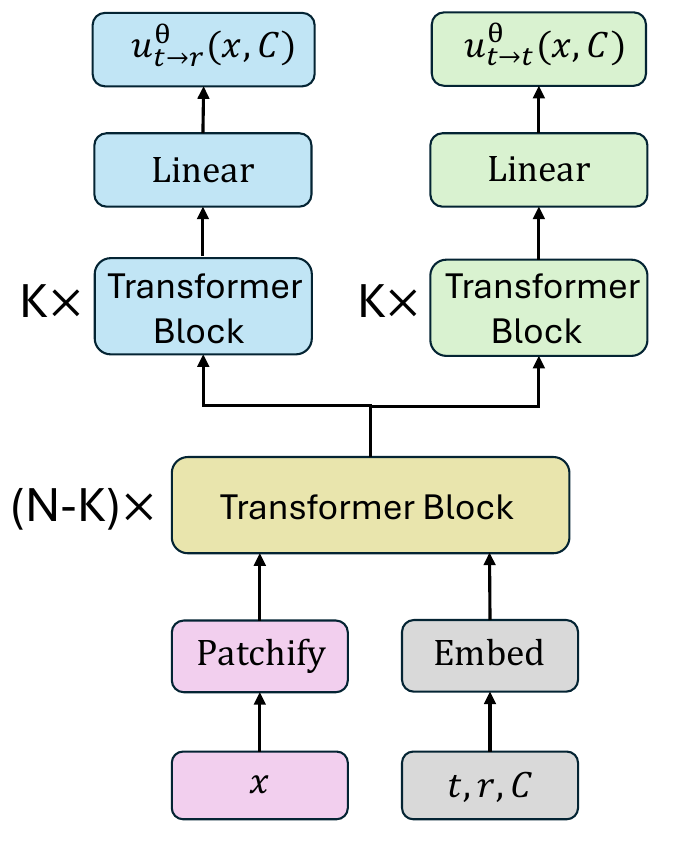}
  \vspace{-25pt}
  \caption{Auxiliary Branch for $u_{t\to t}^\theta(x)$ Prediction}
  \label{fig:aux_model}
\end{wrapfigure}

For unconditional generation, our method requires two stop-gradient forward passes $u^\theta_{t\to t}(x)$ and $u_{t+\Delta t\to r}^\theta(x_{t+\Delta t})$ and one optimized forward pass $u_{t\to r}^\theta(x)$, whereas MeanFlow only requires one JVP and one optimized forward pass $u_{t\to r}^\theta(x)$.  Although our approach remains more efficient, the efficiency gap becomes smaller.  To further reduce the cost, we adopt the strategy of~\cite{geng2025improved} by introducing a lightweight auxiliary branch to predict $u^\theta_{t\to t}(x)$, while the main branch predicts $u^\theta_{t\to r}(x)$. The auxiliary and main branches share forward computations, and an additional loss $\mathcal{L}^F$ is used to improve the approximation of $u^\theta_{t\to t}$. The final loss is given as $\mu_1\mathcal{L}^{EMF} + \mu_2 \mathcal{L}^F$, with hyperparameters $\mu_1$ and $\mu_2$, where we set $\mu_1=1$ and $\mu_2=1$ in practice. With this design, training only requires one stop-gradient forward pass and one optimized forward pass, leading to substantially reduced memory and computational cost.  

\subsection{Optimization Stability}
\begin{wrapfigure}{r}{0.5\linewidth}
  \centering
  \vspace{-5pt}
   \includegraphics[width=1.0\linewidth]{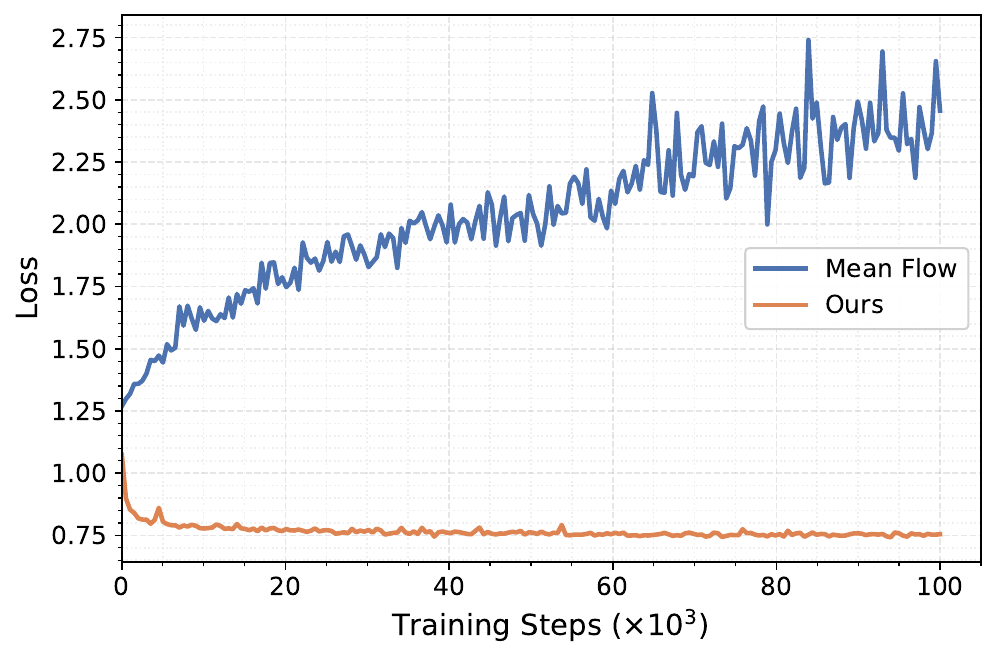}
  \vspace{-25pt}
    \caption{Training loss comparison between Euler Mean Flow and Mean Flow.
    }
    \label{fig:comparision_intuition}
\end{wrapfigure}
The original MeanFlow framework often exhibits anomalous loss escalation during training.
As shown in \autoref{fig:comparision_intuition}, the training loss of MeanFlow tends to increase abnormally as optimization progresses, even when adaptive loss weighting is applied for stabilization, resulting in high variance and unstable dynamics.
In contrast, our method achieves steadily decreasing loss with well-controlled variance, even without adaptive weighting.
Moreover, we observe that MeanFlow is prone to training collapse in image generation tasks, including both latent-space (\autoref{fig:training_dynamics_imageNet}) and pixel-space (\autoref{fig:dynamics_jit_compare}) settings, especially under mixed-precision training, whereas our approach remains robust.
As a result of its improved stability, our method consistently outperforms MeanFlow on both image generation \autoref{tab:training_objective_comparison} and SDF generation \autoref{tab:reconstruction_metrics} tasks.

\subsection{Broader Applications}
Many sparse computation libraries, such as PVCNN and TorchSparse, do not support JVP operations, limiting the applicability of MeanFlow in these domains.  In contrast, EMF is fully JVP-free and achieves strong performance on functional and point cloud generation tasks, while enabling efficient one-step and few-step generation in sparse settings (\autoref{sec:point_cloud}, \autoref{sec:functional_generation}).
\vspace{-5mm}
\section{Experiment}\label{sec:experiment}
\subsection{Validation}\label{sec:assumtion_validation}
Our theorems in \autoref{thm:validity} and \autoref{thm:validity_x1} rely on Assumptions \autoref{assupmption} and \autoref{assupmption_x1}, respectively. To validate these assumptions, we train a DiT-B/2 model on CelebA-HQ dataset and monitor the values of $M_g$ ($M_g'$), $M_x$ ($M_x'$), and $M_t$ ($M_t'$) throughout training. The training protocol, model architecture, and hyperparameters follow \autoref{sec:latent_image}.  To estimate the spectral norms in $M_g$($M_g'$) and $M_x$($M_x'$), for any matrix $M$, we randomly sample $n_1$ unit vectors $|v|=1$ and approximate $\|M]\|_2$ by $\max \|Mv\|$. For expectations of the form $\mathbb{E}_{t,r,x\sim p_t(x)}$, we instead evaluate $\mathbb{E}_{t,r,x\sim p_t(x\mid x_1),x_1\sim p{\text{data}}}$. We sample $n_2$ points from $t,r\sim\mathcal{T}$, draw $x_1\sim p_{\text{data}}$ and $x\sim p_t(x|x_1)$, and estimate the expectations via Monte Carlo averaging.  Results are reported in \autoref{assumption_valid} and \autoref{assumption_x1_valid}. Additional experimental details on memory and timing statistics are provided in \autoref{sec:additional_experiment}.
\vspace{-2.5mm}
\subsection{Applications}
\vspace{-1.25mm}
\subsubsection{Latent Space Image Generation}\label{sec:latent_image}
\vspace{-1.25mm}
We evaluate our method on latent-space image generation tasks using two datasets: ImageNet-1000~\cite{deng2009imagenet} and CelebA-HQ~\cite{liu2015faceattributes}, both resized to a resolution of $256 \times 256$. Following the latent-space generation paradigm, we adopt a DiT-B/2 backbone~\cite{peebles2023scalable} together with a standard pre-trained VAE from Stable Diffusion~\cite{rombach2022high}, which maps a $256 \times 256 \times 3$ image into a compact latent representation of size $32 \times 32 \times 4$.  For training efficiency, we employ mixed-precision training with FP16, in contrast to the FP32 training used in~\cite{geng2025mean}. Our method consistently outperforms existing approaches on both ImageNet-1000 and CelebA-HQ (see \autoref{tab:training_objective_comparison}). Moreover, as reflected in the training dynamics compared with MeanFlow \autoref{fig:comparision_intuition}, our method exhibits significantly improved optimization stability.

\vspace{-2.5mm}
\subsubsection{Pixel Space Image Generation}\label{sec:pixel_experiment}
\vspace{-2.5mm}
For pixel-space image generation, we adopt the JiT framework following \cite{li2025back}. JiT is a plain Vision Transformer that directly processes images as sequences of pixel patches, without relying on VAEs or other latent representations. To accommodate the high dimensionality of pixel-space generation, JiT employs relatively large patch sizes. We build our model upon JiT-B/16 and train it on the CelebA-HQ dataset at a resolution of $256\times 256$.  In the one-step generation setting, we observe behavior consistent with prior findings on JiT: the $u$-prediction variant of EMF produces images with significant noise and poor visual quality \autoref{fig:jit_u_celeb}. This further highlights the necessity of the $x_1$-prediction variant. A comprehensive comparison is provided in \autoref{tab:pixel_space_method_comparison_celeba}. Moreover, the training dynamics in \autoref{fig:training_dynamics_imageNet} show that our method achieves substantially improved stability compared to MeanFlow.

\vspace{-2.5mm}
\subsubsection{SDF Generation}\label{sec:SDF_experiment}
\vspace{-2.5mm}
Next, we evaluate our method on SDF generation. We adopt the Functional Diffusion framework~\cite{zhang2024functional}, 
in which the model is conditioned on a sparse set of observed surface points (64 points) and generates the complete SDF function from noise using an attention-based architecture. 
Experiments are conducted on the ShapeNet-CoreV2 dataset~\cite{chang2015shapenet} and evaluated using Chamfer Distance, F-score, and Boundary Loss, which measure surface accuracy and boundary fidelity (see \autoref{sec:sdf_generation} for details). As shown in \autoref{tab:training_objective_comparison}, our method significantly outperforms MeanFlow and achieves performance comparable to multi-step generation. We also apply the same framework to a 2D MNIST-based SDF generation task (\autoref{fig:minst}), where handwritten digits are converted into SDFs. In this case, the $u$-prediction variant of EMF sueffers from attention variance collapse during training, whereas only the $x_1$-prediction successfully generates high-quality shapes.

\vspace{-2.5mm}
\subsubsection{Point Cloud Generation}\label{sec:point_cloud}
\vspace{-2.5mm}
To demonstrate the applicability of our method to sparse and irregular domains, we apply EMF to point cloud generation. We adopt the Latent Point Diffusion Model (LION) architecture~\cite{vahdat2022lion}, which builds on a VAE that encodes each shape into a hierarchical latent representation comprising a global shape latent and a point-structured latent point cloud.  We use pre-trained encoders and decoders based on Point-Voxel CNNs (PVCNNs) and fine-tune both the global and point cloud latents using EMF on the airplane and chair categories. Training and model details are provided in \autoref{sec:point_cloud}.  For evaluation, we compare generated samples against reference sets using Coverage (COV) and 1-Nearest Neighbor Accuracy (1-NNA), computed with either Chamfer Distance or Earth Mover’s Distance, to assess sample diversity and distributional alignment. As shown in \autoref{fig:comparision_intuition}, our method achieves competitive performance among one-step generation approaches.

\subsubsection{Function-Based Image Generation}\label{sec:functional_generation}
We further evaluate our method on sparse domains via function-based image generation using an architecture built on Infty-Diff~\cite{bond2024infty}. Infty-Diff represents images as continuous functions defined over randomly sampled pixel coordinates and employs a hybrid sparse--dense architecture that combines sparse neural operators with a dense convolutional backbone for global feature extraction. Sparse features are interpolated to a coarse grid for dense processing and mapped back to the original coordinates, enabling efficient learning from partial observations. We conduct experiments on FFHQ~\cite{karras2019style} and CelebA-HQ at $256\times256$ resolution, randomly sampling $25\%$ of pixels during training, and exploit the resolution-invariant nature of functional representations to generate images at multiple resolutions (see \autoref{sec:functional} for details). As shown in \autoref{fig:celeb_functional}, our method achieves competitive performance in one-step functional image generation compared to existing approaches.

\vspace{-2.5mm}
\section{Conclusion}
\vspace{-2.5mm}
We proposed EMF as a trajectory-consistent framework for efficient one-step and few-step generation, enabling direct data supervision of long-range flow maps via a local linear approximation of the semigroup objective. EMF avoids explicit derivative computation through a unified, JVP-free training scheme with theoretical guarantees, and extending it to broader tasks, larger models, and more general theoretical settings is an important direction for future work.

\newpage
\section*{Impact Statement}
This paper presents work whose goal is to advance the field of machine learning. There are many potential societal consequences of our work, none of which we feel must be specifically highlighted here.

\bibliography{example_paper,3D_generation}

\begin{thebibliography}{67}
\providecommand{\natexlab}[1]{#1}
\providecommand{\url}[1]{\texttt{#1}}
\expandafter\ifx\csname urlstyle\endcsname\relax
  \providecommand{\doi}[1]{doi: #1}\else
  \providecommand{\doi}{doi: \begingroup \urlstyle{rm}\Url}\fi

\bibitem[Achlioptas et~al.(2018)Achlioptas, Diamanti, Mitliagkas, and Guibas]{achlioptas2018learning}
Achlioptas, P., Diamanti, O., Mitliagkas, I., and Guibas, L.
\newblock Learning representations and generative models for 3d point clouds.
\newblock In \emph{International conference on machine learning}, pp.\  40--49. PMLR, 2018.

\bibitem[Albergo \& Vanden-Eijnden(2023)Albergo and Vanden-Eijnden]{albergo2023building}
Albergo, M.~S. and Vanden-Eijnden, E.
\newblock Building normalizing flows with stochastic interpolants.
\newblock In \emph{International Conference on Learning Representations (ICLR)}, 2023.

\bibitem[Boffi et~al.(2025)Boffi, Albergo, and Vanden-Eijnden]{boffi2025flow}
Boffi, N.~M., Albergo, M.~S., and Vanden-Eijnden, E.
\newblock Flow map matching with stochastic interpolants: A mathematical framework for consistency models.
\newblock \emph{Transactions on Machine Learning Research (TMLR)}, 2025.

\bibitem[Bond-Taylor \& Willcocks(2024)Bond-Taylor and Willcocks]{bond2024infty}
Bond-Taylor, S. and Willcocks, C.~G.
\newblock $\infty$-diff: Infinite resolution diffusion with subsampled mollified states.
\newblock In \emph{International Conference on Learning Representations (ICLR)}, 2024.

\bibitem[Cai et~al.(2020)Cai, Yang, Averbuch-Elor, Hao, Belongie, Snavely, and Hariharan]{cai2020learning}
Cai, R., Yang, G., Averbuch-Elor, H., Hao, Z., Belongie, S., Snavely, N., and Hariharan, B.
\newblock Learning gradient fields for shape generation.
\newblock In \emph{European Conference on Computer Vision}, pp.\  364--381. Springer, 2020.

\bibitem[Chang et~al.(2015)Chang, Funkhouser, Guibas, Hanrahan, Huang, Li, Savarese, Savva, Song, Su, et~al.]{chang2015shapenet}
Chang, A.~X., Funkhouser, T., Guibas, L., Hanrahan, P., Huang, Q., Li, Z., Savarese, S., Savva, M., Song, S., Su, H., et~al.
\newblock Shapenet: An information-rich 3d model repository.
\newblock \emph{arXiv preprint arXiv:1512.03012}, 2015.

\bibitem[Dao et~al.(2022)Dao, Fu, Ermon, Rudra, and Ré]{dao2022flash}
Dao, T., Fu, D.~Y., Ermon, S., Rudra, A., and Ré, C.
\newblock Flashattention: Fast and memory-efficient exact attention with io-awareness.
\newblock \emph{Advances in neural information processing systems}, 2022.

\bibitem[Deng et~al.(2009)Deng, Dong, Socher, Li, Li, and Fei-Fei]{deng2009imagenet}
Deng, J., Dong, W., Socher, R., Li, L.-J., Li, K., and Fei-Fei, L.
\newblock Imagenet: A large-scale hierarchical image database.
\newblock In \emph{IEEE Conference on Computer Vision and Pattern Recognition (CVPR)}, 2009.

\bibitem[Du et~al.(2021)Du, Collins, Tenenbaum, and Sitzmann]{du2021learning}
Du, Y., Collins, K., Tenenbaum, J., and Sitzmann, V.
\newblock Learning signal-agnostic manifolds of neural fields.
\newblock 2021.

\bibitem[Dupont et~al.(2022{\natexlab{a}})Dupont, Kim, Eslami, Rezende, and Rosenbaum]{dupont2022data}
Dupont, E., Kim, H., Eslami, S., Rezende, D., and Rosenbaum, D.
\newblock From data to functa: Your data point is a function and you can treat it like one.
\newblock \emph{International Conference on Machine Learning (ICML)}, 2022{\natexlab{a}}.

\bibitem[Dupont et~al.(2022{\natexlab{b}})Dupont, Teh, and Doucet]{dupont2022generative}
Dupont, E., Teh, Y.~W., and Doucet, A.
\newblock Generative models as distributions of functions.
\newblock In \emph{International Conference on Artificial Intelligence and Statistics (AISTATS)}, 2022{\natexlab{b}}.

\bibitem[Frans et~al.(2025)Frans, Hafner, Levine, and Abbeel]{frans2025shortcut}
Frans, K., Hafner, D., Levine, S., and Abbeel, P.
\newblock One step diffusion via shortcut models.
\newblock In \emph{International Conference on Learning Representations (ICLR)}, 2025.

\bibitem[Geng et~al.(2023)Geng, Pokle, and Kolter]{geng2023onestep}
Geng, Z., Pokle, A., and Kolter, J.~Z.
\newblock One-step diffusion distillation via deep equilibrium models.
\newblock In \emph{Neural Information Processing Systems (NeurIPS)}, 2023.

\bibitem[Geng et~al.(2024)Geng, Pokle, Luo, Lin, and Kolter]{geng2024consistency}
Geng, Z., Pokle, A., Luo, W., Lin, J., and Kolter, J.~Z.
\newblock Consistency models made easy.
\newblock \emph{arXiv preprint arXiv:2406.14548}, 2024.

\bibitem[Geng et~al.(2025{\natexlab{a}})Geng, Deng, Bai, Kolter, and He]{geng2025mean}
Geng, Z., Deng, M., Bai, X., Kolter, J.~Z., and He, K.
\newblock Mean flows for one-step generative modeling.
\newblock In \emph{Neural Information Processing Systems (NeurIPS)}, 2025{\natexlab{a}}.

\bibitem[Geng et~al.(2025{\natexlab{b}})Geng, Lu, Wu, Shechtman, Kolter, and He]{geng2025improved}
Geng, Z., Lu, Y., Wu, Z., Shechtman, E., Kolter, J.~Z., and He, K.
\newblock Improved mean flows: On the challenges of fastforward generative models.
\newblock \emph{arXiv preprint arXiv:2512.02012}, 2025{\natexlab{b}}.

\bibitem[Geng et~al.(2025{\natexlab{c}})Geng, Pokle, Luo, Lin, and Kolter]{geng2025consistency}
Geng, Z., Pokle, A., Luo, W., Lin, J., and Kolter, J.~Z.
\newblock Consistency models made easy.
\newblock In \emph{International Conference on Learning Representations (ICLR)}, 2025{\natexlab{c}}.

\bibitem[Guo et~al.(2025)Guo, Wang, Yuan, Cao, Chen, Chen, Huo, Zhang, Wang, Liu, et~al.]{guo2025splitmeanflow}
Guo, Y., Wang, W., Yuan, Z., Cao, R., Chen, K., Chen, Z., Huo, Y., Zhang, Y., Wang, Y., Liu, S., et~al.
\newblock Splitmeanflow: Interval splitting consistency in few-step generative modeling.
\newblock \emph{arXiv preprint arXiv:2507.16884}, 2025.

\bibitem[Hairer et~al.(1993)Hairer, N{\o}rsett, and Wanner]{hairer1993solving}
Hairer, E., N{\o}rsett, S.~P., and Wanner, G.
\newblock \emph{Solving Ordinary Differential Equations I: Nonstiff Problems}.
\newblock Springer-Verlag Berlin Heidelberg, 1993.

\bibitem[Heusel et~al.(2017)Heusel, Ramsauer, Unterthiner, Nessler, and Hochreiter]{heusel2017gans}
Heusel, Z., Ramsauer, H., Unterthiner, T., Nessler, B., and Hochreiter, S.
\newblock Gans trained by a two time-scale update rule converge to a local nash equilibriumrium models.
\newblock In \emph{Neural Information Processing Systems (NeurIPS)}, 2017.

\bibitem[Ho et~al.(2020)Ho, Jain, and Abbeel]{ho2020denoising}
Ho, J., Jain, A., and Abbeel, P.
\newblock Denoising diffusion probabilistic models.
\newblock In \emph{Neural Information Processing Systems (NeurIPS)}, 2020.

\bibitem[Ho et~al.(2022{\natexlab{a}})Ho, Chan, Saharia, Whang, Gao, Gritsenko, Kingma, Poole, Norouzi, Fleet, et~al.]{ho2022imagen}
Ho, J., Chan, W., Saharia, C., Whang, J., Gao, R., Gritsenko, A., Kingma, D.~P., Poole, B., Norouzi, M., Fleet, D.~J., et~al.
\newblock Imagen video: High definition video generation with diffusion models.
\newblock \emph{arXiv preprint arXiv:2210.02303}, 2022{\natexlab{a}}.

\bibitem[Ho et~al.(2022{\natexlab{b}})Ho, Salimans, Gritsenko, Chan, Norouzi, and Fleet]{ho2022video}
Ho, J., Salimans, T., Gritsenko, A., Chan, W., Norouzi, M., and Fleet, D.~J.
\newblock Video diffusion models.
\newblock \emph{Advances in neural information processing systems}, 35:\penalty0 8633--8646, 2022{\natexlab{b}}.

\bibitem[Hui et~al.(2025)Hui, Liu, Zeng, Fu, and Vahdat]{hui2025not}
Hui, K.-H., Liu, C., Zeng, X., Fu, C.-W., and Vahdat, A.
\newblock Not-so-optimal transport flows for 3d point cloud generation.
\newblock \emph{arXiv preprint arXiv:2502.12456}, 2025.

\bibitem[Karras et~al.(2019)Karras, Laine, and Aila]{karras2019style}
Karras, T., Laine, S., and Aila, T.
\newblock A style-based generator architecture for generative adversarial networks.
\newblock In \emph{IEEE Conference on Computer Vision and Pattern Recognition (CVPR)}, 2019.

\bibitem[Kim et~al.(2020)Kim, Lee, Kang, Lee, and Kim]{kim2020softflow}
Kim, H., Lee, H., Kang, W.~H., Lee, J.~Y., and Kim, N.~S.
\newblock Softflow: Probabilistic framework for normalizing flow on manifolds.
\newblock \emph{Advances in Neural Information Processing Systems}, 33:\penalty0 16388--16397, 2020.

\bibitem[Kim et~al.(2021)Kim, Yoo, Lee, and Hong]{kim2021setvae}
Kim, J., Yoo, J., Lee, J., and Hong, S.
\newblock Setvae: Learning hierarchical composition for generative modeling of set-structured data.
\newblock In \emph{Proceedings of the IEEE/CVF Conference on Computer Vision and Pattern Recognition}, pp.\  15059--15068, 2021.

\bibitem[Kingma \& Welling(2022)Kingma and Welling]{kingma2022auto}
Kingma, D.~P. and Welling, M.
\newblock Auto-encoding variational bayes, 2022.
\newblock URL \url{https://arxiv.org/abs/1312.6114}.

\bibitem[Klokov et~al.(2020)Klokov, Boyer, and Verbeek]{klokov2020discrete}
Klokov, R., Boyer, E., and Verbeek, J.
\newblock Discrete point flow networks for efficient point cloud generation.
\newblock In \emph{European Conference on Computer Vision}, pp.\  694--710. Springer, 2020.

\bibitem[Kynk{\"a}{\"a}nniemi et~al.(2023)Kynk{\"a}{\"a}nniemi, Karras, Aittala, Aila, and Lehtinen]{kynkaanniemi2023role}
Kynk{\"a}{\"a}nniemi, T., Karras, T., Aittala, M., Aila, T., and Lehtinen, J.
\newblock The role of imagenet classes in fr$\backslash$'echet inception distance.
\newblock In \emph{International Conference on Learning Representations (ICLR)}, 2023.

\bibitem[Li \& He(2025)Li and He]{li2025back}
Li, T. and He, K.
\newblock Back to basics: Let denoising generative models denoise.
\newblock \emph{arXiv preprint arXiv:2511.13720}, 2025.

\bibitem[Li et~al.(2025)Li, Sun, Turk, and Zhu]{li2025functional}
Li, Z., Sun, Y., Turk, G., and Zhu, B.
\newblock Functional mean flow in hilbert space.
\newblock \emph{arXiv preprint arXiv:2511.12898}, 2025.

\bibitem[Lipman et~al.(2023)Lipman, Chen, Ben-Hamu, Nickel, and Le]{lipman2023flow}
Lipman, Y., Chen, R. T.~Q., Ben-Hamu, H., Nickel, M., and Le, M.
\newblock Flow matching for generative modeling.
\newblock In \emph{International Conference on Learning Representations (ICLR)}, 2023.

\bibitem[Lipman et~al.(2024)Lipman, Havasi, Holderrieth, Shaul, Le, Karrer, Chen, Lopez-Paz, Ben-Hamu, and Gat]{lipman2024flow}
Lipman, Y., Havasi, M., Holderrieth, P., Shaul, N., Le, M., Karrer, B., Chen, R.~T., Lopez-Paz, D., Ben-Hamu, H., and Gat, I.
\newblock Flow matching guide and code.
\newblock \emph{arXiv preprint arXiv:2412.06264}, 2024.

\bibitem[Liu et~al.(2023)Liu, Gong, and Liu]{liu2023flow}
Liu, X., Gong, C., and Liu, Q.
\newblock Flow straight and fast: Learning to generate and transfer data with rectified flow.
\newblock In \emph{International Conference on Learning Representations (ICLR)}, 2023.

\bibitem[Liu et~al.(2015)Liu, Luo, Wang, and Tang]{liu2015faceattributes}
Liu, Z., Luo, P., Wang, X., and Tang, X.
\newblock Deep learning face attributes in the wild.
\newblock In \emph{Proceedings of International Conference on Computer Vision (ICCV)}, 2015.

\bibitem[Liu et~al.(2019)Liu, Tang, Lin, and Han]{liu2019point}
Liu, Z., Tang, H., Lin, Y., and Han, S.
\newblock Point-voxel cnn for efficient 3d deep learning.
\newblock \emph{Advances in neural information processing systems}, 2019.

\bibitem[Lu \& Song(2025)Lu and Song]{lu2025simplifying}
Lu, C. and Song, Y.
\newblock Simplifying, stabilizing and scaling continuous-time consistency models.
\newblock In \emph{International Conference on Learning Representations (ICLR)}, 2025.

\bibitem[Luo \& Hu(2021)Luo and Hu]{luo2021diffusion}
Luo, S. and Hu, W.
\newblock Diffusion probabilistic models for 3d point cloud generation.
\newblock In \emph{Proceedings of the IEEE/CVF conference on computer vision and pattern recognition}, pp.\  2837--2845, 2021.

\bibitem[Meng et~al.(2023)Meng, Rombach, Gao, Kingma, Ermon, Ho, and Salimans]{meng2023distillation}
Meng, C., Rombach, R., Gao, R., Kingma, D., Ermon, S., Ho, J., and Salimans, T.
\newblock On distillation of guided diffusion models.
\newblock In \emph{IEEE Conference on Computer Vision and Pattern Recognition (CVPR)}, 2023.

\bibitem[Mo et~al.(2023)Mo, Xie, Chu, Hong, Niessner, and Li]{mo2023dit}
Mo, S., Xie, E., Chu, R., Hong, L., Niessner, M., and Li, Z.
\newblock Dit-3d: Exploring plain diffusion transformers for 3d shape generation.
\newblock \emph{Advances in neural information processing systems}, 36:\penalty0 67960--67971, 2023.

\bibitem[Molodyk et~al.(2025)Molodyk, Choi, Romero, Liu, and Chen]{molodyk2025mfm}
Molodyk, P., Choi, J., Romero, D.~W., Liu, M.-Y., and Chen, Y.
\newblock Mfm-point: Multi-scale flow matching for point cloud generation.
\newblock \emph{arXiv preprint arXiv:2511.20041}, 2025.

\bibitem[Peebles \& Xie(2023)Peebles and Xie]{peebles2023scalable}
Peebles, W. and Xie, S.
\newblock Scalable diffusion models with transformers.
\newblock In \emph{IEEE Conference on Computer Vision and Pattern Recognition (CVPR)}, 2023.

\bibitem[Rombach et~al.(2022{\natexlab{a}})Rombach, Blattmann, Lorenz, Esser, and Ommer]{rombach2021highresolution}
Rombach, R., Blattmann, A., Lorenz, D., Esser, P., and Ommer, B.
\newblock High-resolution image synthesis with latent diffusion models.
\newblock In \emph{IEEE Conference on Computer Vision and Pattern Recognition (CVPR)}, 2022{\natexlab{a}}.

\bibitem[Rombach et~al.(2022{\natexlab{b}})Rombach, Blattmann, Lorenz, Esser, and Ommer]{rombach2022high}
Rombach, R., Blattmann, A., Lorenz, D., Esser, P., and Ommer, B.
\newblock High-resolution image synthesis with latent diffusion models.
\newblock In \emph{Proceedings of the IEEE/CVF conference on computer vision and pattern recognition}, pp.\  10684--10695, 2022{\natexlab{b}}.

\bibitem[Salimans \& Ho(2022)Salimans and Ho]{salimans2022progressive}
Salimans, T. and Ho, J.
\newblock Progressive distillation for fast sampling of diffusion models.
\newblock In \emph{International Conference on Learning Representations (ICLR)}, 2022.

\bibitem[Song et~al.(2021{\natexlab{a}})Song, Meng, and Ermon]{song2021denoising}
Song, J., Meng, C., and Ermon, S.
\newblock Denoising diffusion implicit models.
\newblock In \emph{International Conference on Learning Representations (ICLR)}, 2021{\natexlab{a}}.

\bibitem[Song \& Dhariwal(2023)Song and Dhariwal]{song2023improved}
Song, Y. and Dhariwal, P.
\newblock Improved techniques for training consistency models.
\newblock In \emph{International Conference on Learning Representations (ICLR)}, 2023.

\bibitem[Song \& Ermon(2019)Song and Ermon]{song2019generative}
Song, Y. and Ermon, S.
\newblock Generative modeling by estimating gradients of the data distribution.
\newblock In \emph{Neural Information Processing Systems (NeurIPS)}, 2019.

\bibitem[Song et~al.(2021{\natexlab{b}})Song, Sohl-Dickstein, Kingma, Kumar, Ermon, and Poole]{song2021score}
Song, Y., Sohl-Dickstein, J., Kingma, D.~P., Kumar, A., Ermon, S., and Poole, B.
\newblock Score-based generative modeling through stochastic differential equations.
\newblock In \emph{International Conference on Learning Representations (ICLR)}, 2021{\natexlab{b}}.

\bibitem[Song et~al.(2023)Song, Dhariwal, Chen, and Sutskever]{song2023consistency}
Song, Y., Dhariwal, P., Chen, M., and Sutskever, I.
\newblock Consistency models.
\newblock In \emph{International Conference on Machine Learning (ICML)}, 2023.

\bibitem[Vahdat et~al.(2022)Vahdat, Williams, Gojcic, Litany, Fidler, Kreis, et~al.]{vahdat2022lion}
Vahdat, A., Williams, F., Gojcic, Z., Litany, O., Fidler, S., Kreis, K., et~al.
\newblock Lion: Latent point diffusion models for 3d shape generation.
\newblock \emph{Advances in Neural Information Processing Systems}, 35:\penalty0 10021--10039, 2022.

\bibitem[Vaswani et~al.(2017)Vaswani, Shazeer, Parmar, Uszkoreit, Jones, Gomez, Kaiser, and Polosukhin]{vaswani2017attention}
Vaswani, A., Shazeer, N., Parmar, N., Uszkoreit, J., Jones, L., Gomez, A.~N., Kaiser, {\L}., and Polosukhin, I.
\newblock Attention is all you need.
\newblock \emph{Advances in neural information processing systems}, 30, 2017.

\bibitem[Wang et~al.(2025)Wang, Lin, Liu, Xu, Dou, Long, Guo, Komura, Wang, and Li]{wang2025pdt}
Wang, J., Lin, C., Liu, Y., Xu, R., Dou, Z., Long, X., Guo, H., Komura, T., Wang, W., and Li, X.
\newblock Pdt: Point distribution transformation with diffusion models.
\newblock In \emph{Proceedings of the Special Interest Group on Computer Graphics and Interactive Techniques Conference Conference Papers}, pp.\  1--11, 2025.

\bibitem[Webb(1985)]{webb1985semi}
Webb, G.~F.
\newblock Semigroups of linear operators and applications to partial differential equations (a. pazy).
\newblock \emph{SIAM Review}, 1985.

\bibitem[Wu et~al.(2023)Wu, Wang, Gong, Liu, Xiong, Ranjan, Krishnamoorthi, Chandra, and Liu]{wu2023fast}
Wu, L., Wang, D., Gong, C., Liu, X., Xiong, Y., Ranjan, R., Krishnamoorthi, R., Chandra, V., and Liu, Q.
\newblock Fast point cloud generation with straight flows.
\newblock In \emph{Proceedings of the IEEE/CVF conference on computer vision and pattern recognition}, pp.\  9445--9454, 2023.

\bibitem[Yang et~al.(2019)Yang, Huang, Hao, Liu, Belongie, and Hariharan]{yang2019pointflow}
Yang, G., Huang, X., Hao, Z., Liu, M.-Y., Belongie, S., and Hariharan, B.
\newblock Pointflow: 3d point cloud generation with continuous normalizing flows.
\newblock In \emph{Proceedings of the IEEE/CVF international conference on computer vision}, pp.\  4541--4550, 2019.

\bibitem[Yang et~al.(2024)Yang, Zhang, Zhang, Liu, Xu, Zhang, Meng, Ermon, and Cui]{yang2024consistency}
Yang, L., Zhang, Z., Zhang, Z., Liu, X., Xu, M., Zhang, W., Meng, C., Ermon, S., and Cui, B.
\newblock Consistency flow matching: Defining straight flows with velocity consistency.
\newblock \emph{arXiv preprint arXiv:2407.02398}, 2024.

\bibitem[Zhang \& Wonka(2024)Zhang and Wonka]{zhang2024functional}
Zhang, B. and Wonka, P.
\newblock Functional diffusion.
\newblock In \emph{Proceedings of the IEEE/CVF Conference on Computer Vision and Pattern Recognition}, pp.\  4723--4732, 2024.

\bibitem[Zhang et~al.(2022)Zhang, Nie{\ss}ner, and Wonka]{zhang20223dilg}
Zhang, B., Nie{\ss}ner, M., and Wonka, P.
\newblock 3dilg: Irregular latent grids for 3d generative modeling.
\newblock \emph{Advances in Neural Information Processing Systems}, 35:\penalty0 21871--21885, 2022.

\bibitem[Zhang et~al.(2023)Zhang, Tang, Niessner, and Wonka]{zhang20233dshape2vecset}
Zhang, B., Tang, J., Niessner, M., and Wonka, P.
\newblock 3dshape2vecset: A 3d shape representation for neural fields and generative diffusion models.
\newblock \emph{ACM Transactions On Graphics (TOG)}, 42\penalty0 (4):\penalty0 1--16, 2023.

\bibitem[Zhang et~al.(2025{\natexlab{a}})Zhang, Ren, and Wonka]{zhang2025geometry}
Zhang, B., Ren, J., and Wonka, P.
\newblock Geometry distributions.
\newblock In \emph{Proceedings of the IEEE/CVF International Conference on Computer Vision}, pp.\  1495--1505, 2025{\natexlab{a}}.

\bibitem[Zhang et~al.(2025{\natexlab{b}})Zhang, Siarohin, Menapace, Vasilkovsky, Tulyakov, Qu, and Skorokhodov]{zhang2025alphaflow}
Zhang, H., Siarohin, A., Menapace, W., Vasilkovsky, M., Tulyakov, S., Qu, Q., and Skorokhodov, I.
\newblock Alphaflow: Understanding and improving meanflow models.
\newblock \emph{arXiv preprint arXiv:2510.20771}, 2025{\natexlab{b}}.

\bibitem[Zhou et~al.(2024)Zhou, Zhong, Hanji, Guo, Fogarty, Sztrajman, Gao, and Oztireli]{zhou2024frepolad}
Zhou, C., Zhong, F., Hanji, P., Guo, Z., Fogarty, K., Sztrajman, A., Gao, H., and Oztireli, C.
\newblock Frepolad: Frequency-rectified point latent diffusion for point cloud generation.
\newblock In \emph{European Conference on Computer Vision}, pp.\  434--453. Springer, 2024.

\bibitem[Zhou et~al.(2021)Zhou, Du, and Wu]{zhou20213d}
Zhou, L., Du, Y., and Wu, J.
\newblock 3d shape generation and completion through point-voxel diffusion.
\newblock In \emph{Proceedings of the IEEE/CVF international conference on computer vision}, pp.\  5826--5835, 2021.

\bibitem[Zhou et~al.(2025)Zhou, Ermon, and Song]{zhou2025inductive}
Zhou, L., Ermon, S., and Song, J.
\newblock Inductive moment matching.
\newblock In \emph{International Conference on Machine Learning (ICML)}, 2025.

\bibitem[Zhuang et~al.(2023)Zhuang, Abnar, Gu, Schwing, Susskind, and Bautista]{zhuang2023diffusion}
Zhuang, P., Abnar, S., Gu, J., Schwing, A., Susskind, J.~M., and Bautista, M.~A.
\newblock Diffusion probabilistic fields.
\newblock In \emph{International Conference on Learning Representations (ICLR)}, 2023.

\end{thebibliography}
\bibliographystyle{arxiv_tem}

\newpage
\appendix
\onecolumn
\section{Design Philosophy of the Euler MeanFlow Loss}\label{sec:philosophy}

Here we provide an overview of the design rationale of Euler MeanFlow, explaining the principles behind the Euler MeanFlow losses.  The loss design of Euler MeanFlow follows the same fundamental logic as Flow Matching: at their core, both aim to learn a direct target.   For example, Flow Matching optimizes $\mathcal{L}^{FM}(\theta)=\mathbb{E}_{t, x \sim p_t}\|u_t^\theta(x) - u_t(x)\|_2^2$,  where $u_t(x)$ is the reference velocity field. Since $u_t(x)$ is not directly accessible from data, Flow Matching introduces a conditional distribution $p_t(x \mid x_1)$ and a conditional velocity $u(x \mid x_1)$ on $x_1 \in \mathcal{D}$, yielding the conditional loss $\mathcal{L}_{c}^{FM}(\theta)=\mathbb{E}_{t, x \sim p_t(\cdot|x_1),x_1\sim p_1}\|u_t^\theta(x) - u_t(x \mid x_1)\|_2^2$, which is shown that $\nabla \mathcal{L}^{FM}_c(\theta) = \nabla \mathcal{L}^{FM}(\theta)$. Because $u(x \mid x_1) = \frac{x - x_1}{1 - t}$, with $x$ sampled from the tractable conditional distribution $p(x \mid x_1)$, is directly computable from dataset $\mathcal{D}$, the conditional loss $\mathcal{L}_{c}^{FM}(\theta)$ can be used to train a model targeting the original objective $\mathcal{L}^{FM}(\theta)$.  

Euler MeanFlow is built on the same principle. Its ideal learning objective is $\mathcal{L}^C(\theta) = \mathbb{E}_{t,s,r, x_t = (1-t)x_0 + tx_1, x_1\sim p_{data}, x_0\sim p_0}\frac{1}{w(t,r)}\|\phi^\theta_{t\to r}(x_t) - \phi^\theta_{s\to r}(\phi^\theta_{t\to s}(x))\|_2^2$ in \autoref{eq:consistency_flow}.  Similar to Flow Matching, this direct objective does not explicitly leverage information from the training data.   A straightforward solution is to impose supervision only at boundary conditions and propagate it outward, as in previous works~\cite{guo2025splitmeanflow,frans2025shortcut}. However, such boundary-based supervision remains sparse and indirect, which is insufficient to constrain long-range dynamics and often leads to unstable training and degraded performance.  Therefore, our goal is to design a training objective that provides dense, data-driven supervision for $u_{t \rightarrow r}^\theta(x)$ while avoiding reliance on boundary constraints.

The central difficulty is that, unlike instantaneous velocity fields, long-range velocity fields do not admit a natural conditional form (\autoref{thm:non_exist}), making it unclear how to incorporate dataset supervision.  To overcome this challenge, we propose a two-step strategy.
\begin{enumerate}
    \item First, we observe that $\mathcal{L}^{C}(\theta)$ involves three time segments $t\to s$, $s\to r$ and $t\to r$. We select one segment $t\to s$ to be sufficiently short and apply a local linear approximation (\autoref{thm:local_approximation}) on this interval. This transforms part of the long-range transport into an instantaneous velocity field, which admits a well-defined conditional counterpart $u_{t\to t}(x|x_1)$. As a result, we obtain an intermediate surrogate objective $\mathcal{L}^{\tilde{C}}(\theta)$ in \autoref{eq:define_approx_loss} that partially connects long-range dynamics with locally defined velocities.
    \item Second, since $\mathcal{L}^{\tilde{C}}(\theta)$ now involves instantaneous velocity fields, we can follow the Flow Matching framework and replace them with conditional instantaneous velocity fields. This step injects explicit dataset supervision into the objective and yields the final loss $\mathcal{L}^{EMF}(\theta)$.
\end{enumerate}
In \autoref{lemma_for_u} and \autoref{thm:validity}, we theoretically justify this construction by showing that $\nabla_\theta \mathcal{L}^{\tilde{C}}(\theta)\approx \nabla_\theta \mathcal{L}^{EMF}(\theta)$ and $\nabla_\theta \mathcal{L}^{\tilde{C}}(\theta)\approx \nabla_\theta \mathcal{L}^{C}(\theta)$.  These results indicate that optimizing $\mathcal{L}^{EMF}$ provides a faithful approximation to the ideal objective $\mathcal{L}^{C}$, while simultaneously incorporating explicit dataset supervision for learning long-range dynamics.

The $x_1$-prediction variant follows the same strategy. It is worth noting that, although two time variables $t$ and $r$ are involved, only the quantities at time $t$ are generated through sampling. Consequently, only variables at time $t$ can be naturally conditioned on observed data.


\section{Missing Proofs and Derivations}
\subsection{Proof of \autoref{thm:non_exist}}\label{thm:non_exist_proof}
\paragraph{\autoref{thm:non_exist}}(Non-existence of conditional flow maps) There exists no conditional flow maps $\phi_{t\to r}(x | x_{t_1})$ that simultaneously (i) is consistent with the conditional velocity $u(x|x_1)$ under \autoref{eq:evolv_flow_map}, and (ii) satisfies the consistency relation $\phi_{t\to r}(x)=\mathbb{E}_{x_{1}\sim p_t(x_1|x)}[\phi_{t\to r}(x|x_1)]$ with marginal flow maps. As a result, a self-consistent conditional cumulative field does not exist. 
\begin{proof}
First, we denote the mappings $\phi_{t\to r}(x)$ obtained from (1) and (2) as $\phi^{(1)}_{t\to r}(x)$ and $\phi^{(2)}_{t\to r}(x)$, respectively. Specifically, $\phi^{(1)}_{t\to r}(x)=\phi_r(\phi_t^{-1}(x))$, and $\phi^{(2)}_{t\to r}(x)=\mathbb{E}_{x_1\sim p_t(x_1|x)}[\phi_{t\to r}(x|x_1)]$. It suffices to show that $\phi^{(1)}_{t\to r}(x)\neq \phi^{(2)}_{t\to r}(x)$. To this end, it is sufficient to prove that $\frac{d}{dt}\phi^{(1)}_{t\to r}(x)\neq \frac{d}{dt}\phi^{(2)}_{t\to r}(x)$ at $t=0$.  

\begin{equation}
    \begin{aligned}
        \frac{d}{dr}\phi^{(2)}_{t\to r}(x)&= \frac{d}{dr}\int_{x_1} \phi_{t\to r}(x|x_1)p_t(x_1|x)dx_1\\
        &=\int_{x_1} \frac{d}{dr}\phi_{t\to r}(x|x_1)p_t(x_1|x)dx_1\\
        &=\int_{x_1} u_{r}(\phi_{t\to r}(x|x_1)|x_1)\frac{p_{data}(x_1)p_t(x|x_1)}{p_t(x)}dx_1\\
        &=\int_{x_1} u_{r}(\phi_{t\to r}(x|x_1)|x_1)p_{data}(x_1)dx_1\\    
        &=\int_{x_1}(x_1-x)p_{data}(x_1)dx_1\\    
        &= \mathbb{E}_{x_1\sim p_{data}(x_1)}[x_1] - x
    \end{aligned}
\end{equation}
Consequently, if $\phi^{(1)}_{t\to r}(x) = \phi^{(2)}_{t\to r}(x)$ we must have $\frac{d}{dr}\phi_r(x) = \mathbb{E}_{x_1\sim p_{data}(x_1)}[x_1] - x$, $\phi_r(x) =(\mathbb{E}_{x_1\sim p_{data}(x_1)}[x_1]-x) r + x$, which implies $p_{data} = (\phi_1)_\sharp p_0 = \delta_{\mathbb{E}_{x_1\sim p_{data}(x_1)}[x_1]}$, where $\delta$ denotes the Dirac distribution at a single point.
\end{proof}

\subsection{Proof of \autoref{lemma_for_u}}\label{sec:proof_lemma_for_u}
\paragraph{\autoref{lemma_for_u}} With $M_g = \sqrt{\mathbb{E}_{t,r,x\sim p_t(x)}[\frac{1}{m}\|\nabla_\theta u^\theta_{t\to r}(x)\|_2^2]}<+\infty$ holds in \autoref{assupmption}, our Euler Mean Flow loss $\mathcal{L}^{E}(\theta)$ and the approximated trajectory consistency loss $\mathcal{L}^{\tilde C}(\theta)$ satisfy
 \begin{equation}
     \begin{aligned}
         \text{MSE}(\nabla  \mathcal{L}^{E}(\theta),\nabla  \mathcal{L}^{ \tilde C}(\theta)) \le M_g \sqrt{\mathbb{E}_{t,r, x \sim  p_t(x)} [\|u_{t\to t}^\theta(x)-u_t(x)\|^2]}
     \end{aligned}
 \end{equation}
where $\mathrm{MSE}$ denotes the mean squared error. Consequently, during training, if $\|u^\theta_{t\to t}(x) - u_t(x)\|^2 \to 0$, then $\mathcal{L}^{E}(\theta)$ and $\mathcal{L}^{\tilde C}(\theta)$ share the same optimal target at $\theta$.  The term $u_t(x)$ denotes the reference velocity at $x$, defined as $u_t(x) = \mathbb{E}_{x_1 \sim p(x_1|x)}\big[u_t(x|x_1)\big]$,  which is intractable to compute analytically.

Here, the approximated trajectory consistency loss $\mathcal{L}^{\tilde C}(\theta)$ are defined as 
\begin{equation}
    \begin{aligned}
  \mathcal{L}^{\tilde C}(\theta) &= \mathbb{E}_{t,r,x \sim  p_t(x), x'=sg(\Delta t u^\theta_{t\to t}(x))+x}\\
  &[|u_{t\to r}^\theta(x)  - (u^\theta_{t\to t}(x) + (r-t-\Delta t) \\
 &\text{sg}(\frac{u^\theta_{t+\Delta t\to r}(x')-u^\theta_{t\to r}(x)}{\Delta t}))\|^2]       
    \end{aligned}
\end{equation}
It is straightforward to verify that the loss $\mathcal{L}^{\tilde C}(\theta)$ is the mean-velocity formulation of $\mathcal{L}^{C}(\theta)$ under the local linear approximation in \autoref{eq:EFM_equation}, expressed via $u_{t\to r}(x) = \frac{\phi_{t\to r}(x)-x}{r-t}$, and differs by a temporal scaling factor $1/\Delta t$.
\begin{proof}

We first define the reference regression loss $\mathcal{L}^{R}(\theta)$ as
\begin{equation} 
\begin{aligned} 
    \mathcal{L}^{R}(\theta) = 
    &\mathbb{E}_{t,r,x \sim p_t(x), x'=sg(\Delta t u^\theta_{t\to t}(x))+x}\\ 
    &[|u_{t\to r}^\theta(x) - (u_t(x) + (r-t-\Delta t) \\ 
    &\text{sg}(\frac{u^\theta_{t+\Delta t\to r}(x')-u^\theta_{t\to r}(x)}{\Delta t}))\|^2] \\ 
\end{aligned} 
\end{equation}

Let $B(\theta;r,t,x) = (r-t-\Delta t)sg(\frac{u_{t+\Delta t\to r}^\theta(x')-u_{t\to r}^\theta(x)}{\Delta t})$. Since $B(\theta; r,t,x)$ contains the stop-gradient operator $\mathrm{sg}(\cdot)$, it satisfies $\nabla_\theta B(\theta;r,t,x) = 0$.  Using $B(\theta; r,t,x)$, the Euler Mean Flow loss $\mathcal{L}^E(\theta)$, the reference regression loss $\mathcal{L}^R(\theta)$, and the approximated trajectory consistency loss $\mathcal{L}^{\tilde C}(\theta)$ can be written as
\begin{equation} 
\begin{aligned} \mathcal{L}^E(\theta) &= \mathbb{E}_{t,r,x_1\sim p_{data}, x \sim  p_t(x|x_1)}[\|u_{t\to r}^\theta(x) -(u_t(x|x_1)+B(\theta;r,t,x))\|^2]\\ 
\mathcal{L}^R(\theta) &= \mathbb{E}_{t,r,x \sim p_t(x)}[\|u_{t\to r}^\theta(x) -(u_t(x)+B(\theta;r,t,x))\|^2]\\ 
\mathcal{L}^{\tilde C}(\theta) &= \mathbb{E}_{t,r,x \sim  p_t(x)}[\|u_{t\to r}^\theta(x) -(u_{t\to t}^\theta(x)+B(\theta;r,t,x))\|^2] 
\end{aligned} 
\end{equation}

We first show that the Euler Mean Flow loss $\mathcal{L}^{E}(\theta)$ and the reference regression loss $\mathcal{L}^{R}(\theta)$ satisfy $\nabla_\theta \mathcal{L}^{E}(\theta)$ = $\nabla_\theta \mathcal{L}^{R}(\theta)$. Expanding $\nabla \mathcal{L}^{E}(\theta)$, we obtain
\begin{equation} 
\begin{aligned} 
\nabla_\theta  \mathcal{L}^{E}(\theta) &= \mathbb{E}_{t,r,x_1\sim p_{data}, x \sim  p_t(x|x_1), }[\nabla \|u_{t\to r}^\theta(x) -(u_t(x|x_1)+B(\theta;r,t,x))\|^2] \\
&\overset{\nabla B(\theta;r,t,x) = 0}{=} \mathbb{E}_{t,r,x_1\sim p_{data}, x \sim  p_t(x|x_1)}[\nabla u_{t\to r}^\theta(x) (u_{t\to r}^\theta(x)-u_t(x|x_1)-B(\theta;r,t,x))]  \\
&\overset{\nabla B(\theta;r,t,x) = 0}{=} \mathbb{E}_{t,r,x_1\sim p_{data}, x \sim  p_t(x|x_1)}[\nabla u_{t\to r}^\theta(x) (u_{t\to r}^\theta(x)-u_t(x|x_1)-B(\theta;r,t,x))]  \\
\end{aligned} \end{equation}
where $\mathbb{E}_{t,r,x_1\sim p_{data}, x \sim  p_t(x|x_1)}[\nabla u_{t\to r}^\theta(x) (u_{t\to r}^\theta(x)-B(\theta;r,t,x))]$ can be computed as:
\begin{equation}
    \begin{aligned}
        &\mathbb{E}_{t,r,x_1\sim p_{data}, x \sim  p_t(x|x_1)}[\nabla u_{t\to r}^\theta(x) (u_{t\to r}^\theta(x)-B(\theta;r,t,x))]\\
        &= \int_{t,r} \int_{x_1}\int_x (\nabla u_{t\to r}^\theta(x) (u_{t\to r}^\theta(x)-B(\theta;r,t,x))) p(x|x_1)p_{data}(x_1) p(t,r)dx dx_1dt dr\\
        &= \int_{t,r} \int_x (\nabla u_{t\to r}^\theta(x) (u_{t\to r}^\theta(x)-B(\theta;r,t,x))) \int_{x_1}p(x|x_1)p_{data}(x_1)dx_1 p(t,r)dx dt dr\\
        &= \int_{t,r} \int_x (\nabla u_{t\to r}^\theta(x) (u_{t\to r}^\theta(x)-B(\theta;r,t,x))) p(x) p(t,r)dx dt dr\\
        &= \mathbb{E}_{t,r,x \sim  p_t(x)}[\nabla u_{t\to r}^\theta(x) (u_{t\to r}^\theta(x)-B(\theta;r,t,x))]
    \end{aligned}
\end{equation}
And $\mathbb{E}_{t,r,x_1\sim p_{data}, x \sim  p_t(x|x_1)}[\nabla u_{t\to r}^\theta(x) u_t(x|x_1)]$ can be calculated as
\begin{equation}
    \begin{aligned}
        &\mathbb{E}_{t,r,x_1\sim p_{data}, x \sim  p_t(x|x_1)}[\nabla u_{t\to r}^\theta(x) u_t(x|x_1)]\\
        &= \int_{t,r} \int_{x_1}\int_x \nabla u_{t\to r}^\theta(x) u_t(x|x_1) p(x|x_1)p_{data}(x_1) p(t,r)dx dx_1dt dr\\
        &= \int_{t,r} \int_x \nabla u_{t\to r}^\theta(x) (\int_{x_1}u_t(x|x_1) p(x|x_1) p_{data}(x_1) dx_1) p(t,r)dx dt dr\\
        &= \int_{t,r} \int_x \nabla u_{t\to r}^\theta(x) (\int_{x_1}u_t(x|x_1) p(x_1|x) p(x) dx_1) p(t,r)dx dt dr\\
        &= \int_{t,r} \int_x \nabla u_{t\to r}^\theta(x) p(x)u_t(x) p(t,r)dx dt dr\\
        &=  \mathbb{E}_{t,r,x \sim  p_t(x)}[\nabla u_{t\to r}^\theta(x) u_t(x)]
    \end{aligned}
\end{equation}
Therefore, we have 
\begin{equation} \label{eq:relation1}
\begin{aligned} 
\nabla_\theta  \mathcal{L}^{E}(\theta) &= \mathbb{E}_{t,r,x_1\sim p_{data}, x \sim  p_t(x|x_1)}[\nabla u_{t\to r}^\theta(x) (u_{t\to r}^\theta(x)-u_t(x|x_1)-B(\theta;r,t,x))]  \\
&=\mathbb{E}_{t,r, x \sim  p_t(x)}[\nabla u_{t\to r}^\theta(x) (u_{t\to r}^\theta(x)-u_t(x)-B(\theta;r,t,x))]  \\
&= \nabla_\theta  \mathcal{L}^{R}(\theta)
\end{aligned} \end{equation}

We then calculate the difference between $\nabla_\theta  \mathcal{L}^{R}(\theta)$ and $\nabla_\theta  \mathcal{L}^{\tilde C}(\theta)$ as
\begin{equation}
    \begin{aligned}
        \nabla_\theta  \mathcal{L}^{R}(\theta) - \nabla_\theta  \mathcal{L}^{ \tilde C}(\theta) &= \mathbb{E}_{t,r,x \sim  p_t(x)}[\nabla_\theta u_{t\to r}^\theta(x) (u_{t\to r}^\theta(x)-u_t(x)-B(\theta;r,t,x))\\
        &-\nabla_\theta u_{t\to r}^\theta(x) (u_{t\to r}^\theta(x)-u_{t\to t}^\theta(x)-B(\theta;r,t,x))\|^2]\\
        &=\mathbb{E}_{t,r, x \sim  p_t(x)} [\nabla_\theta u_{t\to r}^\theta(x)(u_{t\to t}^\theta(x)-u_t(x))]        
    \end{aligned}
\end{equation}
Applying the Cauchy-Schwarz inequality and using the assumption $M_g = \sqrt{\mathbb{E}_{t,r,x\sim p_t(x)}[\frac{1}{m}\|\nabla_\theta u^\theta_{t\to r}(x)\|_2^2]}<+\infty$ in \autoref{assupmption}, we further obtain the following bound:
\begin{equation}\label{eq:relation2}
    \begin{aligned}
        MSE(\nabla_\theta  \mathcal{L}^{R}(\theta) , \nabla_\theta  \mathcal{L}^{ \tilde C}(\theta)) &=  \frac{1}{\sqrt{m}}\|\mathbb{E}_{t,r, x \sim  p_t(x)} [\nabla_\theta u_{t\to r}^\theta(x)(u_{t\to t}^\theta(x)-u_t(x))]\|\\
        &\le \frac{1}{\sqrt{m}}\mathbb{E}_{t,r,x \sim  p_t(x)} [\|\nabla_\theta u_{t\to r}^\theta(x)(u_{t\to t}^\theta(x)-u_t(x))\|]\\
        &\le \frac{1}{\sqrt{m}}\mathbb{E}_{t,r, x \sim  p_t(x)} [\|\nabla_\theta u_{t\to r}^\theta(x)\|_2\|u_{t\to t}^\theta(x)-u_t(x)\|]\\
        &\le \sqrt{\mathbb{E}_{t,r, x \sim  p_t(x)} [\frac{1}{m}\|\nabla_\theta u_{t\to r}^\theta(x)\|_2^2]\mathbb{E}_{t,r, x \sim  p_t(x)}[\|u_{t\to t}^\theta(x)-u_t(x)\|^2]}\\
        &\le M_g \sqrt{\mathbb{E}_{t,r, x \sim  p_t(x)} [\|u_{t\to t}^\theta(x)-u_t(x)\|^2]}\\
    \end{aligned}
\end{equation}
Combine \autoref{eq:relation1} and \autoref{eq:relation2}, we have
\begin{equation}
    \begin{aligned}
        MSE(\nabla  \mathcal{L}^{E}(\theta), \nabla  \mathcal{L}^{ \tilde C}(\theta)) \le M_g \sqrt{\mathbb{E}_{t,r, x \sim  p_t(x)} [\|u_{t\to t}^\theta(x)-u_t(x)\|^2]}
    \end{aligned}
\end{equation}
\end{proof}

\subsection{Proof of \autoref{thm:validity}}\label{sec:u_validity_proof}
\paragraph{\autoref{thm:validity}}(Surrogate Loss Validity) With $M_g = \sqrt{\mathbb{E}_{t,r,x \sim  p_t(x) }[\frac{1}{m}\|\nabla_\theta u_{t\to r}^\theta(x)\|_2^2]} < +\infty$, $M_x = \sqrt{\mathbb{E}_{t,r,x \sim  p_t(x) }[\frac{1}{m}\|\partial_x u_{t+\Delta t\to r}^\theta(x')\|_2^2]} < +\infty$, and $M_t = \sqrt{\mathbb{E}_{t,r,x \sim  p_t(x) }[\|\partial_s u_{t\to s}^\theta|_{s=t}\|^2]} < +\infty$ hold in \autoref{assupmption}, Our Euler Mean Flow loss $\mathcal{L}^{E}(\theta)$ and the trajectory consistency loss $\mathcal{L}^{C}(\theta)$ satisfy
\begin{equation}
    \begin{aligned}
        MSE(\nabla_\theta L^E(\theta),\nabla_\theta L^C(\theta)) \le  M_g \sqrt{\mathbb{E}_{t,r, x \sim  p_t(x)} [\|u_{t\to t}^\theta(x)-u_t(x)\|^2]} + (M_gM_t + M_x M_t)\Delta t + O(\Delta t^2)
    \end{aligned}
\end{equation}
\begin{proof}
We define $C(\theta;r,t,x) = sg(\frac{u_{t+\Delta t\to r}^\theta(x')-u_{t\to r}^\theta(x)}{\Delta t})$, $x' = x+ u_{t\to t}^\theta(x)\Delta t$ and $D(\theta;r,t,x) =sg(\frac{u_{t+\Delta t\to r}^\theta(x'')-u_{t\to r}^\theta(x)}{\Delta t})$, $x'' = x+ u_{t\to t+\Delta t}^\theta(x)\Delta t$.  With these definitions, the approximated trajectory consistency loss
$\mathcal{L}^{\tilde C}(\theta)$ and the trajectory consistency loss $\mathcal{L}^{C}(\theta)$ can be written as
\begin{equation}
    \begin{aligned}
\mathcal{L}^{\tilde C}(\theta) &= \mathbb{E}_{t,r,x \sim  p_t(x)}[\|u_{t\to r}^\theta(x) -sg(u_{t\to t}^\theta(x)+(r-t-\Delta t)C(\theta;r,t,x))\|^2] \\
\mathcal{L}^{C}(\theta) &= \mathbb{E}_{t,r,x \sim  p_t(x)}[\|u_{t\to r}^\theta(x) -sg(u_{t\to t+\Delta t}^\theta(x)+(r-t-\Delta t)D(\theta;r,t,x))\|^2] 
    \end{aligned}
\end{equation}

We now analyze the difference $\nabla_\theta \mathcal{L}^{\tilde C}(\theta)  - \nabla_\theta \mathcal{L}^{C}(\theta)  $ between the gradients of these two objectives.  A direct computation yields
\begin{equation}
\begin{aligned}
\nabla \mathcal{L}_\theta^{\tilde C}(\theta)  - \nabla \mathcal{L}_\theta^{C}(\theta)  &=  \mathbb{E}_{t,r,x \sim  p_t(x)}[\nabla_\theta u_{t\to r}^\theta(x)(u_{t\to r}^\theta(x) -(u_t^\theta(x)+(r-t-\Delta t)C(\theta;r,t,x))]\\
&-\mathbb{E}_{t,r,x \sim  p_t(x)}[\nabla_\theta u_{t\to r}^\theta(x)(u_{t\to r}^\theta(x) -(u_{t\to t+\Delta t}^\theta(x)+(r-t-\Delta t)D(\theta;r,t,x))]\\
&= \mathbb{E}_{t,r,x \sim  p_t(x)}[\nabla_\theta u_{t\to r}^\theta(x)((u_{t\to t+\Delta t}^\theta(x)-u_{t\to t}^\theta(x))+(r-t-\Delta t)(D(\theta;r,t,x)-C(\theta;r,t,x)))]\\
\end{aligned}
\end{equation}

We first bound the difference $D(\theta; r,t,x)-C(\theta; r,t,x)$. By definition,
\begin{equation}
\begin{aligned}
\|D(\theta;r,t,x)- C(\theta;r,t,x)\|  &=\|\frac{u_{t+\Delta t\to r}^\theta(x')-u_{t\to r}^\theta(x)}{\Delta t}- \frac{u_{t+\Delta t\to r}^\theta(x'')-u_{t\to r}^\theta(x)}{\Delta t}\|\\
&= \frac{1}{\Delta t}\|u_{t+\Delta t\to r}^\theta(x') - u_{t+\Delta t\to r}^\theta(x'')\|\\
&= \frac{1}{\Delta t}\|\partial_x u_{t+\Delta t\to r}^\theta(x')(x''-x')\|\\
&= \frac{1}{\Delta t}\|\partial_x u_{t+\Delta t\to r}^\theta(x')(u^\theta_{t\to t+\Delta t}(x)-u^\theta_{t\to t}(x)) \Delta t\|\\
&= \|\partial_x u_{t+\Delta t\to r}^\theta(x')(u^\theta_{t\to t+\Delta t}(x)-u^\theta_{t\to t}(x))\|\\
&\le \|\partial_x u_{t+\Delta t\to r}^\theta(x')\|_2\|u^\theta_{t\to t+\Delta t}(x)-u^\theta_{t\to t}(x)\|
\end{aligned}
\end{equation}

Next, the difference $u_{t\to t+\Delta t}^\theta(x) -u_{t\to t}^\theta(x)$  admits a first-order expansion:
\begin{equation}
    \begin{aligned}
\|u_{t\to t+\Delta t}^\theta(x) -u_{t\to t}^\theta(x) \|  &= \|u_{t\to t}^\theta(x)  + \Delta t\partial_su^\theta_{t\to s}|_{s=t} + O(\Delta t^2)-u_{t\to t}^\theta(x) \|\\
&\le \Delta t \|\partial_su_{t\to s}^\theta|_{s= t}\|+O(\Delta t^2)\\
\end{aligned}
\end{equation}

Combining the above estimates, we can bound $\|\nabla_\theta \mathcal{L}^{\tilde C}(\theta)  - \nabla_\theta \mathcal{L}^{C}(\theta)\|$ as
\begin{equation}
    \begin{aligned}
&MSE(\nabla_\theta \mathcal{L}^{\tilde C}(\theta), \nabla_\theta \mathcal{L}^{C}(\theta))\\
&\le \frac{1}{\sqrt{m}} \mathbb{E}_{t,r,x \sim  p_t(x) }\|\nabla_\theta u_{t\to r}^\theta(x)((u_{t\to t+\Delta t}^\theta(x)-u_{t\to t}^\theta(x))+(r-t-\Delta t)(D(\theta;r,t,x)-C(\theta;r,t,x)))\|\\
&\le \frac{1}{\sqrt{m}}  \mathbb{E}_{t,r,x \sim  p_t(x) }[\|\nabla_\theta u_{t\to r}^\theta(x)(u_{t\to t+\Delta t}^\theta(x)-u_{t\to t}^\theta(x))\|+(r-t-\Delta t)\|D(\theta;r,t,x)-C(\theta;r,t,x)\|]\\
&\le \frac{1}{\sqrt{m}}  \mathbb{E}_{t,r,x \sim  p_t(x) }[\|\nabla_\theta u_{t\to r}^\theta(x)\|_2\|u_{t\to t+\Delta t}^\theta(x)-u_{t\to t}^\theta(x)\|+(r-t-\Delta t)\|\partial_x u_{t+\Delta t\to r}^\theta(x')\|_2\|u^\theta_{t\to t+\Delta t}(x)-u^\theta_{t\to t}(x)\|]\\
&\le \frac{1}{\sqrt{m}}  \mathbb{E}_{t,r,x \sim  p_t(x) }[\|\nabla_\theta u_{t\to r}^\theta(x)\|_2\|u_{t\to t+\Delta t}^\theta(x)-u_{t\to t}^\theta(x)\|]+\mathbb{E}_{t,r,x \sim  p_t(x) }[\|\partial_x u_{t+\Delta t\to r}^\theta(x')\|_2\|u_{t\to t+\Delta t}^\theta(x)-u_{t\to t}^\theta(x)\|]\\
&\le  \sqrt{\mathbb{E}_{t,r,x \sim  p_t(x) }[\frac{1}{m}\|\nabla_\theta u_{t\to r}^\theta(x)\|_2^2]\mathbb{E}_{t,r,x \sim  p_t(x) }[\|u_{t\to t+\Delta t}^\theta(x)-u_{t\to t}^\theta(x)\|^2]}\\
&+\sqrt{\mathbb{E}_{t,r,x \sim  p_t(x) }[\frac{1}{m}\|\partial_x u_{t+\Delta t\to r}^\theta(x')\|_2^2]\mathbb{E}_{t,r,x \sim  p_t(x) }[\|u_{t\to t+\Delta t}^\theta(x)-u_{t\to t}^\theta(x)\|^2]}\\
&\le (\sqrt{\mathbb{E}_{t,r,x \sim  p_t(x) }[\frac{1}{m}\|\nabla_\theta u_{t\to r}^\theta(x)\|_2^2]\mathbb{E}_{t,r,x \sim  p_t(x) }[\|\partial_s u_{t\to s}^\theta|_{s=t}\|^2}\\
&+\sqrt{\mathbb{E}_{t,r,x \sim  p_t(x) }[\frac{1}{m}\|\partial_x u_{t+\Delta t\to r}^\theta(x')\|_2^2]\mathbb{E}_{t,r,x \sim  p_t(x) }[\|\partial_s u_{t\to s}^\theta|_{s=t}\|^2]})\Delta t + O(\Delta t^2)\\
& =  (M_gM_t + M_x M_t)\Delta t + O(\Delta t^2)
\end{aligned}
\end{equation}
where $M_g = \sqrt{\mathbb{E}_{t,r,x \sim  p_t(x) }[\frac{1}{m}\|\nabla_\theta u_{t\to r}^\theta(x)\|_2^2]} < +\infty$, $M_x = \sqrt{\mathbb{E}_{t,r,x \sim  p_t(x) }[\frac{1}{m}\|\partial_x u_{t+\Delta t\to r}^\theta(x')\|_2^2]} < +\infty$, and $M_t = \sqrt{\mathbb{E}_{t,r,x \sim  p_t(x) }[\|\partial_s u_{t\to s}^\theta|_{s=t}\|^2]} < +\infty$ as \autoref{assupmption}.

Combine with \autoref{lemma_for_u}, we have
\begin{equation}
    \begin{aligned}
        MSE(\nabla_\theta L^E(\theta),\nabla_\theta L^C(\theta))&\le MSE(\nabla_\theta L^E(\theta), \nabla_\theta L^{\tilde C}(\theta))+MSE(\nabla_\theta L^{\tilde C}(\theta), \nabla_\theta L^C(\theta))\\
        &\le  M_g \sqrt{\mathbb{E}_{t,r, x \sim  p_t(x)} [\|u_{t\to t}^\theta(x)-u_t(x)\|^2]} + (M_gM_t + M_x M_t)\Delta t + O(\Delta t^2)
    \end{aligned}
\end{equation}
\end{proof}

\subsection{Derivation of \autoref{traj_consist_x1}}
Substituting this relation $\tilde{x}_{t\to r}(x) = (1-t)\,u_{t\to r}(x) + x$ and $x_s = \phi_{t\to s}(x_t) = \frac{s-t}{1-t}(\tilde{x}_{t\to s}(x_t) - x_t)+x_t$ into \autoref{eq:traj_u_consistency}, we obtain 
\begin{equation}
    \begin{aligned}
        (r-t)u_{t\to r}(x_t)&= (s-t)u_{t\to s}(x_t) + (r-s)u_{s\to r}(x_s)\\
        (r-t)(\frac{\tilde{x}_{t\to r}(x_t)-x_t}{1-t})&= (s-t)\frac{\tilde{x}_{t\to s}(x_t)-x_t}{1-t} + (r-s)\frac{\tilde{x}_{s\to r}(x_s)-x_s}{1-s}\\
        \tilde{x}_{t\to r}(x_t)&= x_t+ \frac{s-t}{r-t}(\tilde{x}_{t\to s}(x_t)-x_t) + \frac{(1-t)(r-s)}{(1-s)(r-t)}(\tilde{x}_{s\to r}(x_s)-x_s)\\
        \tilde{x}_{t\to r}(x_t)&= x_t+ \frac{s-t}{r-t}(\tilde{x}_{t\to s}(x_t)-x_t) + \frac{(1-t)(r-s)}{(1-s)(r-t)}(\tilde{x}_{s\to r}(x_s)-\frac{s-t}{1-t}(\tilde{x}_{t\to s}(x_t) - x_t)-x_t)\\
        \tilde{x}_{t\to r}(x_t)&= x_t+ \frac{s-t}{r-t}(\tilde{x}_{t\to s}(x_t)-x_t) + \frac{(1-t)(r-s)}{(1-s)(r-t)}(\tilde{x}_{s\to r}(x_s)-\frac{s-t}{1-t}\tilde{x}_{t\to s}(x_t) - \frac{1-s}{1-t}x_t)\\
        \tilde{x}_{t\to r}(x_t)&= \frac{s-t}{r-t}\tilde{x}_{t\to s}(x_t) + \frac{(1-t)(r-s)}{(1-s)(r-t)}(\tilde{x}_{s\to r}(x_s)-\frac{s-t}{1-t}\tilde{x}_{t\to s}(x_t))\\
        \tilde{x}_{t\to r}(x_t)&= \frac{1-r}{1-s}\frac{s-t}{r-t}\tilde{x}_{t\to s}(x_t) + \frac{(1-t)(r-s)}{(1-s)(r-t)}\tilde{x}_{s\to r}(x_s)\\
       \frac{(s-t)(1-r)}{(1-s)(r-t)}\tilde{x}_{t\to r}(x_t)&= \frac{1-r}{1-s}\frac{s-t}{r-t}\tilde{x}_{t\to s}(x_t) + \frac{(1-t)(r-s)}{(1-s)(r-t)}(\tilde{x}_{s\to r}(x_s)-\tilde{x}_{t\to r}(x_t))\\
       \tilde{x}_{t\to r}(x_t)&= \tilde{x}_{t\to s}(x_t) + (r-s)\frac{(1-t)}{(1-r)}\frac{\tilde{x}_{s\to r}(x_s)-\tilde{x}_{t\to r}(x_t)}{s-t}\\
    \end{aligned}
\end{equation}


\subsection{\autoref{lemma_for_x1} and its proof}
\begin{lemma}\label{lemma_for_x1}
With $M'_g = \sqrt{\mathbb{E}_{t,r,x\sim p_t(x)}[\frac{1}{m}\|\nabla_\theta \tilde{x}^\theta_{t\to r}(x)\|_2^2]}<+\infty$ holds in \autoref{assupmption_x1}, $x_1$-prediction Euler Mean Flow loss $\mathcal{L}^{E'}(\theta)$ and the approximated $x_1$-prediction trajectory consistency loss $\mathcal{L}^{\tilde C'}(\theta)$ satisfy
 \begin{equation}
     \begin{aligned}
         MSE(\nabla\mathcal{L}^{E'}(\theta), \nabla  \mathcal{L}^{ \tilde C'}(\theta)) \le M'_g \sqrt{\mathbb{E}_{t,r, x \sim  p_t(x)} [\|\tilde{x}_{t\to t}^\theta(x)-\tilde{x}_t(x)\|^2]}
     \end{aligned}
 \end{equation}
Consequently, during training, if $\|\tilde{x}^\theta_{t\to t}(x) - \tilde{x}_t(x)\|^2 \to 0$, then $\mathcal{L}^{E'}(\theta)$ and $\mathcal{L}^{\tilde C'}(\theta)$ share the same optimal target at $\theta$.  The term $\tilde{x}_t(x)$ denotes the reference instantaneous velocity at $x$, defined as $\tilde{x}_t(x) = \mathbb{E}_{x_1 \sim p(x_1|x)}\big[\tilde{x}_t(x|x_1)\big]$,  which is generally intractable to compute analytically.

Here, the approximated $x_1$-prediction trajectory consistency loss $\mathcal{L}^{\tilde C'}(\theta)$ are defined as 
\begin{equation}
    \begin{aligned}
  \mathcal{L}^{\tilde C'}(\theta) &= \mathbb{E}_{t,r,x \sim  p_t(x), x'=sg(\Delta t \frac{\tilde{x}^\theta_{t\to t}(x)-x}{1-t})+x}\\
  &[|\tilde{x}^\theta_{t\to r}(x)  - (\tilde{x}^\theta_{t\to t}(x) + (r-t-\Delta t) \\
 &\frac{1-t}{1-r}\text{sg}(\frac{\tilde{x}^\theta_{t+\Delta t\to r}(x')-\tilde{x}^\theta_{t\to r}(x)}{\Delta t}))\|^2]       
    \end{aligned}
\end{equation}
It is straightforward to verify that the loss $\mathcal{L}^{\tilde C'}(\theta)$ is the mean-velocity formulation of $\mathcal{L}^{C'}(\theta)$ under the local linear approximation in \autoref{eq:EFM_equation}, expressed via $\tilde{x}_{t\to r}(x) =(1-t) \frac{\phi_{t\to r}(x)-x}{r-t} + x$, and differs by a temporal scaling factor $\frac{(1-t-\Delta t)}{\Delta t(1-t)(1-r)}$. 
\end{lemma}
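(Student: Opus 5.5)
The plan is to mirror the proof of \autoref{lemma_for_u}, now for the field $\tilde{x}^\theta$. The argument passes through an intermediate \emph{reference regression loss} $\mathcal{L}^{R'}(\theta)$. It is obtained from $\mathcal{L}^{E'}$ by replacing the conditional target $\tilde{x}_t(x\mid x_1)=x_1$ with the marginal target $\tilde{x}_t(x)=\mathbb{E}_{x_1\sim p(x_1|x)}[x_1]$, and by sampling $x\sim p_t(x)$ directly. First I would collect the stop-gradient part of all three losses into a single term,
\[
B'(\theta;r,t,x)=(r-t-\Delta t)\frac{1-t}{1-r}\,\mathrm{sg}\Big(\frac{\tilde{x}^\theta_{t+\Delta t\to r}(x')-\tilde{x}^\theta_{t\to r}(x)}{\Delta t}\Big),
\]
so that $\nabla_\theta B'=0$. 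The three losses then differ only in their first target term: $x_1$, $\tilde{x}_t(x)$, or $\tilde{x}^\theta_{t\to t}(x)$. Each gradient has the form $2\,\mathbb{E}[\nabla_\theta\tilde{x}^\theta_{t\to r}(x)^\top(\tilde{x}^\theta_{t\to r}(x)-\text{target}-B')]$, and I will drop the constant factor consistently as in the $u$ case. One caveat is that the point $x'$ inside $B'$ depends only on $x$ and $\theta$, not on $x_1$, since it is built from $\tilde{x}^\theta_{t\to t}(x)$. That is what makes the next step work.

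Second, I would show $\nabla_\theta\mathcal{L}^{E'}=\nabla_\theta\mathcal{L}^{R'}$. The terms in the gradient that do not involve $x_1$ are functions of $x$ alone, so marginalizing $\int p_t(x|x_1)p_{data}(x_1)\,dx_1=p_t(x)$ turns the conditional expectation into the marginal one. For the single term linear in $x_1$, Bayes' rule $p_t(x|x_1)p_{data}(x_1)=p_t(x_1|x)p_t(x)$ gives $\mathbb{E}[\nabla_\theta\tilde{x}^\theta_{t\to r}(x)^\top x_1]=\mathbb{E}_{x\sim p_t}[\nabla_\theta\tilde{x}^\theta_{t\to r}(x)^\top\tilde{x}_t(x)]$. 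This is the same computation as \autoref{eq:relation1}, with $u_t(x|x_1)$ replaced by $x_1$.

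Third, I would subtract the gradients of the two marginal losses. The $B'$ terms and the $\tilde{x}^\theta_{t\to r}$ terms cancel, leaving
\[
\nabla_\theta\mathcal{L}^{R'}-\nabla_\theta\mathcal{L}^{\tilde C'}=\mathbb{E}_{t,r,x\sim p_t}\big[\nabla_\theta\tilde{x}^\theta_{t\to r}(x)^\top(\tilde{x}^\theta_{t\to t}(x)-\tilde{x}_t(x))\big].
\]
To bound it, I would divide by $\sqrt{m}$ and apply Jensen to move the norm inside the expectation. Then I bound each integrand by the spectral norm times the vector norm, and apply Cauchy--Schwarz over $(t,r,x)$. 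This gives $M'_g\sqrt{\mathbb{E}\|\tilde{x}^\theta_{t\to t}-\tilde{x}_t\|^2}$, which is exactly the claimed bound, as in \autoref{eq:relation2}. The "consequently" statement follows immediately.

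The only step that needs real care is the second one: checking that the weight $\frac{1-t}{1-r}$ and the point $x'$ do not depend on $x_1$, so that the marginalization is legitimate. If the time weight $1/(1-t)^2$ discussed in the paper is included, it factors out as a function of $t$ and changes nothing. The remark about the scaling factor relative to $\mathcal{L}^{C'}$ is a direct consequence of \autoref{eq:traj_x1_dev}, and I would state it as a verification rather than prove it.
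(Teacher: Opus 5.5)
Your proposal is correct and matches the paper's proof step for step. The same ingredients appear in the same order: the stop-gradient term $B'$, and the identity $\nabla_\theta\mathcal{L}^{E'}=\nabla_\theta\mathcal{L}^{R'}$ obtained by marginalizing and applying Bayes' rule to the term linear in $x_1$. Your observation that $x'$ and $\frac{1-t}{1-r}$ do not depend on $x_1$ is exactly what the paper uses implicitly. The bound then follows from the same Jensen, spectral-norm and Cauchy--Schwarz chain, giving $M'_g\sqrt{\mathbb{E}\|\tilde{x}^\theta_{t\to t}(x)-\tilde{x}_t(x)\|^2}$.

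One caution concerns the side remark on the scaling factor, which neither you nor the paper proves. If you actually derive it from \autoref{eq:traj_x1_dev}, the $\tilde{C}'$-type residual equals the flow-map residual $\phi_{t\to r}(x_t)-\phi_{s\to r}(x_s)$ times $\frac{(1-s)(1-t)}{(s-t)(1-r)}=\frac{(1-t-\Delta t)(1-t)}{\Delta t(1-r)}$. This differs from the stated factor by $(1-t)^2$, so the remark holds only up to the tunable weight $1/w(t,r)$ in $\mathcal{L}^C$.
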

\begin{proof}

We first define the reference regression loss $\mathcal{L}^{R}(\theta)$ as
\begin{equation} 
\begin{aligned} 
    \mathcal{L}^{R'}(\theta) = 
    &\mathbb{E}_{t,r,x \sim p_t(x), x'=sg(\Delta t \frac{\tilde{x}^\theta_{t\to t}(x)-x}{1-t})+x}\\ 
    &[|\tilde{x}_{t\to r}^\theta(x) - (\tilde{x}_t(x) + (r-t-\Delta t) \\ 
    &\frac{1-t}{1-r}\text{sg}(\frac{\tilde{x}^\theta_{t+\Delta t\to r}(x')-\tilde{x}^\theta_{t\to r}(x)}{\Delta t}))\|^2] \\ 
\end{aligned} 
\end{equation}

Let $B'(\theta;r,t,x) = (r-t-\Delta t)\frac{1-t}{1-r}sg(\frac{\tilde{x}_{t+\Delta t\to r}^\theta(x')-\tilde{x}_{t\to r}^\theta(x)}{\Delta t})$. Since $B'(\theta; r,t,x)$ contains the stop-gradient operator $\mathrm{sg}(\cdot)$, it satisfies $\nabla_\theta B'(\theta;r,t,x) = 0$.  Using $B'(\theta; r,t,x)$, the $x_1$-prediction Euler Mean Flow loss $\mathcal{L}^{E'}(\theta)$, the $x_1$-prediction reference regression loss $\mathcal{L}^{R'}(\theta)$, and the approximated trajectory consistency loss $\mathcal{L}^{\tilde C'}(\theta)$ can be written as
\begin{equation} 
\begin{aligned} \mathcal{L}^{E'}(\theta) &= \mathbb{E}_{t,r,x_1\sim p_{data}, x \sim  p_t(x|x_1)}[\|\tilde{x}_{t\to r}^\theta(x) -(\tilde{x}_t(x|x_1)+B'(\theta;r,t,x))\|^2]\\ 
\mathcal{L}^{R'}(\theta) &= \mathbb{E}_{t,r,x \sim p_t(x)}[\|\tilde{x}_{t\to r}^\theta(x) -(\tilde{x}_t(x)+B'(\theta;r,t,x))\|^2]\\ 
\mathcal{L}^{\tilde C'}(\theta) &= \mathbb{E}_{t,r,x \sim  p_t(x)}[\|\tilde{x}_{t\to r}^\theta(x) -(\tilde{x}_{t\to t}^\theta(x)+B'(\theta;r,t,x))\|^2] 
\end{aligned} 
\end{equation}

We first show that the $x_1$-prediction Euler Mean Flow loss $\mathcal{L}^{E'}(\theta)$ and $x_1$-prediction the reference regression loss $\mathcal{L}^{R'}(\theta)$ satisfy $\nabla_\theta \mathcal{L}^{E'}(\theta)$ = $\nabla_\theta \mathcal{L}^{R'}(\theta)$. Expanding $\nabla_\theta \mathcal{L}^{E'}(\theta)$, we obtain
\begin{equation} 
\begin{aligned} 
\nabla_\theta  \mathcal{L}^{E'}(\theta) &= \mathbb{E}_{t,r,x_1\sim p_{data}, x \sim  p_t(x|x_1), }[\nabla_\theta \|\tilde{x}_{t\to r}^\theta(x) -(\tilde{x}_t(x|x_1)+B'(\theta;r,t,x))\|^2] \\
&\overset{\nabla_\theta B'(\theta;r,t,x) = 0}{=} \mathbb{E}_{t,r,x_1\sim p_{data}, x \sim  p_t(x|x_1)}[\nabla_\theta \tilde{x}_{t\to r}^\theta(x) (\tilde{x}_{t\to r}^\theta(x)-\tilde{x}_t(x|x_1)-B'(\theta;r,t,x))]  \\
&\overset{\nabla_\theta B'(\theta;r,t,x) = 0}{=} \mathbb{E}_{t,r,x_1\sim p_{data}, x \sim  p_t(x|x_1)}[\nabla_\theta \tilde{x}_{t\to r}^\theta(x) (\tilde{x}_{t\to r}^\theta(x)-\tilde{x}_t(x|x_1)-B'(\theta;r,t,x))]  \\
\end{aligned} \end{equation}
where $\mathbb{E}_{t,r,x_1\sim p_{data}, x \sim  p_t(x|x_1)}[\nabla_\theta \tilde{x}_{t\to r}^\theta(x) (\tilde{x}_{t\to r}^\theta(x)-B'(\theta;r,t,x))]$ can be computed as:
\begin{equation}
    \begin{aligned}
        &\mathbb{E}_{t,r,x_1\sim p_{data}, x \sim  p_t(x|x_1)}[\nabla_\theta \tilde{x}_{t\to r}^\theta(x) (\tilde{x}_{t\to r}^\theta(x)-B'(\theta;r,t,x))]\\
        &= \int_{t,r} \int_{x_1}\int_x (\nabla_\theta \tilde{x}_{t\to r}^\theta(x) (\tilde{x}_{t\to r}^\theta(x)-B'(\theta;r,t,x))) p(x|x_1)p_{data}(x_1) p(t,r)dx dx_1dt dr\\
        &= \int_{t,r} \int_x (\nabla_\theta \tilde{x}_{t\to r}^\theta(x) (\tilde{x}_{t\to r}^\theta(x)-B'(\theta;r,t,x))) \int_{x_1}p(x|x_1)p_{data}(x_1)dx_1 p(t,r)dx dt dr\\
        &= \int_{t,r} \int_x (\nabla_\theta \tilde{x}_{t\to r}^\theta(x) (\tilde{x}_{t\to r}^\theta(x)-B'(\theta;r,t,x))) p(x) p(t,r)dx dt dr\\
        &= \mathbb{E}_{t,r,x \sim  p_t(x)}[\nabla \tilde{x}_{t\to r}^\theta(x) (\tilde{x}_{t\to r}^\theta(x)-B'(\theta;r,t,x))]
    \end{aligned}
\end{equation}
And $\mathbb{E}_{t,r,x_1\sim p_{data}, x \sim  p_t(x|x_1)}[\nabla \tilde{x}_{t\to r}^\theta(x) \tilde{x}_t(x|x_1)]$ can be calculated as
\begin{equation}
    \begin{aligned}
        &\mathbb{E}_{t,r,x_1\sim p_{data}, x \sim  p_t(x|x_1)}[\nabla_\theta \tilde{x}_{t\to r}^\theta(x) \tilde{x}_t(x|x_1)]\\
        &= \int_{t,r} \int_{x_1}\int_x \nabla_\theta \tilde{x}_{t\to r}^\theta(x) \tilde{x}_t(x|x_1) p(x|x_1)p_{data}(x_1) p(t,r)dx dx_1dt dr\\
        &= \int_{t,r} \int_x \nabla_\theta \tilde{x}_{t\to r}^\theta(x) (\int_{x_1}\tilde{x}_t(x|x_1) p(x|x_1) p_{data}(x_1) dx_1) p(t,r)dx dt dr\\
        &= \int_{t,r} \int_x \nabla_\theta \tilde{x}_{t\to r}^\theta(x) (\int_{x_1}\tilde{x}_t(x|x_1) p(x_1|x) p(x) dx_1) p(t,r)dx dt dr\\
        &= \int_{t,r} \int_x \nabla_\theta \tilde{x}_{t\to r}^\theta(x) p(x)\tilde{x}_t(x) p(t,r)dx dt dr\\
        &=  \mathbb{E}_{t,r,x \sim  p_t(x)}[\nabla_\theta \tilde{x}_{t\to r}^\theta(x) \tilde{x}_t(x)]
    \end{aligned}
\end{equation}
Therefore, we have 
\begin{equation} \label{eq:relation1}
\begin{aligned} 
\nabla_\theta  \mathcal{L}^{E'}(\theta) &= \mathbb{E}_{t,r,x_1\sim p_{data}, x \sim  p_t(x|x_1)}[\nabla \tilde{x}_{t\to r}^\theta(x) (\tilde{x}_{t\to r}^\theta(x)-\tilde{x}_t(x|x_1)-B(\theta;r,t,x))]  \\
&=\mathbb{E}_{t,r, x \sim  p_t(x)}[\nabla \tilde{x}_{t\to r}^\theta(x) (\tilde{x}_{t\to r}^\theta(x)-\tilde{x}_t(x)-B(\theta;r,t,x))]  \\
&= \nabla_\theta  \mathcal{L}^{R'}(\theta)
\end{aligned} \end{equation}

We then calculate the difference between $\nabla_\theta  \mathcal{L}^{R'}(\theta)$ and $\nabla_\theta  \mathcal{L}^{\tilde C'}(\theta)$ as
\begin{equation}
    \begin{aligned}
        \nabla_\theta  \mathcal{L}^{R'}(\theta) - \nabla_\theta  \mathcal{L}^{ \tilde C'}(\theta) &= \mathbb{E}_{t,r,x \sim  p_t(x)}[\nabla_\theta \tilde{x}_{t\to r}^\theta(x) (\tilde{x}_{t\to r}^\theta(x)-\tilde{x}_t(x)-B(\theta;r,t,x))\\
        &-\nabla_\theta \tilde{x}_{t\to r}^\theta(x) (\tilde{x}_{t\to r}^\theta(x)-\tilde{x}_{t\to t}^\theta(x)-B(\theta;r,t,x))\|^2]\\
        &=\mathbb{E}_{t,r, x \sim  p_t(x)} [\nabla_\theta \tilde{x}_{t\to r}^\theta(x)(\tilde{x}_{t\to t}^\theta(x)-\tilde{x}_t(x))]        
    \end{aligned}
\end{equation}
Applying the Cauchy-Schwarz inequality and using the assumption $M'_g = \sqrt{\mathbb{E}_{t,r,x\sim p_t(x)}[\frac{1}{m}\|\nabla_\theta \tilde{x}^\theta_{t\to r}(x)\|^2]}<+\infty$ in \autoref{assupmption_x1}, we further obtain the following bound:
\begin{equation}\label{eq:relation2}
    \begin{aligned}
        MSE(\nabla_\theta  \mathcal{L}^{R'}(\theta), \nabla_\theta  \mathcal{L}^{ \tilde C'}(\theta)) &=  \frac{1}{\sqrt{m}}\|\mathbb{E}_{t,r, x \sim  p_t(x)} [\nabla_\theta \tilde{x}_{t\to r}^\theta(x)(\tilde{x}_{t\to t}^\theta(x)-\tilde{x}_t(x))]\|\\
        &\le \frac{1}{\sqrt{m}}\mathbb{E}_{t,r,x \sim  p_t(x)} [\|\nabla_\theta \tilde{x}_{t\to r}^\theta(x)(\tilde{x}_{t\to t}^\theta(x)-\tilde{x}_t(x))\|]\\
        &\le \frac{1}{\sqrt{m}}\mathbb{E}_{t,r, x \sim  p_t(x)} [\|\nabla_\theta \tilde{x}_{t\to r}^\theta(x)\|_2\|\tilde{x}_{t\to t}^\theta(x)-\tilde{x}_t(x)\|]\\
        &\le \sqrt{\mathbb{E}_{t,r, x \sim  p_t(x)} [\frac{1}{m}\|\nabla_\theta \tilde{x}_{t\to r}^\theta(x)\|_2^2]\mathbb{E}_{t,r, x \sim  p_t(x)}[\|\tilde{x}_{t\to t}^\theta(x)-\tilde{x}_t(x)\|^2]}\\
        &\le M'_g \sqrt{\mathbb{E}_{t,r, x \sim  p_t(x)} [\|\tilde{x}_{t\to t}^\theta(x)-\tilde{x}_t(x)\|^2]}\\
    \end{aligned}
\end{equation}
Combine \autoref{eq:relation1} and \autoref{eq:relation2}, we have
\begin{equation}
    \begin{aligned}
        MSE(\nabla  \mathcal{L}^{E'}(\theta) , \nabla  \mathcal{L}^{ \tilde C'}(\theta)) \le M'_g \sqrt{\mathbb{E}_{t,r, x \sim  p_t(x)} [\|\tilde{x}_{t\to t}^\theta(x)-\tilde{x}_t(x)\|^2]}
    \end{aligned}
\end{equation}
\end{proof}

\subsection{Proof of \autoref{thm:validity_x1}}\label{thm:x1_validity_proof}
\paragraph{\autoref{thm:validity_x1}}(Surrogate Loss Validity for $x_1$-Prediction) With $M'_g = \sqrt{\mathbb{E}_{t,r,x \sim  p_t(x) }[\frac{1}{m}\|\nabla_\theta u_{t\to r}^\theta(x)\|_2^2]} < +\infty$, $M'_x = \sqrt{\mathbb{E}_{t,r,x \sim  p_t(x) }[\frac{1}{m}\|\partial_x u_{t+\Delta t\to r}^\theta(x')\|_2^2]} < +\infty$, and $M'_t = \sqrt{\mathbb{E}_{t,r,x \sim  p_t(x) }[\|\partial_s u_{t\to s}^\theta|_{s=t}\|^2]} < +\infty$ hold in \autoref{assupmption_x1} and \autoref{lemma_for_x1}, our Euler Mean Flow loss $\mathcal{L}^{E}(\theta)$ and the trajectory consistency loss $\mathcal{L}^{C}(\theta)$ satisfy
\begin{equation}
    \begin{aligned}
        &MSE(\nabla_\theta L^E(\theta) , \nabla_\theta L^C(\theta))\\
        &\le  M_g \sqrt{\mathbb{E}_{t,r, x \sim  p_t(x)} [\|\tilde{x}_{t\to t}^\theta(x)-\tilde{x}_t(x)\|^2]}\\
        &+ (M_gM_t + M_x M_t)\Delta t + O(\Delta t^2)
    \end{aligned}
\end{equation}
\begin{proof}
We define $C'(\theta;r,t,x) = sg(\frac{\tilde{x}_{t+\Delta t\to r}^\theta(x')-\tilde{x}_{t\to r}^\theta(x)}{\Delta t})$, $x' = x+ \frac{\tilde{x}_{t\to t}^\theta(x)-x}{1-t}\Delta t$ and $D'(\theta;r,t,x) =sg(\frac{\tilde{x}_{t+\Delta t\to r}^\theta(x'')-\tilde{x}_{t\to r}^\theta(x)}{\Delta t})$, $x'' = x+ \frac{\tilde{x}_{t\to t+\Delta t}^\theta(x)-x}{1-t}\Delta t$.  With these definitions, the approximated trajectory consistency loss
$\mathcal{L}^{\tilde C'}(\theta)$ and the trajectory consistency loss $\mathcal{L}^{C'}(\theta)$ can be written as
\begin{equation}
    \begin{aligned}
\mathcal{L}^{\tilde C'}(\theta) &= \mathbb{E}_{t,r,x \sim  p_t(x)}[\|\tilde{x}_{t\to r}^\theta(x) -sg(\tilde{x}_{t\to t}^\theta(x)+(r-t-\Delta t)\frac{1-t}{1-r}C(\theta;r,t,x))\|^2] \\
\mathcal{L}^{C'}(\theta) &= \mathbb{E}_{t,r,x \sim  p_t(x)}[\|\tilde{x}_{t\to r}^\theta(x) -sg(\tilde{x}_{t\to t+\Delta t}^\theta(x)+(r-t-\Delta t)\frac{1-t}{1-r}D(\theta;r,t,x))\|^2] 
    \end{aligned}
\end{equation}

We now analyze the difference $\nabla_\theta \mathcal{L}^{\tilde C'}(\theta)  - \nabla_\theta \mathcal{L}^{C'}(\theta)  $ between the gradients of these two objectives.  A direct computation yields
\begin{equation}
\begin{aligned}
\nabla \mathcal{L}_\theta^{\tilde C'}(\theta)  - \nabla \mathcal{L}_\theta^{C'}(\theta)  &=  \mathbb{E}_{t,r,x \sim  p_t(x)}[\nabla_\theta \tilde{x}_{t\to r}^\theta(x)(\tilde{x}_{t\to r}^\theta(x) -(\tilde{x}_t^\theta(x)+(r-t-\Delta t)\frac{1-t}{1-r}C'(\theta;r,t,x))]\\
&-\mathbb{E}_{t,r,x \sim  p_t(x)}[\nabla_\theta \tilde{x}_{t\to r}^\theta(x)(\tilde{x}_{t\to r}^\theta(x) -(\tilde{x}_{t\to t+\Delta t}^\theta(x)+(r-t-\Delta t)\frac{1-t}{1-r}D'(\theta;r,t,x))]\\
&= \mathbb{E}_{t,r,x \sim  p_t(x)}[\nabla_\theta \tilde{x}_{t\to r}^\theta(x)((\tilde{x}_{t\to t+\Delta t}^\theta(x)-\tilde{x}_{t\to t}^\theta(x))+(r-t-\Delta t)\frac{1-t}{1-r}(D'(\theta;r,t,x)-C'(\theta;r,t,x)))]\\
\end{aligned}
\end{equation}

We first bound the difference $D'(\theta; r,t,x)-C'(\theta; r,t,x)$. By definition,
\begin{equation}
\begin{aligned}
\|D'(\theta;r,t,x)- C'(\theta;r,t,x)\|  &=\|\frac{\tilde{x}_{t+\Delta t\to r}^\theta(x')-\tilde{x}_{t\to r}^\theta(x)}{\Delta t}- \frac{\tilde{x}_{t+\Delta t\to r}^\theta(x'')-\tilde{x}_{t\to r}^\theta(x)}{\Delta t}\|\\
&= \frac{1}{\Delta t}\|\tilde{x}_{t+\Delta t\to r}^\theta(x') - \tilde{x}_{t+\Delta t\to r}^\theta(x'')\|\\
&= \frac{1}{\Delta t}\|\partial_x \tilde{x}_{t+\Delta t\to r}^\theta(x')(x''-x')\|\\
&= \frac{1}{\Delta t}\|\partial_x \tilde{x}_{t+\Delta t\to r}^\theta(x')(\frac{\tilde{x}^\theta_{t\to t+\Delta t}(x)-\tilde{x}^\theta_{t\to t}(x)}{1-t}) \Delta t\|\\
&= \|\partial_x \tilde{x}_{t+\Delta t\to r}^\theta(x')(\frac{\tilde{x}^\theta_{t\to t+\Delta t}(x)-\tilde{x}^\theta_{t\to t}(x)}{1-t})\|\\
&\le \frac{1}{1-t}\|\partial_x \tilde{x}_{t+\Delta t\to r}^\theta(x')\|_2\|\tilde{x}^\theta_{t\to t+\Delta t}(x)-\tilde{x}^\theta_{t\to t}(x)\|
\end{aligned}
\end{equation}

Next, the difference $\tilde{x}_{t\to t+\Delta t}^\theta(x) -\tilde{x}_{t\to t}^\theta(x)$  admits a first-order expansion:
\begin{equation}
    \begin{aligned}
\|\tilde{x}_{t\to t+\Delta t}^\theta(x) -\tilde{x}_{t\to t}^\theta(x) \|  &= \|\tilde{x}_{t\to t}^\theta(x)  + \Delta t\partial_s\tilde{x}^\theta_{t\to s}|_{s=t} + O(\Delta t^2)-\tilde{x}_{t\to t}^\theta(x) \|\\
&\le \Delta t \|\partial_s\tilde{x}_{t\to s}^\theta|_{s= t}\|+O(\Delta t^2)\\
\end{aligned}
\end{equation}

Combining the above estimates, we can bound $MSE(\nabla_\theta \mathcal{L}^{\tilde C'}(\theta), \nabla_\theta \mathcal{L}^{C'}(\theta))$ as
\begin{equation}
    \begin{aligned}
&MSE(\nabla_\theta \mathcal{L}^{\tilde C'}(\theta),\nabla_\theta \mathcal{L}^{C'}(\theta))\\
&\le  \frac{1}{\sqrt{m}}\mathbb{E}_{t,r,x \sim  p_t(x) }\|\nabla_\theta \tilde{x}_{t\to r}^\theta(x)((\tilde{x}_{t\to t+\Delta t}^\theta(x)-\tilde{x}_{t\to t}^\theta(x))+(r-t-\Delta t)\frac{1-t}{1-r}(D'(\theta;r,t,x)-C'(\theta;r,t,x)))\|\\
&\le  \frac{1}{\sqrt{m}}\mathbb{E}_{t,r,x \sim  p_t(x) }[\|\nabla_\theta \tilde{x}_{t\to r}^\theta(x)(\tilde{x}_{t\to t+\Delta t}^\theta(x)-\tilde{x}_{t\to t}^\theta(x))\|+\frac{1}{\sqrt{m}}(r-t-\Delta t)\frac{1-t}{1-r}\|D'(\theta;r,t,x)-C'(\theta;r,t,x)\|]\\
&\le \frac{1}{\sqrt{m}} \mathbb{E}_{t,r,x \sim  p_t(x) }[\|\nabla_\theta \tilde{x}_{t\to r}^\theta(x)\|_2\|\tilde{x}_{t\to t+\Delta t}^\theta(x)-\tilde{x}_{t\to t}^\theta(x)\|+\frac{1}{\sqrt{m}}(r-t-\Delta t)\frac{1}{1-r}\|\partial_x \tilde{x}_{t+\Delta t\to r}^\theta(x')\|_2\|\tilde{x}^\theta_{t\to t+\Delta t}(x)-\tilde{x}^\theta_{t\to t}(x)\|]\\
&\le  \frac{1}{\sqrt{m}}\mathbb{E}_{t,r,x \sim  p_t(x) }[\|\nabla_\theta \tilde{x}_{t\to r}^\theta(x)\|_2\|\tilde{x}_{t\to t+\Delta t}^\theta(x)-\tilde{x}_{t\to t}^\theta(x)\|]+\frac{1}{\sqrt{m}}\frac{1}{1-r}\mathbb{E}_{t,r,x \sim  p_t(x) }[\|\partial_x \tilde{x}_{t+\Delta t\to r}^\theta(x')\|_2\|\tilde{x}_{t\to t+\Delta t}^\theta(x)-\tilde{x}_{t\to t}^\theta(x)\|]\\
&\le  \sqrt{\mathbb{E}_{t,r,x \sim  p_t(x) }[\frac{1}{m}\|\nabla_\theta \tilde{x}_{t\to r}^\theta(x)\|_2^2]\mathbb{E}_{t,r,x \sim  p_t(x) }[\|\tilde{x}_{t\to t+\Delta t}^\theta(x)-\tilde{x}_{t\to t}^\theta(x)\|^2]}\\
&+\frac{1}{1-r}\sqrt{\mathbb{E}_{t,r,x \sim  p_t(x) }[\frac{1}{m}\|\partial_x \tilde{x}_{t+\Delta t\to r}^\theta(x')\|_2^2]\mathbb{E}_{t,r,x \sim  p_t(x) }[\|\tilde{x}_{t\to t+\Delta t}^\theta(x)-\tilde{x}_{t\to t}^\theta(x)\|^2]}\\
&\le (\sqrt{\mathbb{E}_{t,r,x \sim  p_t(x) }[\frac{1}{m}\|\nabla_\theta \tilde{x}_{t\to r}^\theta(x)\|_2^2]\mathbb{E}_{t,r,x \sim  p_t(x) }[\|\partial_s \tilde{x}_{t\to s}^\theta|_{s=t}\|^2}\\
&+\frac{1}{1-r}\sqrt{\mathbb{E}_{t,r,x \sim  p_t(x) }[\frac{1}{m}\|\partial_x \tilde{x}_{t+\Delta t\to r}^\theta(x')\|_2^2]\mathbb{E}_{t,r,x \sim  p_t(x) }[\|\partial_s \tilde{x}_{t\to s}^\theta|_{s=t}\|^2]})\Delta t + O(\Delta t^2)\\
& =  (M_gM_t + M_x M_t)\Delta t + O(\Delta t^2)
\end{aligned}
\end{equation}
where $M'_g = \sqrt{\mathbb{E}_{t,r,x \sim  p_t(x) }[\frac{1}{m}\|\nabla_\theta \tilde{x}_{t\to r}^\theta(x)\|_2^2]} < +\infty$, $M_x = \frac{1}{1-r}\sqrt{\mathbb{E}_{t,r,x \sim  p_t(x) }[\frac{1}{m}\|\partial_x \tilde{x}_{t+\Delta t\to r}^\theta(x')\|_2^2]} < +\infty$, and $M'_t = \sqrt{\mathbb{E}_{t,r,x \sim  p_t(x) }[\|\partial_s \tilde{x}_{t\to s}^\theta|_{s=t}\|^2]} < +\infty$ as \autoref{assupmption_x1}.

Combine with \autoref{lemma_for_x1}, we have
\begin{equation}
    \begin{aligned}
        MSE(\nabla_\theta L^{E'}(\theta), \nabla_\theta L^{C'}(\theta))&\le MSE(\nabla_\theta L^{E'}(\theta) - \nabla_\theta L^{\tilde C'}(\theta))+MSE(\nabla_\theta L^{\tilde C'}(\theta) - \nabla_\theta L^{C'}(\theta))\\
        &\le  M'_g \sqrt{\mathbb{E}_{t,r, x \sim  p_t(x)} [\|\tilde{x}_{t\to t}^\theta(x)-\tilde{x}_t(x)\|^2]} + (M'_gM'_t + M'_x M'_t)\Delta t + O(\Delta t^2)
    \end{aligned}
\end{equation}
\end{proof}

\section{Model Architecture and Details of Dataset, Training, Sampling and Results}

\begin{algorithm}[t]
\caption{Euler Mean Flow: Sampling\\
\footnotesize\emph{Highlighted parts are used for conditional generation.}}
\label{alg:sampling}
\begin{algorithmic}[1]
\REQUIRE parameters $\theta$, learning rate $\eta$, noise sampler $\mathcal{N}$  
\REPEAT
    \STATE Sample $x_0\sim \mathcal{N}$
    \IF{$u$-prediction}
        \STATE $x_1 = u^\theta_{0\to 1}(x,\highlight{C}) + x$\\
    \hspace{-3.7mm}\textbf{else if }{$x_1$-prediction} \textbf{then}
        \STATE $x_1 = \tilde{x}^{\theta}_{0\to 1}(x,\highlight{C})$
    \ENDIF
\UNTIL{convergence}
\end{algorithmic}
\end{algorithm}




\subsection{Algorithm Details} \label{sec:alg_details}

\paragraph{Classifier-Free Guidance (CFG)}  For conditional generation, we follow \cite{geng2025mean} and apply classifier-free guidance (CFG) during training by modifying the conditional field.  Specifically, $u(x|x_1)$ is replaced by $wu(x | x_1) + (1 - w - k)u^\theta_{t \to t}(x_t, C_0) + ku^\theta_{t \to t}(x_t, C)$, where $C$ is the label of $x_1$ and $C_0$ denotes the null label. The effective guidance scale is $w' = \frac{w}{1 - k}$.  Unconditional capability for $u^\theta_{t \to t}(x_t, C_0)$ is enabled by dropping labels with probability $p = 0.1$ during training.

\paragraph{Time Sampler}  Following \cite{geng2025mean}, we independently sample $t, r \sim \mathcal{T}_1$ and swap them if $t > r$, forming the sampler $\mathcal{T}$. We use $\mathcal{T}_1=\mathcal{U}[0,1]$ by default and a log-normal distribution for ImageNet.  In addition, a fraction $\alpha$ of samples is constructed with $r = t$, corresponding to training the instantaneous model, e.g., $u_{t \to t}^\theta$. This ensures that the validity condition $\|u^\theta_{t\to t}(x) - u_t(x)\| \to 0$ required by \autoref{thm:validity} for $\mathcal{L}^{EFM}$ is satisfied. 

\paragraph{Adaptive Loss}  For Training stability, we follow \cite{geng2025mean} and adopt an adaptive loss \cite{geng2024consistency} to reweight loss as $w\|\Delta\|_2^2$, $w = \frac{1}{(\|\Delta\|_2^2 + c)^p}$ to stabilize learning, where $\Delta$ denotes the discrepancy in the loss.

\begin{figure}[t]
    \centering
    \includegraphics[width=0.99\linewidth]{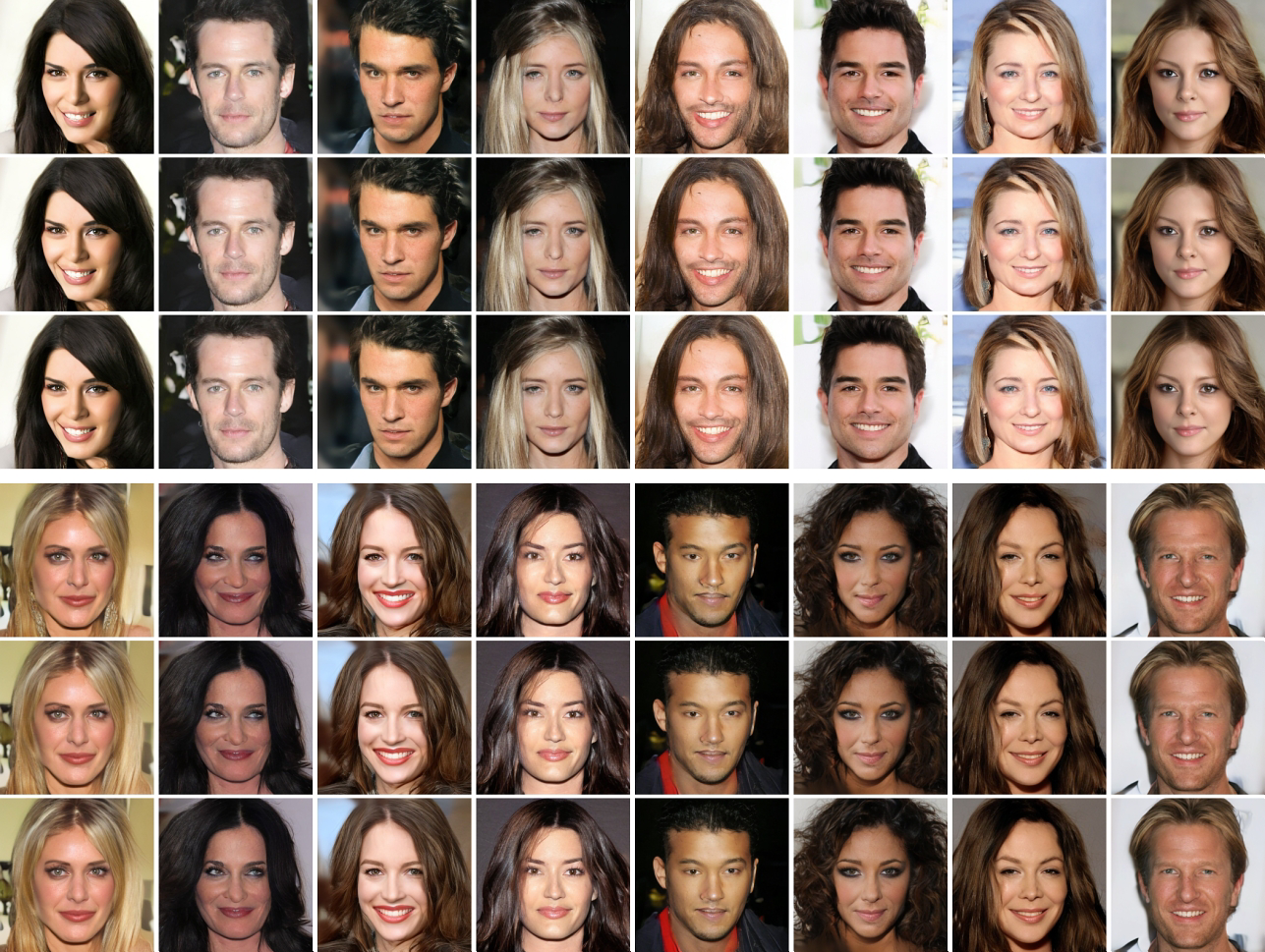}
    \caption{Latent image unconditional generation result trained on CelebA-HQ dataset~\cite{liu2015faceattributes}. First, second and third rows shows 1-step, 2-steps and 4-steps generation respectively.}
    \label{fig:latent_celeb}
\end{figure}

\begin{figure}[t]
    \centering
    \includegraphics[width=0.99\linewidth]{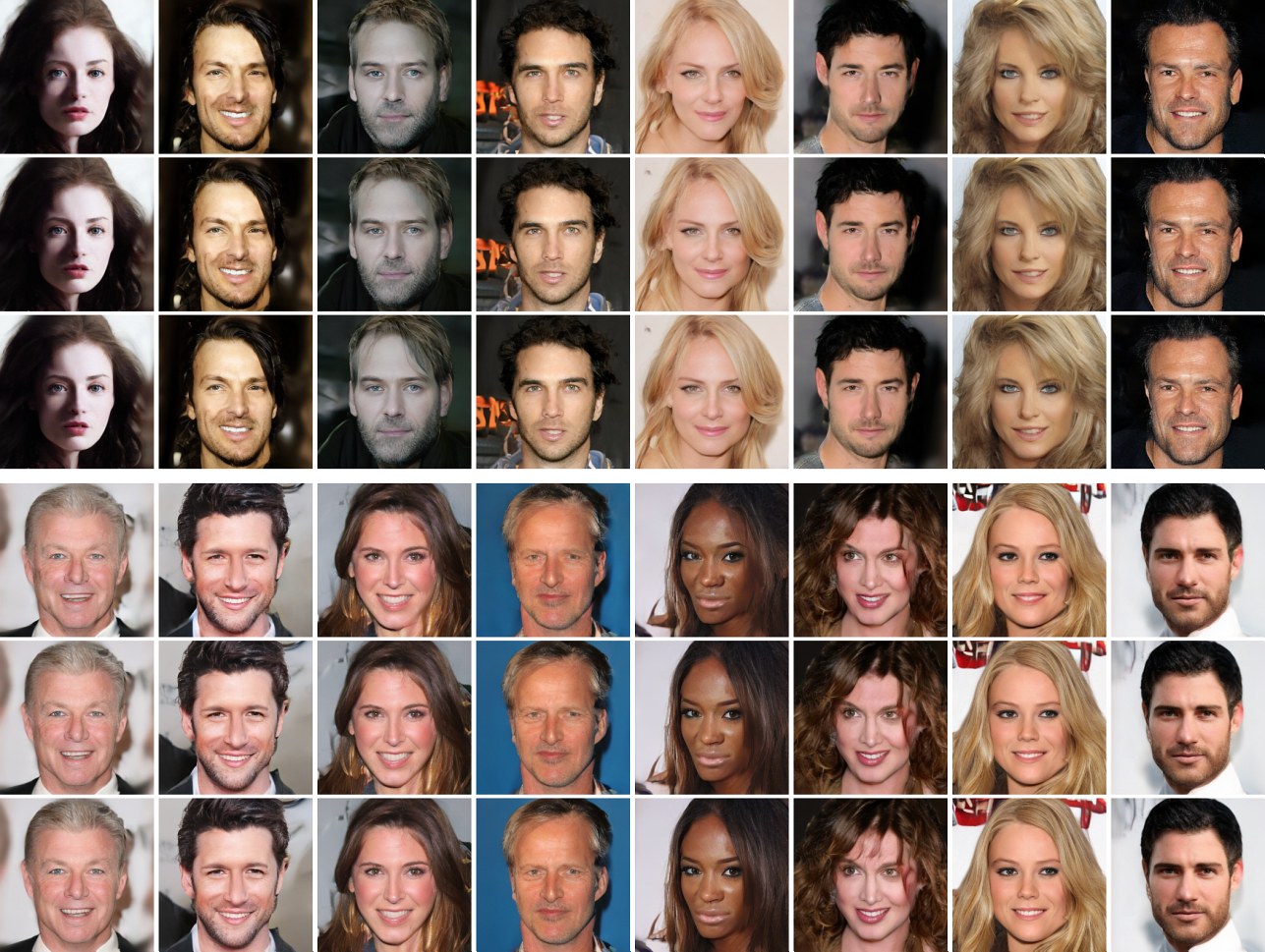}
    \caption{Latent image unconditional generation result trained on CelebA-HQ dataset~\cite{liu2015faceattributes}. First, second and third rows shows 1-step, 2-steps and 4-steps generation respectively.}
    \label{fig:latent_celeb2}
\end{figure}

\begin{figure}[t]
    \centering
    \includegraphics[width=0.99\linewidth]{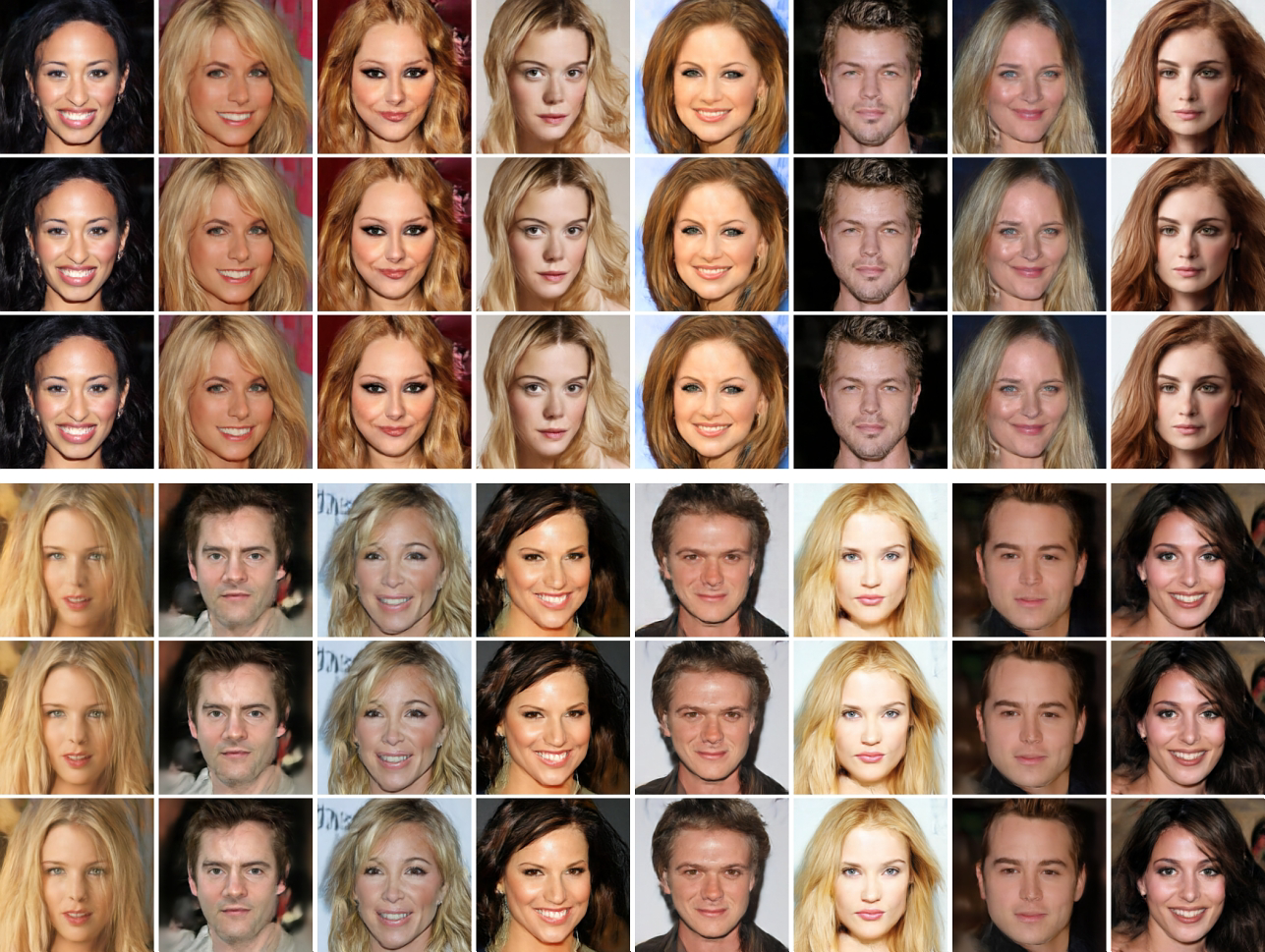}
    \caption{JiT image unconditional generation result trained on CelebA-HQ dataset~\cite{liu2015faceattributes}. First, second and third rows shows 1-step, 2-steps and 4-steps generation respectively.}
    \label{fig:jit_celeb1}
\end{figure}

\begin{figure}[t]
    \centering
    \includegraphics[width=0.99\linewidth]{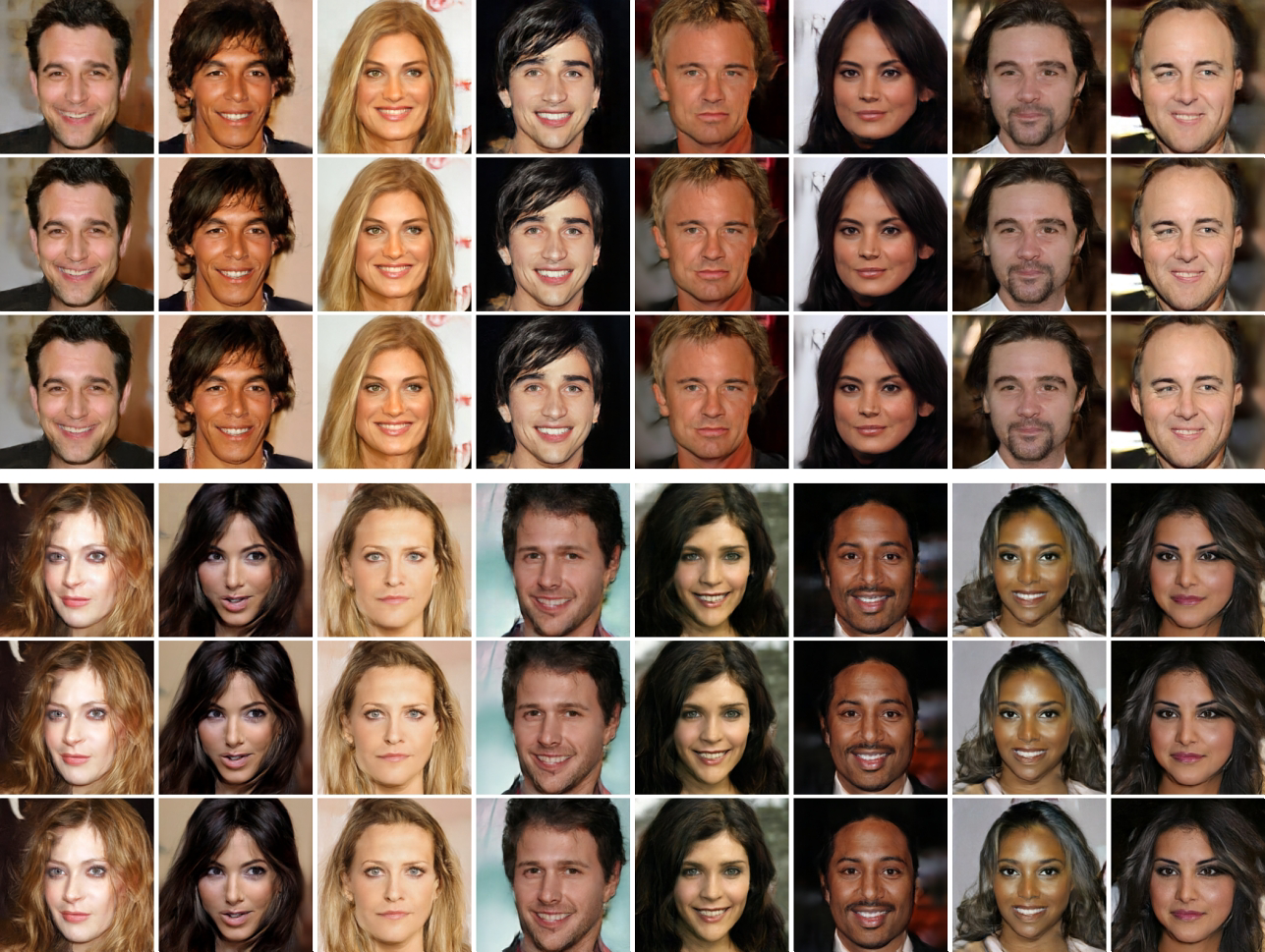}
    \caption{JiT image unconditional generation result trained on CelebA-HQ dataset~\cite{liu2015faceattributes}. First, second and third rows shows 1-step, 2-steps and 4-steps generation respectively.}
    \label{fig:jit_celeb2}
\end{figure}

\begin{figure*}[t]
    \centering
    \includegraphics[width=0.99\linewidth]{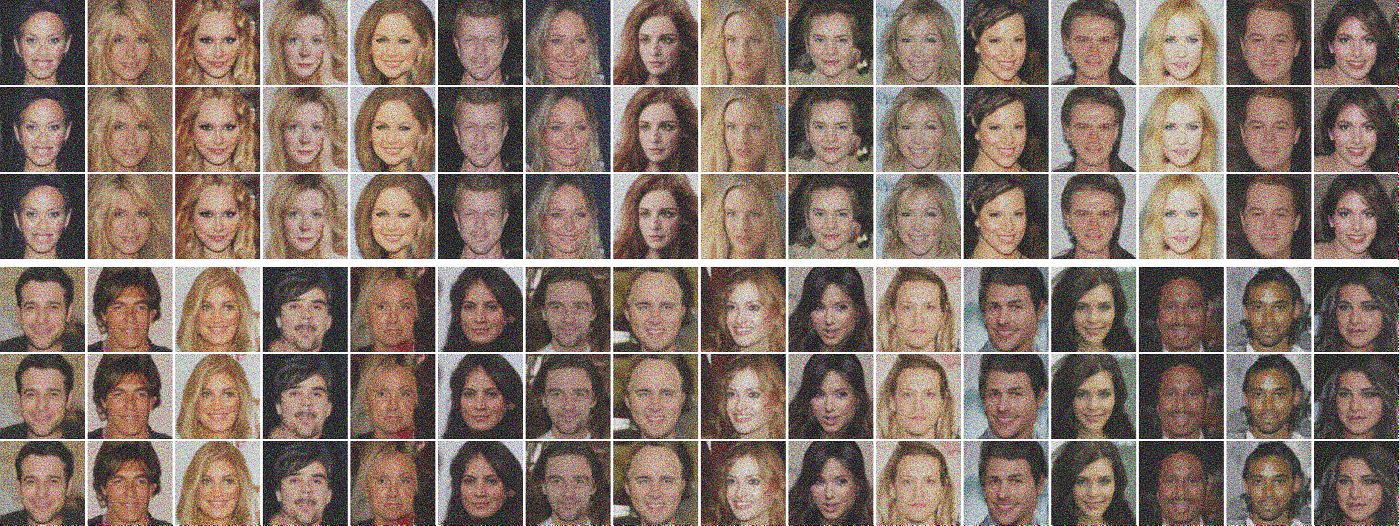}
    \caption{We show u-prediction EMF will fail to generate clean images}
    \label{fig:jit_u_celeb}
\end{figure*}


\subsection{Latent Space Image Generation} \label{sec:latent_image_generation}

\begin{table}[t]
\centering
\small
\begin{minipage}[t]{0.33\textwidth}
\centering
\setlength{\tabcolsep}{8pt}
\renewcommand{\arraystretch}{1.2}
\begin{tabular}{|l|l|}
\hline
Batch Size & 64 \\
Training Steps & 400K \\
Classifier Free Guidance & - \\
Class Dropout Probability & -\\
EMA Ratio & 0.9999 \\ \hline
Optimizer & Adam \\
Learning Rate & 1e-4\\
Weight Decay &  0 \\
Patch Size & 2 \\
Backbone & DiT-B/2\\
\hline
\end{tabular}
\caption{Latent-Based CelebA-HQ.}
\label{tab:celeba_latent_hyperparams}
\end{minipage}
\begin{minipage}[t]{0.33\textwidth}
\centering
\setlength{\tabcolsep}{8pt}
\renewcommand{\arraystretch}{1.2}
\begin{tabular}{|l|l|}
\hline
Batch Size & 256 \\
Training Steps & 800K \\
Classifier Free Guidance & 2.5 \\
Class Dropout Probability & 0.1\\
EMA Ratio & 0.9999 \\ \hline
Optimizer & Adam \\
Learning Rate & 1e-4\\
Weight Decay &  0 \\
Patch Size & 2 \\
Backbone & DiT-B/2\\
\hline
\end{tabular}
\caption{Latent-Based ImageNet.}
\label{tab:imagenet_latent_hyperparams}
\end{minipage}
\begin{minipage}[t]{0.33\textwidth}
\centering
\setlength{\tabcolsep}{8pt}
\renewcommand{\arraystretch}{1.2}
\begin{tabular}{|l|l|}
\hline
Batch Size & 64 \\
Training Steps & 600K \\
Classifier Free Guidance & - \\
Class Dropout Probability & -\\
EMA Ratio & 0.9999 \\ \hline
Optimizer & Adam \\
Learning Rate & 1e-4\\
Weight Decay &  0 \\
Patch Size & 16 \\
Backbone & JiT-B/16\\
\hline
\end{tabular}
\caption{Pixel-Based CelebA-HQ.}
\label{tab:pixel_hyperparams}
\end{minipage}

\end{table}

\paragraph{Model} We adopt a Diffusion Transformer (DiT) \cite{peebles2023scalable} architecture with DiT-B/2 configuration as our backbone for image generation. The input image $x$ is first encoded as a latent $z$ by a pretrained variational autoencoder (VAE) model~\cite{kingma2022auto} from Stable Diffusion~\cite{rombach2021highresolution}. For $256 \times 256 \times 3$ images, the shape of $z$ is $32 \times 32 \times 4$. The latent $z$ is first partitioned into non-overlapping patches of size $2 \times 2$, resulting in a token sequence. Each patch is linearly projected into a $D$-dimensional embedding space, where $D=768$ for DiT-B/2. The backbone consists of a stack of Transformer blocks with multi-head self-attention and MLP layers. For conditioning, we follow the AdaLN-Zero design introduced in DiT. Specifically, the time embeddings for $t$ and $r$, together with optional class embeddings for conditional generation, are first projected through a small MLP and then used to modulate the Transformer blocks via adaptive layer normalization.  See \autoref{tab:celeba_latent_hyperparams} and \autoref{tab:imagenet_latent_hyperparams} for hyperparameters in detail.

\paragraph{Datasets}
We trained DiT-B/2 on two image datasets: ImageNet-1000~\cite{deng2009imagenet} and CelebA-HQ~\cite{liu2015faceattributes}. ImageNet-1000 contains approximately 1.28M training images and 50K validation images spanning 1,000 object categories. CelebA-HQ contains 30,000 high-resolution human face images derived from CelebA. All dataset are resized to a resolution of $256\times256$. 

\paragraph{Metric}To evaluate generative performance, we generate 50K samples for each trained model and compare them against the corresponding real datasets. We report the Fréchet Inception Distance (FID)~\cite{heusel2017gans} computed using Inception-V3 features.  We follow the same evaluation protocol as in~\cite{geng2025mean} for FID computation.

\begin{table}[t]
\centering
\caption{Comparison of memory and computational cost between our method and MeanFlow for unconditional (CelebA-HQ) and conditional (ImageNet-1000) generation using the DiT-B/2 model. ``Peak'' denotes the maximum GPU memory usage during training, ``Fixed'' refers to the constant memory overhead, and aux-EMF indicates our method with a 4-block auxiliary head.  All experiments are conducted on a single H200 GPU with batch sizes of 64 for CelebA-HQ and 128 for ImageNet-1000, using EMA and AdamW optimization with mixed-precision (FP16) training in PyTorch.}
\label{tab:memory_time_comparison}
\begin{tabular}{lccccc}
\hline
\textbf{Method} & \textbf{Dataset} & \textbf{Peak Memory} & \textbf{Fixed Memory} & \textbf{Speed / Iter} & \textbf{FID} \\
\hline
MeanFlow & CelebA-HQ & 32.1GB & 2.3GB  & 151.4ms & 12.4  \\
EMF (Ours)                  & CelebA-HQ & 23.3GB & 2.3GB & 91.2 ms & \textbf{10.9} \\
aux-EMF (Ours)              & CelebA-HQ & \textbf{17.6GB}  & 2.8 GB & \textbf{84.2 ms} & 11.7 \\
MeanFlow & ImageNet &101.9GB  & 2.4GB & 400.9ms & 11.1 \\
EMF (Ours)              & ImageNet & \textbf{57.9GB} & 2.4GB  &  \textbf{198.8ms} & \textbf{7.2} \\
\hline
\end{tabular}
\end{table}  

\begin{table}[t]
\centering
\caption{
Comparison of training objectives under equivalent architecture (DiT-B) and compute.
FID-50k scores (lower is better) are shown over 128-, 4-, and 1-step denoising.  
}
\label{tab:training_objective_comparison}
\begin{tabular}{lccc ccc}
\toprule
& \multicolumn{3}{c}{CelebA-HQ-256 (unconditioned)}
& \multicolumn{3}{c}{ImageNet-256 (class conditioned)} \\
\cmidrule(lr){2-4} \cmidrule(lr){5-7}
Method
& 128-Step & 4-Step & 1-Step
& 128-Step & 4-Step & 1-Step \\
Diffusion \cite{song2021denoising}
& 23.0 & 123.4 & 132.2
& 39.7 & 464.5 & 467.2 \\
FM \cite{lipman2023flow} & 7.3 & 63.3 & 280.5
& 17.3 & 108.2 & 324.8 \\
PD \cite{salimans2022progressive} & 302.9 & 251.3 & 14.8 & 201.9 & 142.5 & 35.6 \\
CD \cite{song2023consistency}& 59.5 & 39.6 & 38.2 & 132.8 & 98.01 & 136.5 \\
Reflow \cite{liu2023flow} & 16.1 & 18.4 & 23.2 & 16.9 & 32.8 & 44.8 \\
CM \cite{song2023consistency} & 53.7 & 19.0 & 33.2 & 42.8 & 43.0 & 69.7 \\
ShortCut \cite{frans2025shortcut}& 6.9 & 13.8 & 20.5 & 15.5 & 28.3 & 40.3 \\
MF \cite{geng2025mean} & -- & -- & 12.4 & 6.4 & 7.1 & 11.1 \\
EMF (Ours) & -- & -- & 10.8 & 5.6 & 6.9 & 7.2 \\
\bottomrule
\end{tabular}
\end{table}

\paragraph{Result}
For qualitative evaluation, we present unconditional 1-, 2-, and 4-step generation results on CelebA-HQ in Figures~\ref{fig:latent_celeb} and~\ref{fig:latent_celeb2}. We also show conditional 1-, 2-, and 4-step generation results on ImageNet in Figures~\ref{fig:imagenet_part1}--\ref{fig:imagenet_part3}, using the image category as the guidance condition. In both cases, we see our 1-step generation result give reasonably good result comparing against few-step generations. For quantitative comparison, we report FID scores for both datasets in Table~\ref{tab:training_objective_comparison} where our method achieves the best result overall.

\subsection{Pixel Space Image Generation} \label{sec:pixel_image_generation}

\paragraph{Model} We conduct pixel-space image generation experiments using the Just Image Transformers (JiT) architecture~\cite{li2025back}, training the JiT-B/16 model on CelebA-HQ. Conceptually, JiT is a plain Vision Transformer (ViT) applied to patches of pixels without latent encoding.  An input image of resolution $H\times W\times C$ is divided into non-overlapping $p \times p$ patches, producing a sequence of patch tokens. To ensure sufficient capacity to model high-dimensional images, JiT uses large patch size ($p=16$) to balance spatial token length and per-token dimensionality. Each patch token, of dimensionality $p^2C$, is linearly embedded and combined with sinusoidal positional embeddings before being processed by a stack of Transformer blocks. For conditioning on time $t$, $r$ and class information (when applicable), JiT uses AdaLN-Zero similar to DiT. The output tokens are projected back to patch RGB values to reconstruct the full high-resolution image. See \autoref{tab:pixel_hyperparams} for hyperparameters in detail. 

\begin{table}[t]
\centering
\caption{
Comparison of pixel-space generative methods under equivalent architectures (DiT-B) and computational budgets on CelebA-HQ-256.
FID-50k scores (lower is better) are reported for 2-step and 1-step denoising, while FID-10k scores are used for the 128-step setting.
}
\label{tab:pixel_space_method_comparison_celeba}
\begin{tabular}{lcccc}
\toprule
Method & Variant & 128-Step & 2-Step & 1-Step \\
\midrule
JiT \cite{li2025back} & $u$-pred, $u$-loss
& 339.7 & 384.4 & 407.0 \\
JiT \cite{li2025back} & $x_1$-pred, $x_1$-loss
& 27.9 & 441.6 & 440.1 \\
MeanFlow \cite{li2025back} & $x_1$-pred, $x_1$-loss
& 42.2 & 41.5 & 56.8 \\
EMF (Ours) & $u$-pred, $u$-loss
& 329.4 & 323.3 & 324.6 \\
EMF (Ours) & $x_1$-pred, $x_1$-loss
& 21.4 & 26.4 & 30.6\\
EMF (Ours) & $x_1$-pred, $u$-loss
& 35.8 & 34.8 & 36.3 \\
\bottomrule
\end{tabular}%
\end{table}

\paragraph{Result}

Unconditional pixel-space generation results using the JiT architecture combined with our EMF method, trained on CelebA-HQ with the $x_1$-prediction objective, are shown in Figures~\ref{fig:jit_celeb1} and~\ref{fig:jit_celeb2}. Our method maintains consistent visual quality across 1-, 2-, and 4-step sampling. We further verify that the $x_1$-prediction objective is essential: when trained with the $u$-prediction objective, the generated images remain noisy even as the number of inference steps increases (\autoref{fig:jit_u_celeb}). Quantitative results are reported in Table~\ref{tab:pixel_space_method_comparison_celeba}.

\subsection{Functional Image Generation}\label{sec:functional}
\paragraph{Model} We build upon the Infty-Diff architecture~\cite{bond2024infty}, which models both inputs and outputs as continuous image functions represented by randomly sampled pixel coordinates. As shown in \autoref{fig:infty-diff}, the network adopts a hybrid sparse–dense design composed of a Sparse Neural Operator and a Dense U-Net to support learning from sparse functional observations. The Sparse Neural Operator first embeds irregularly sampled pixels into feature vectors. These features are interpolated onto a coarse dense grid using KNN interpolation with neighborhood size 3, enabling subsequent dense processing. A U-Net is then applied on a $128\times128$ grid for $256\times256$ images, with 128 base channels and five resolution stages with channel multipliers $[1,2,4,8,8]$. Self-attention blocks are inserted at the $16\times16$ and $8\times8$ resolutions to enhance global context modeling. The dense features are subsequently mapped back to the original coordinate set via inverse KNN interpolation and further refined by a second Sparse Neural Operator, with a residual connection applied to the initial sparse features.

\newpage
\begin{wrapfigure}{r}{0.50\textwidth}
  \centering
  \vspace{-0.5em}
  \includegraphics[width=\linewidth]{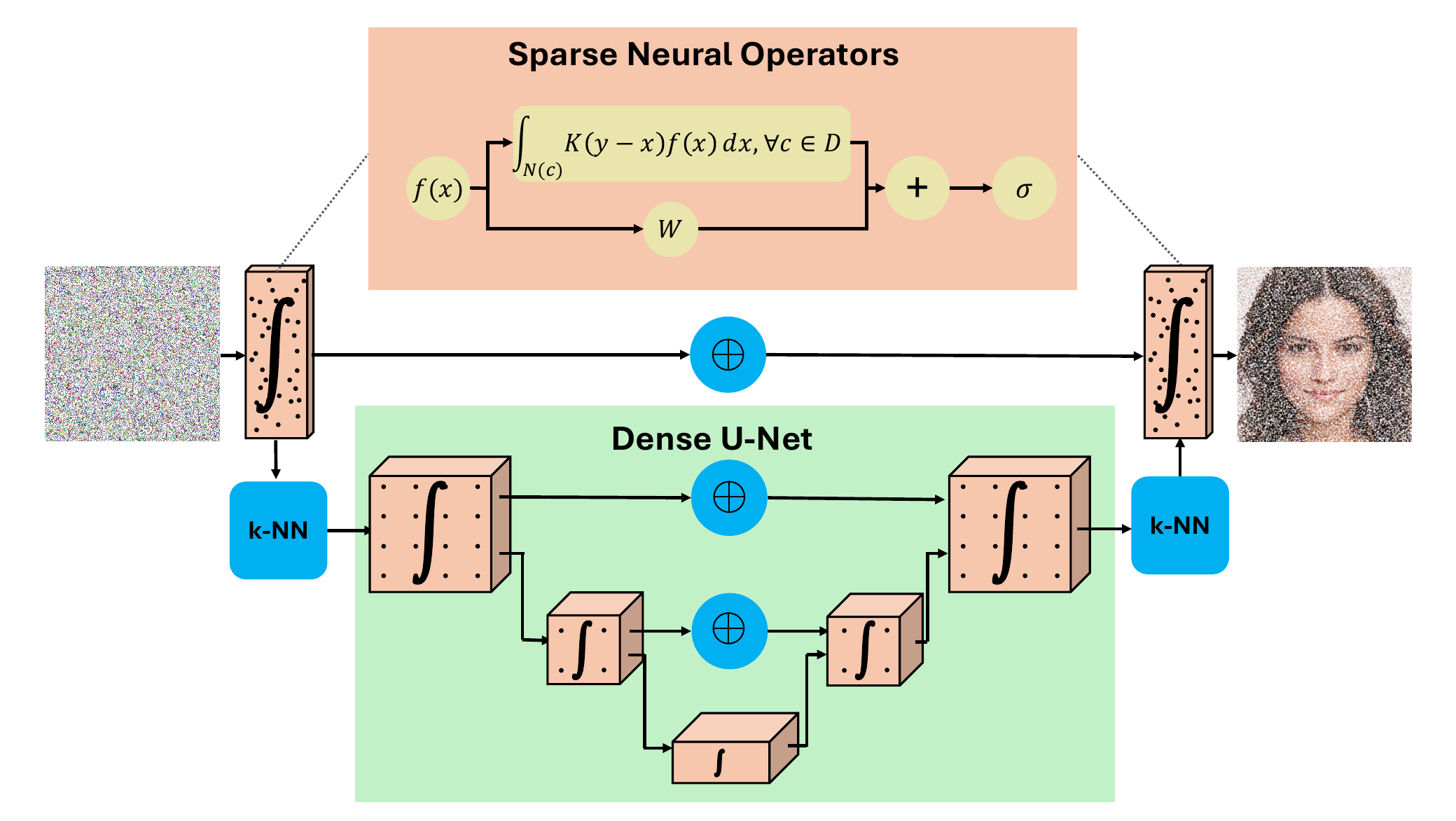}
  \caption{The network design of Infty-Diff.}
  \label{fig:infty-diff}
  \vspace{-0.8em}
\end{wrapfigure}
Following Infty-Diff, we implement the Sparse Neural Operator using linear-kernel sparse convolutions with TorchSparse for efficiency. Each Sparse Operator module is composed of five convolutional layers in sequence. It begins with a pointwise convolution, followed by three linear-kernel operator layers. Each operator layer applies a sparse depthwise convolution with 64 channels and a kernel size of 7 (for $256\times256$-resolution images), and is followed by two pointwise convolutions with 128 hidden channels to mix channel-wise information. A final pointwise convolution projects the features to the output dimension. Time conditioning is incorporated in both the sparse and dense components using sinusoidal positional embeddings~\cite{vaswani2017attention}, following the Mean Flow formulation~\cite{geng2025mean}. The embeddings of $t$ and $r$ are summed and injected in place of the original time conditioning used in Infty-Diff. The resulting model contains approximately 420M trainable parameters.

\paragraph{Dataset} We conduct experiments on two image datasets: FFHQ~\cite{karras2019style} and CelebA-HQ. FFHQ contains 70000 diverse face images. All images are resized to $256\times256$. Following Infty-Diff~\cite{bond2024infty}, we randomly sample 25\% of image pixels during training to evaluate functional-based generation.

\begin{figure*}[t]
    \centering
    \includegraphics[width=0.99\linewidth]{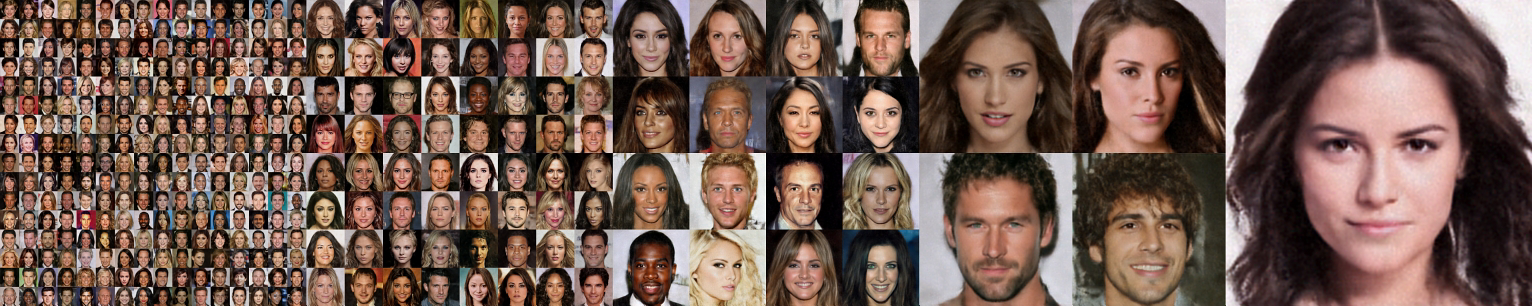}
    \caption{1-step functional generation for images on CelebA-HQ~\cite{liu2015faceattributes} dataset. From left to right, we show images generated at 64, 128, 256, 512 and 1024n resolution respectively.}
    \label{fig:celeb_functional}
\end{figure*}

\begin{figure}[t]
    \centering
    \includegraphics[width=0.99\linewidth]{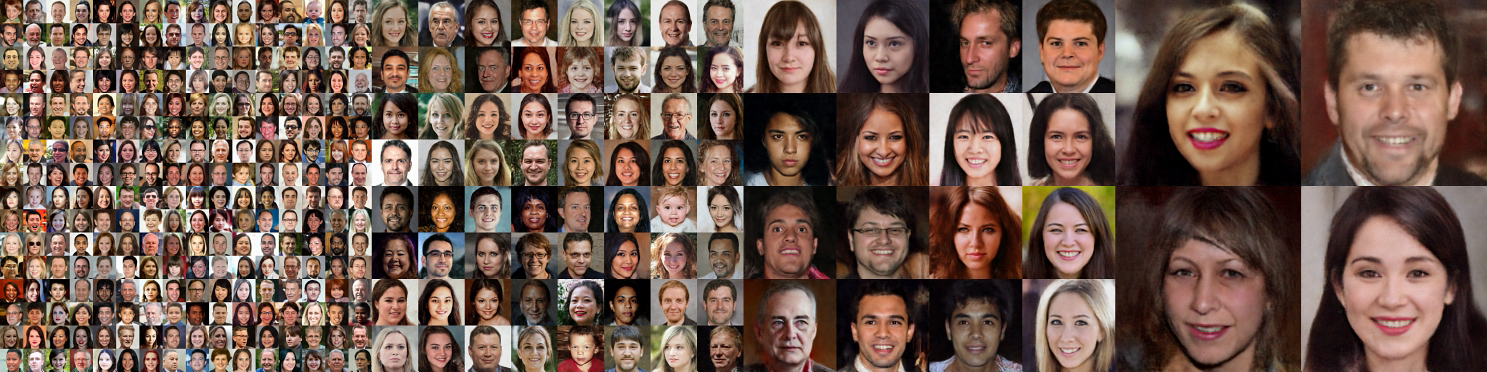}
    \caption{1-step functional generation for images on FFHQ~\cite{karras2019style} dataset. From left to right, we show images generated at 64, 128, 256 and 512 resolution respectively.}
    \label{fig:ffhjq}
\end{figure}

\begin{table}[t]
\centering
\small
\setlength{\tabcolsep}{2pt}
\renewcommand{\arraystretch}{0.9}
\caption{
Evaluation of FID$_\text{CLIP}$ \cite{kynkaanniemi2023role} against previous infinite-dimensional approaches trained on coordinate subsets.
The best results for the 1-step and multi-step settings are highlighted in bold. $^*$ indicates missing entries, where FID scores are reported instead of FID$_{\text{CLIP}}$.
}
\label{tab:unconditional_comparison}
{%
\begin{tabular}{lccccc}
\toprule
\textbf{Method} & \textbf{Step} 
& \textbf{CelebAHQ-64} 
& \textbf{CelebAHQ-128} 
& \textbf{FFHQ-256} \\
\midrule

D2F \cite{dupont2022data}
& 1 & 40.4$^*$ & -- & -- \\

GEM \cite{du2021learning}
& 1 & 14.65 & 23.73 & 35.62 \\

GASP \cite{dupont2022generative}
& 1 & 9.29 & 27.31 & 24.37 \\

EMF (Ours)
& 1 
& \textbf{4.32} 
& \textbf{8.86} 
& \textbf{15.0} 
\\

$\infty$-Diff \cite{bond2024infty}
& 100 
& \textbf{4.57} 
& \textbf{3.02} 
& \textbf{3.87} 
\\

DPF \cite{zhuang2023diffusion}
& 1000 
& 13.21$^*$ & -- & -- \\

\bottomrule
\end{tabular}%
}
\end{table}

\paragraph{Result} For 2D functional image generation, we present qualitative results in Figure~\ref{fig:ffhjq} for 1-step unconditional generation on FFHQ, and in Figure~\ref{fig:celeb_functional} for 1-step unconditional generation on CelebA-HQ. Following Infty-Diff, we use FID$_\text{CLIP}$ \cite{kynkaanniemi2023role} metric to assess function-based generative methods. Because our model generates a continuous function that represents an image, the output is resolution-agnostic. We therefore visualize samples at multiple resolutions, ranging from $64$ to $512$ on FFHQ and from $64$ to $1024$ on CelebA-HQ. For quantitative evaluation, Table~\ref{tab:unconditional_comparison} compares our method against prior approaches; despite using a single sampling step, our results are comparable to multi-step methods such as $\infty$-Diff.
\newpage
\subsection{SDF Generation} \label{sec:sdf_generation}

\begin{figure*}[t]
    \centering
    \includegraphics[width=0.99\linewidth]{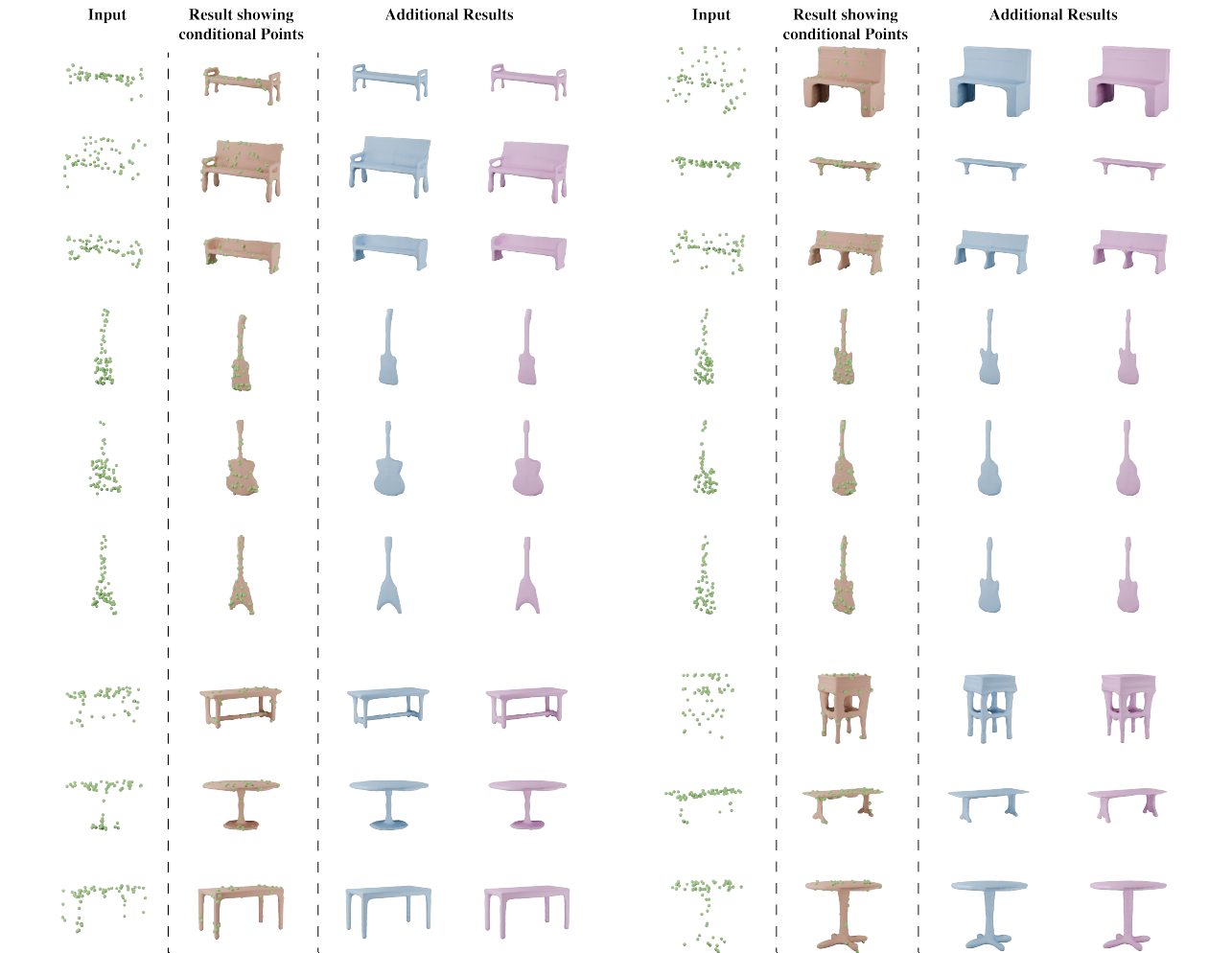}
    \caption{We show 1-step functional SDF generation results. The leftmost column visualizes the conditioning points, overlaid on the first generated mesh shown in the second column. Additional generated samples are presented in the remaining columns.}
    \label{fig:sdf_part1}
\end{figure*}

\begin{figure*}[t]
    \centering
    \includegraphics[width=0.99\linewidth]{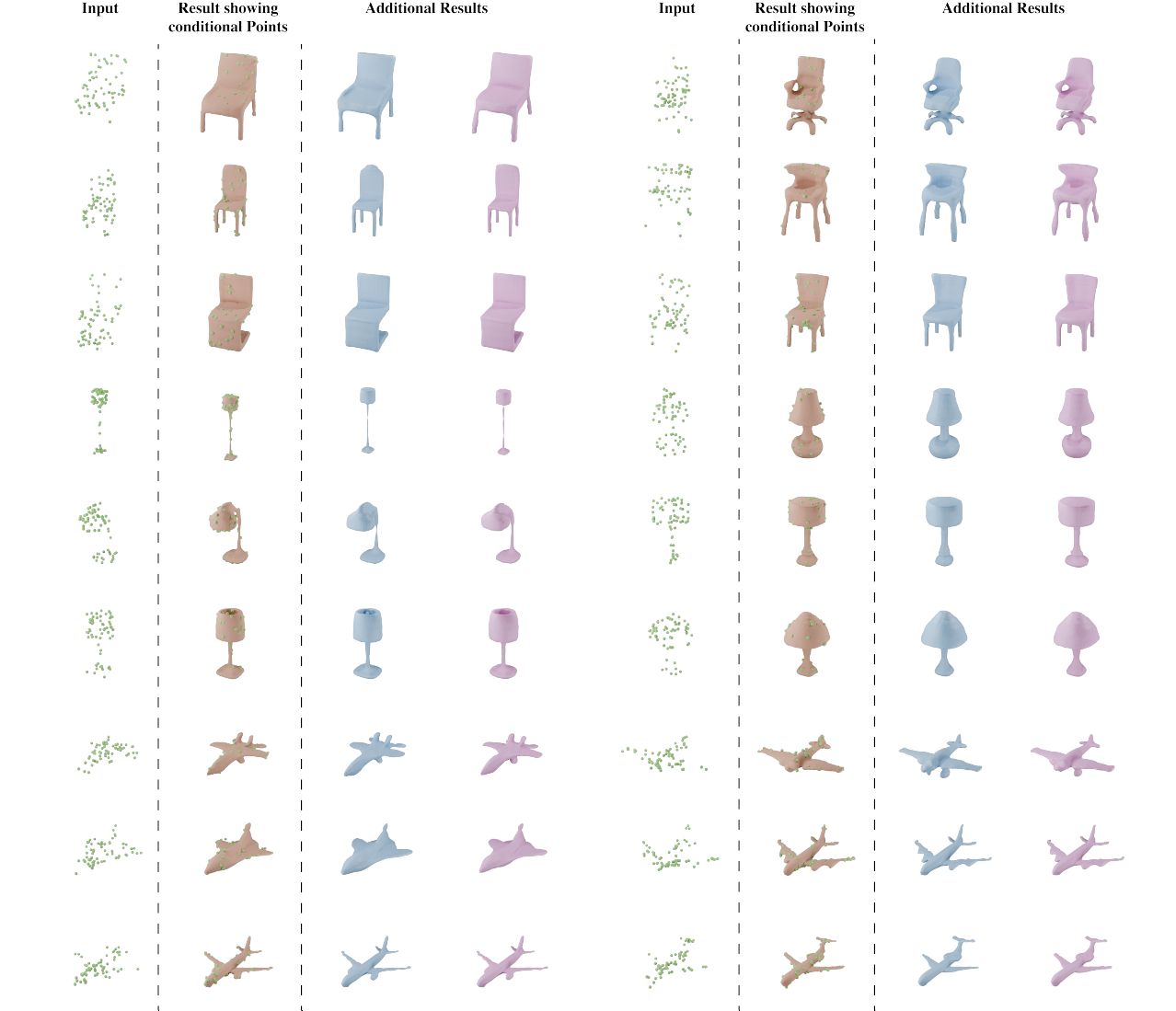}
    \caption{We show 1-step functional SDF generation results. The leftmost column visualizes the conditioning points, overlaid on the first generated mesh shown in the second column. Additional generated samples are presented in the remaining columns.}
    \label{fig:sdf_part2}
\end{figure*}


\paragraph{Model} 
\begin{wrapfigure}{r}{0.36\textwidth}
  \centering
  \vspace{-0.8em}
  \includegraphics[width=\linewidth]{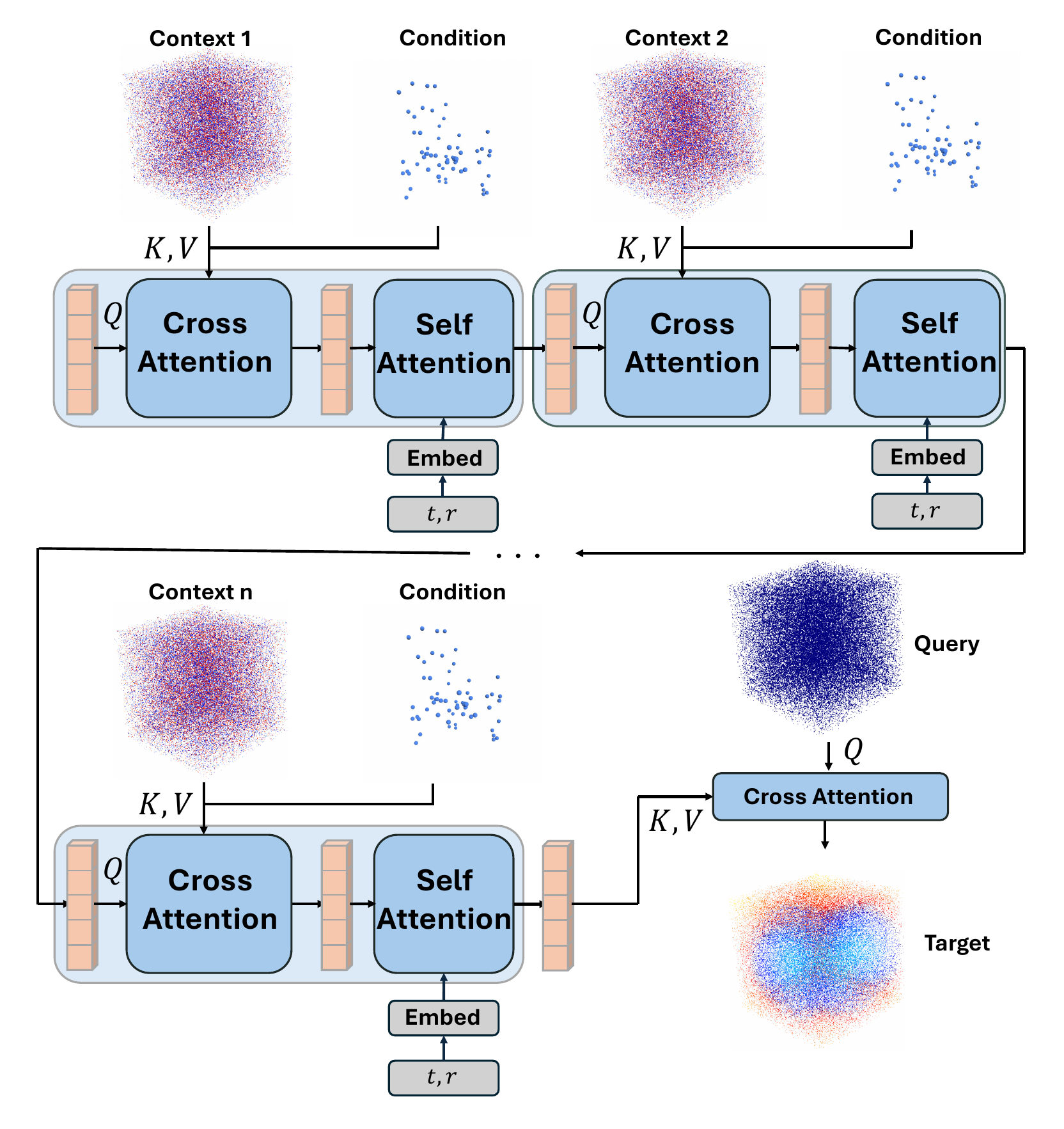}
  \caption{The network design of Functional Diffusion.}
  \label{fig:functional-diffusion}
  \vspace{-0.8em}
\end{wrapfigure}
We adopt the Functional Diffusion architecture~\cite{zhang2024functional} for signed distance field (SDF) generation. A SDF represents a shape as a continuous scalar function whose value at each spatial location equals the signed distance to the closest surface, with the sign indicating whether the point lies inside or outside the shape. Both inputs and outputs of the model are specified by randomly sampled points and their corresponding function values, rather than fixed grids. 
Concretely, the input function $f_c$ is given by a set of context points $\{x_c^i\}_{i=1}^n$ with values $v_c^i = f_c(x_c^i)$, while the output function $f_q$ is queried at locations $\{x_q^j\}_{j=1}^m$ to produce values $\{v_q^j\}_{j=1}^m$. This formulation naturally supports mismatched context and query sets, enabling flexible functional mappings. Following~\cite{zhang2024functional}, the context set is evenly divided into $d$ disjoint groups. 
As shown in \autoref{fig:functional-diffusion}, each group is processed sequentially by an attention block composed of cross-attention followed by self-attention. The cross-attention uses a latent vector to aggregate information from each context group, where the latent is initialized as a learnable variable representing the underlying function and is propagated across blocks. Context points are embedded by combining Fourier positional encodings of spatial coordinates with embeddings of function values, and further concatenated with conditional embeddings. In our experiments, conditioning is provided by 64 partially observed surface points.

\paragraph{Dataset}
We follow the surface reconstruction setting of Functional Diffusion~\cite{zhang2024functional}, where the model reconstructs a complete surface from 64 observed points sampled on a target shape. The generative process is conditioned on these surface points and predicts the full SDF starting from noise. All experiments are conducted on the ShapeNet-CoreV2 dataset~\cite{chang2015shapenet}, which contains approximately 57000 3D models spanning 55 object categories. Using the same preprocessing pipeline as prior  work~\cite{zhang2024functional,zhang20233dshape2vecset,zhang20223dilg}, each mesh is converted into an SDF defined over the domain $[0,1]^3$. For each shape, we uniformly sample $n=49152$ points to form the context set and their SDF values, and independently sample $m=2048$ points as query locations with corresponding SDF values. In addition, a separate set of 64 surface points near the zero-level set is sampled and used as conditional input.

\paragraph{Metrics}
We evaluate reconstructed SDF quality using Chamfer Distance, F-score, and Boundary Loss, following prior work~\cite{zhang2024functional,zhang20233dshape2vecset,zhang20223dilg}. Chamfer Distance (CD) and F-score are computed by uniformly sampling 50K points from each reconstructed surface. F-Score evaluates surface reconstruction quality by measuring the precision–recall trade-off between generated and ground-truth surface points under a fixed distance threshold. It quantifies how well the predicted surface aligns with the true surface by penalizing both missing regions and spurious geometry. Boundary Loss measures SDF accuracy near the surface boundary and is defined as $\text{Boundary}(f) = \frac{1}{|\mathcal{E}_\Omega|} \sum_{i \in \mathcal{E}_\Omega}| f(\mathbf{x}_i) - q(\mathbf{x}_i) |^2$,
where $\mathcal{E}_\Omega$ denotes points sampled near the zero-level set, $f$ is the predicted SDF, and $q$ is the ground-truth SDF. This metric is computed using 100K boundary samples. We use the same train/test split as~\cite{zhang2024functional} for our experiment.

\begin{table}[h]
\centering
\caption{Quantitative comparison of reconstruction quality. The model is trained on the ShapeNet dataset, where the conditional input consists of 64 points sampled from the target surface. The model is required to reconstruct the surface based on these 64 points. Step denotes the number of inference steps.}
\label{tab:reconstruction_metrics}

{\footnotesize
\begin{tabular}{lcccc}
\toprule
Method & Step & Chamfer $\downarrow$ & F-Score $\uparrow$ & Boundary $\downarrow$ \\
\midrule
3DS2VS \cite{zhang20233dshape2vecset} & 18 & 0.144 & 0.608 & 0.016 \\
FD \cite{zhang2024functional} & 64 & 0.101 & 0.707 & 0.012 \\
MF \cite{li2025functional} & 1 & 0.060 & 0.584 & 0.011 \\
EMF (Ours) & 1 & 0.046 & 0.674 & 0.011 \\
\bottomrule
\end{tabular}

\vspace{1mm}
\raggedleft
($\downarrow$ lower is better; $\uparrow$ higher is better.)
}
\end{table}

\paragraph{Result}
For SDF generation, we train our model on ShapeNet using only 64 surface points as conditioning input. This sparse-conditioning setting is challenging, particularly for single-step generation. Qualitative results for 1-step conditional generation are shown in Figures~\ref{fig:sdf_part1} and~\ref{fig:sdf_part2}. For quantitative evaluation, Table~\ref{tab:reconstruction_metrics} reports 3D reconstruction metrics; our method achieves quality comparable to multi-step approaches and consistently outperforms the original mean flow method.

\subsection{Point Cloud Generation}\label{sec:point_cloud}

\begin{figure*}[t]
    \centering
    \includegraphics[width=0.99\linewidth]{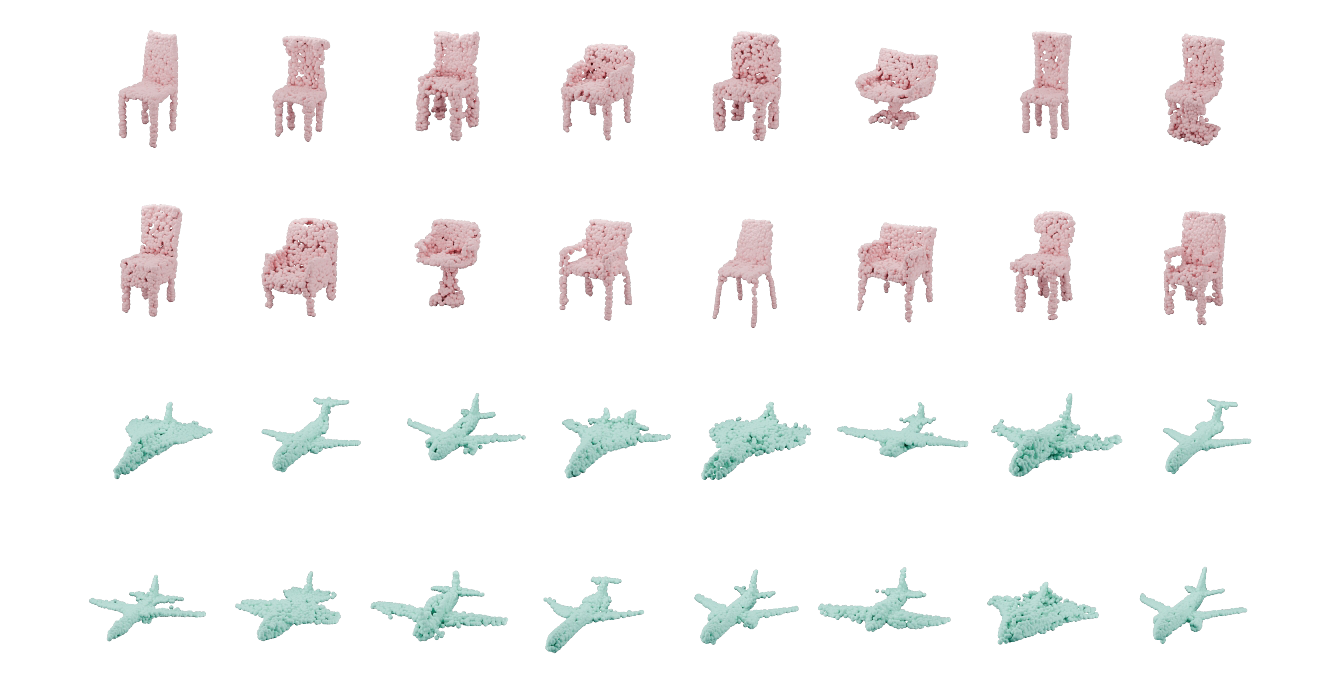}
    \caption{1-step point cloud generation result trained on ShapeNet\cite{chang2015shapenet} dataset.}
    \label{fig:pc_efm}
\end{figure*}

\paragraph{Model} We adopt the Latent Point Diffusion Model (LION) architecture~\cite{vahdat2022lion} for point cloud generation, which performs generative modeling in a structured latent space derived from point clouds. As shown \autoref{fig:lion} in The model builds upon a variational autoencoder that encodes each shape into a hierarchical latent representation consisting of a global shape latent and a point-structured latent point cloud, capturing coarse structure and fine-grained geometry, respectively. 
\begin{wrapfigure}{r}{0.62\textwidth}
  \centering
  \vspace{-0.9em}
  \includegraphics[width=\linewidth]{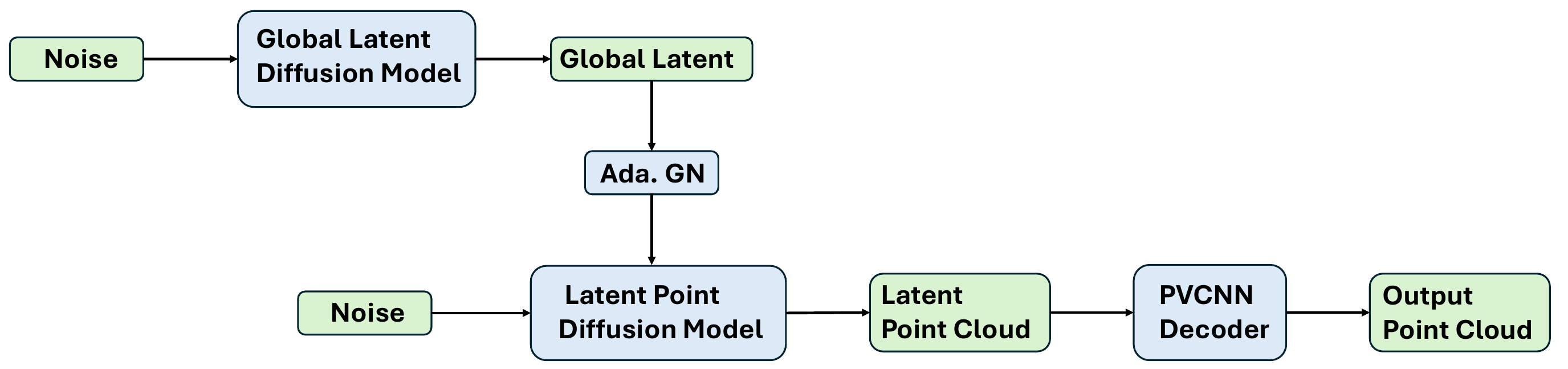}
  \caption{The network design of LION.}
  \label{fig:lion}
  \vspace{-0.9em}
\end{wrapfigure}
The encoder, decoder, and latent point diffusion modules are implemented with Point-Voxel CNNs (PVCNNs)~\cite{liu2019point}, following the design of~\cite{zhou20213d}. The global latent diffusion model is parameterized by a ResNet-style network composed of fully connected layers, implemented as $1\times1$ convolutions. Conditioning on the global latent is injected into the PVCNN layers to generate point-structured latent point cloud through adaptive Group Normalization. For modeling the point-structured latent representations, we further adopt a modified DiT-3D backbone based on~\cite{wang2025pdt}, which provides stronger modeling capacity and improved scalability. Finally, the decoder maps the generated latent representation back to the 3D space, yielding the output point cloud.
\paragraph{Dataset}
We conduct experiments on the ShapeNet dataset~\cite{chang2015shapenet} using the preprocessing and data splits provided by PointFlow~\cite{yang2019pointflow}. We focus our evaluation on two object categories: airplanes and chairs. Each shape in the processed dataset contains 15,000 points, from which 2,048 points are randomly sampled at every training iteration. The training set includes 2,832 airplane shapes and 4,612 chair shapes. For evaluation, we report sample quality metrics against the corresponding reference sets, which comprise 405 for airplanes and 662 for chairs. Following PointFlow, all shapes are normalized using a global normalization scheme, where the mean is computed per axis over the entire training set and a single standard deviation is applied across all axes.

\paragraph{Metrics} To assess the performance of point cloud generative models at the distribution level, we compare a generated set $S_g$ with a reference set $S_r$ with Coverage (COV) and 1-Nearest Neighbor Accuracy (1-NNA), both of which rely on a pairwise distance defined between point clouds.

Coverage (COV) measures the extent to which the generated samples span the variability of the reference distribution. Specifically, each reference shape is associated with its closest counterpart in the generated set, and COV is defined as the fraction of generated shapes that are selected as nearest neighbors by at least one reference shape. As a result, COV primarily reflects sample diversity and sensitivity to mode collapse, while being largely agnostic to the fidelity of individual generated point clouds.

\begin{equation}
\mathrm{COV}(S_g, S_r)
=
\frac{\left| \left\{ \arg\min_{Y \in S_r} D(X, Y) \,\middle|\, X \in S_g \right\} \right|}
{\,|S_r|\,}
\end{equation}

1-Nearest Neighbor Accuracy (1-NNA) evaluates how well the generated and reference distributions are aligned. This metric treats the union of $S_g$ and $S_r$ as a labeled dataset and computes the leave-one-out accuracy of a 1-NN classifier, where each sample is assigned the label of its nearest neighbor. 

\begin{equation}
\mathrm{1\text{-}NNA}(S_g, S_r)
=
\frac{
\sum_{X \in S_g} \mathbf{1}\!\left[ N_X \in S_g \right]
+
\sum_{Y \in S_r} \mathbf{1}\!\left[ N_Y \in S_r \right]
}
{|S_g| + |S_r|},
\end{equation}
where $N_X$ (resp., $N_Y$) denotes the nearest neighbor of $X$ (resp., $Y$) in $(S_g \cup S_r) \setminus \{X\}$.

For both COV and 1-NNA, nearest neighbors are determined using either the Chamfer Distance (CD) or the Earth Mover’s Distance (EMD). CD evaluates mutual proximity by aggregating point-to-set nearest-neighbor distances in both directions, while EMD computes the minimal transport cost between two point clouds by enforcing a one-to-one correspondence. CD and EMD are defined as:
\begin{equation}
\mathrm{CD}(X, Y)
=
\sum_{x \in X} \min_{y \in Y} \| x - y \|_2^2
+
\sum_{y \in Y} \min_{x \in X} \| x - y \|_2^2,
\end{equation}
\begin{equation}
    \mathrm{EMD}(X, Y)
=
\min_{\gamma : X \to Y}
\sum_{x \in X} \| x - \gamma(x) \|_2,
\end{equation}
where $X$ and $Y$ denote two point clouds with the same cardinality, $\|\cdot\|_2$ is the Euclidean norm, and $\gamma$ is a bijection between points in $X$ and $Y$.

\begin{table}[t]
\centering
\caption{
Unconditional generation results on the Airplane and Chair categories at a resolution of 2048 points.
We report one-nearest neighbor accuracy (1-NNA) and coverage (COV) under Chamfer Distance (CD) and Earth Mover's Distance (EMD).
For 1-NNA, lower is better ($\downarrow$), while for COV, higher is better ($\uparrow$).  Bold and underlined numbers indicate the best and second-best performance for each metric under the one-step and multi-step settings, respectively.  Global normalization is applied to both training and test sets following LION \cite{vahdat2022lion}.
}
\label{tab:shape_generation_2048}
\begin{tabular}{l c cccc cccc}
\toprule
Method & Steps
& \multicolumn{4}{c}{Airplane}
& \multicolumn{4}{c}{Chair} \\

\cmidrule(lr){3-6} \cmidrule(lr){7-10}

&
& \multicolumn{2}{c}{1-NNA$\downarrow$}
& \multicolumn{2}{c}{COV$\uparrow$}
& \multicolumn{2}{c}{1-NNA$\downarrow$}
& \multicolumn{2}{c}{COV$\uparrow$} \\

\cmidrule(lr){3-4} \cmidrule(lr){5-6}
\cmidrule(lr){7-8} \cmidrule(lr){9-10}

&
& CD & EMD & CD & EMD
& CD & EMD & CD & EMD \\

\midrule
MFM-point~\cite{molodyk2025mfm} & 1400 
& 65.36 & \textbf{57.21} & -- & -- 
& 54.92 & 53.25 & -- & -- \\

LION~\cite{vahdat2022lion} & 1000 
& 67.41 & 61.23 & 47.16 & 49.63 
& 53.70 & \underline{52.34} & 48.94 & 52.11 \\

FrePoLat~\cite{zhou2024frepolad} & 1000 
& \underline{65.25} & 62.10 & 45.16 & 47.80 
& \underline{52.35} & 53.23 & 50.28 & 50.93 \\

NSOT~\cite{hui2025not} & 1000 
& 68.64 & 61.85 & -- & -- 
& 55.51 & 57.63 & -- & -- \\

DiT-3D~\cite{mo2023dit} & 1000 
& \textbf{62.35} & \underline{58.67} &  53.16 &  54.39 
& \textbf{49.11} & \textbf{50.73} & 50.00 & 56.38 \\

PVD~\cite{zhou20213d} & 1000 
& 73.82 & 64.81 & 48.88 & 52.09  
& 56.26 & 53.32 & 49.84 & 50.60 \\

PVD-DDIM~\cite{zhou20213d} & 100 
& 76.21 & 69.84 & 44.23 & 49.75 
& 61.54 & 57.73 & 46.32 & 48.19 \\

DPM~\cite{luo2021diffusion} & 100 
& 76.42 & 86.91 & 48.64 & 33.83 
& 60.05 & 74.77 & 44.86 & 35.50 \\

ShapeGF~\cite{cai2020learning} & 10 
& 80.00 & 76.17 & 45.19 & 40.25 
& 68.96 & 65.48 & 48.34 & 44.26 \\

PSF~\cite{wu2023fast} & 1 
& \textbf{71.11} & \textbf{61.09} & 46.17 & 52.59 
& \underline{58.92} & \underline{54.45} & 46.71 & \underline{49.84} \\

r-GAN~\cite{achlioptas2018learning} & 1 
& 98.40 & 96.79 & 30.12 & 14.32 
& 83.69 & 99.70 & 24.27 & 15.13 \\

1-GAN (CD)~\cite{achlioptas2018learning} & 1 
& 87.30 & 93.95 & 38.52 & 21.23 
& 68.58 & 83.84 & 41.99 & 29.31 \\

1-GAN (EMD)~\cite{achlioptas2018learning} & 1 
& 89.49 & 76.91 & 38.27 & 38.52 
& 71.90 & 64.65 &  38.07 & 44.86 \\

PointFlow~\cite{yang2019pointflow} & 1 
& 75.68 & 70.74 & 47.90 & 46.41 
& 62.84 & 60.57 &  42.90 & 50.00 \\

DPF-Net~\cite{klokov2020discrete} & 1 
& 75.18 & 65.55 & 46.17 & 48.89 
& 62.00 & 58.53 & 44.71 &  48.79 \\

SoftFlow~\cite{kim2020softflow} & 1 
& 76.05 & 65.80 & 46.91 & 47.90 
& 59.21 & 60.05 & 41.39 & 47.43 \\

SetVAE~\cite{kim2021setvae} & 1 
& 75.31 & 77.65 & 43.70 & 48.40 
& 58.76 & 61.48 & \underline{46.83} & 44.26 \\ 

EMF (ours) & 1 
& \underline{72.84} & \underline{62.72} & \textbf{50.37} & \textbf{55.56} & \textbf{56.42} & \textbf{54.08}& \textbf{47.89}  & \textbf{52.87}    \\

\bottomrule
\end{tabular}
\end{table}

\paragraph{Result} For point cloud generation, we present 1-step unconditional samples for two ShapeNet categories in Figure~\ref{fig:pc_efm}. All models are trained on ShapeNet using the LION architecture. Quantitative results are reported in Table~\ref{tab:shape_generation_2048}, where our method achieves the best generation quality compared to prior approaches.

\section{Additional Results\&Experiments }\label{sec:additional_experiment}
\subsection{Ablation Study: Rationale for the Second Local Linear Approximation} \label{sec:rational_second_lla}
In \autoref{eq:EFM_equation}, during the derivation, we apply local linear approximation to the term $u_{t \to t+\Delta t}$ at two different places.  The first approximation appears in an independent summation term in \autoref{eq:EFM_equation}, where $u_{t \to t+\Delta t}$ is approximated by $u_{t \to t}$. The motivation of this approximation is straightforward: by reducing it to the instantaneous velocity $u_{t \to t}(x)$, we can further replace it with the conditional instantaneous velocity $u_t(x \mid x_1)$, thereby incorporating explicit supervision from the dataset.  

The second approximation is applied to the update $x_{t+\Delta t} = \Delta tu_{t \to t+\Delta t}(x_t) + x_t$,  where $u_{t \to t+\Delta t}$ is again approximated by $u_{t \to t}$. This approximation is primarily introduced for memory efficiency.  This design choice is particularly important for conditional generation. During training, conditional MeanFlow employs CFG, which replaces $u(x \mid x_1)$ with $w u(x \mid x_1) + (1 - w - k) u^\theta_{t \to t}(x_t, C_0) + k u^\theta_{t \to t}(x_t, C)$, where $C$ denotes the label of $x_1$ and $C_0$ is the null label. In this formulation, the term $u^\theta_{t \to t}(x_t, C)$ can be directly reused in the computation of $x_{t+\Delta t} \approx \Delta t u_{t \to t}(x_t) + x_t$,  which helps reduce memory consumption.  For unconditional generation, although the computation of $x_{t+\Delta t}$ requires two stop-gradient forward passes and one trainable forward pass regardless of whether $u_{t \to t+\Delta t}$ is approximated, we empirically observe that using the exact $u_{t \to t+\Delta t}$ does not improve generation quality, and moreover it prevents the use of the multi-head technique described in \autoref{fig:aux_model}, leading to increased memory usage and computational cost. Quantitative results are reported in \autoref{tab:memory_time_comparison2}.

\begin{table}[t]
\centering
\caption{Comparison of memory and computational cost between our method and MeanFlow for unconditional (CelebA-HQ) and conditional (ImageNet-1000) generation using the DiT-B/2 model. ``Peak'' denotes the maximum GPU memory usage during training, ``Fixed'' refers to the constant memory overhead, and aux-EMF indicates our method with a 4-block auxiliary head.  All experiments are conducted on a single H200 GPU with batch sizes of 64 for CelebA-HQ and 128 for ImageNet-1000, using EMA and AdamW optimization with mixed-precision (FP16) training in PyTorch.}
\label{tab:memory_time_comparison2}
\begin{tabular}{lccccc}
\hline
\textbf{Method} & \textbf{Dataset} & \textbf{Peak Memory} & \textbf{Fixed Memory} & \textbf{Speed / Iter} & \textbf{FID} \\
\hline
MeanFlow & CelebA-HQ & 32.1GB & 2.3GB  & 151.4ms & 12.4  \\
EMF (compute $u_{t\to t+\Delta t}$)                 & CelebA-HQ & 23.3GB  & 2.3GB & 91.74ms  & 11.2 \\
EMF                  & CelebA-HQ & 23.3GB & 2.3GB & 91.2 ms & \textbf{10.9} \\
aux-EMF              & CelebA-HQ & \textbf{17.6GB}  & 2.8 GB & \textbf{84.2 ms} & 11.7 \\
MeanFlow & ImageNet &101.9GB  & 2.4GB & 400.9ms & 11.1 \\
EMF (compute $u_{t\to t+\Delta t}$)              & ImageNet & 71.7GB & 2.4GB  & 232.6ms & - \\
EMF              & ImageNet & \textbf{57.9GB} & 2.4GB  &  \textbf{198.8ms} & \textbf{7.2} \\
\hline
\end{tabular}
\end{table}

\begin{figure}[t]
  \centering
  \subfigure[1 step]{\includegraphics[width=0.32\textwidth]{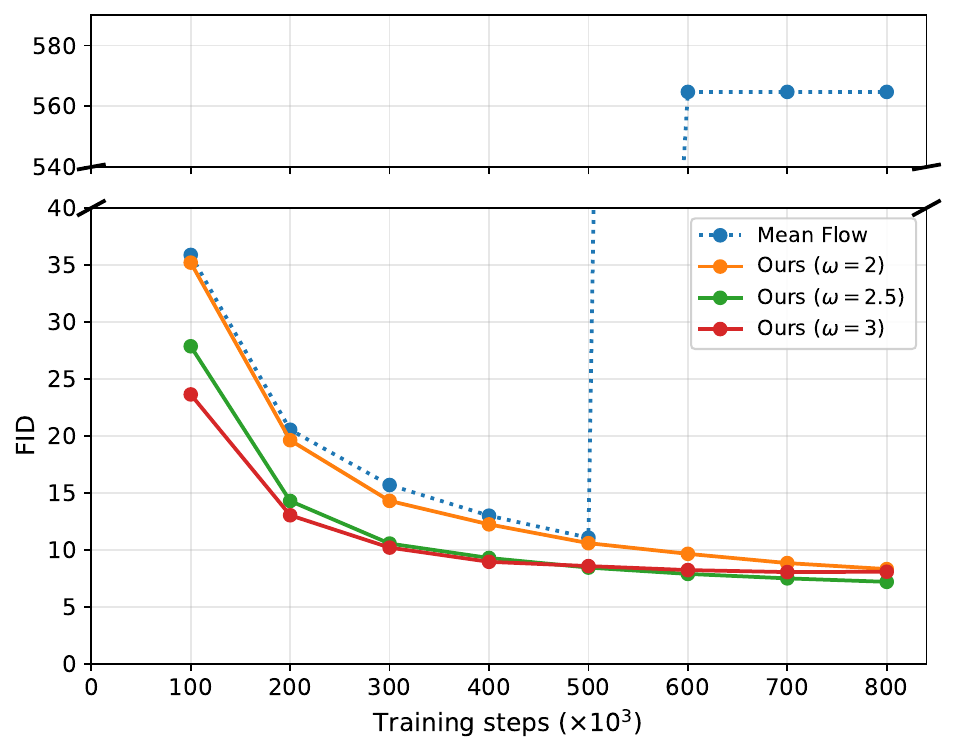}}
  \hfill
  \subfigure[2 step]{\includegraphics[width=0.32\textwidth]{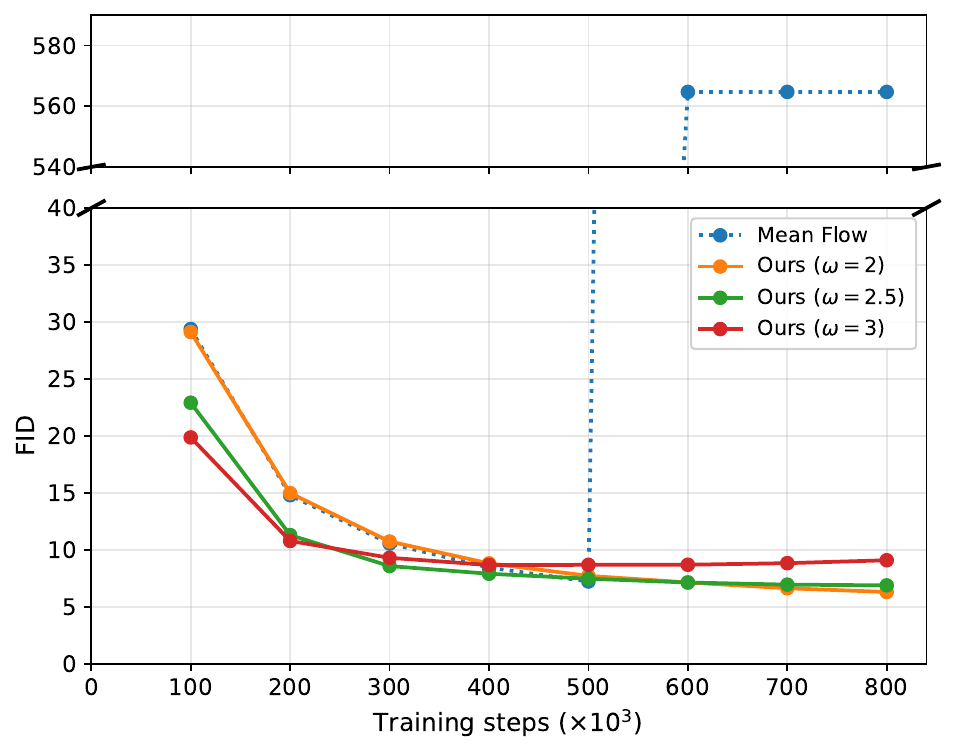}}
  \hfill
  \subfigure[4 step]{\includegraphics[width=0.32\textwidth]{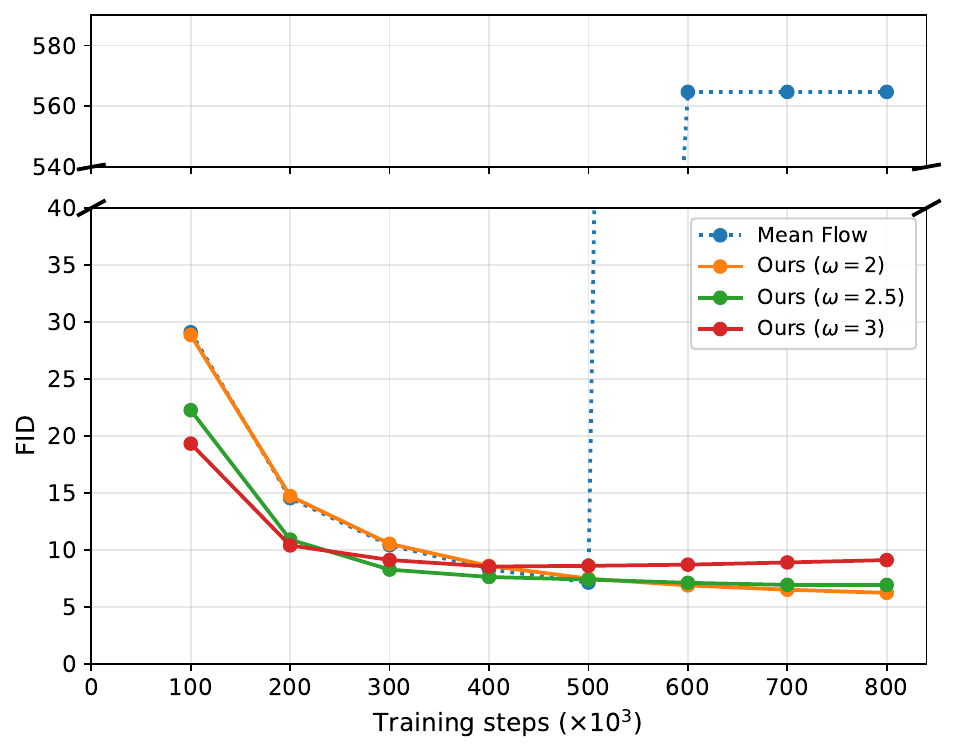}}
  \caption{Training dynamics of our method and MeanFlow on ImageNet under 1-step, 2-step, and 4-step generation. For our method, we compare different classifier-free guidance (CFG) scales, while MeanFlow uses a fixed CFG scale of 2.0, which is the best-performing setting reported in \cite{geng2025mean}.}
  \label{fig:training_dynamics_imageNet}
\end{figure}

\begin{figure}
  \centering
  \subfigure[$M_g$]{\includegraphics[width=0.32\textwidth]{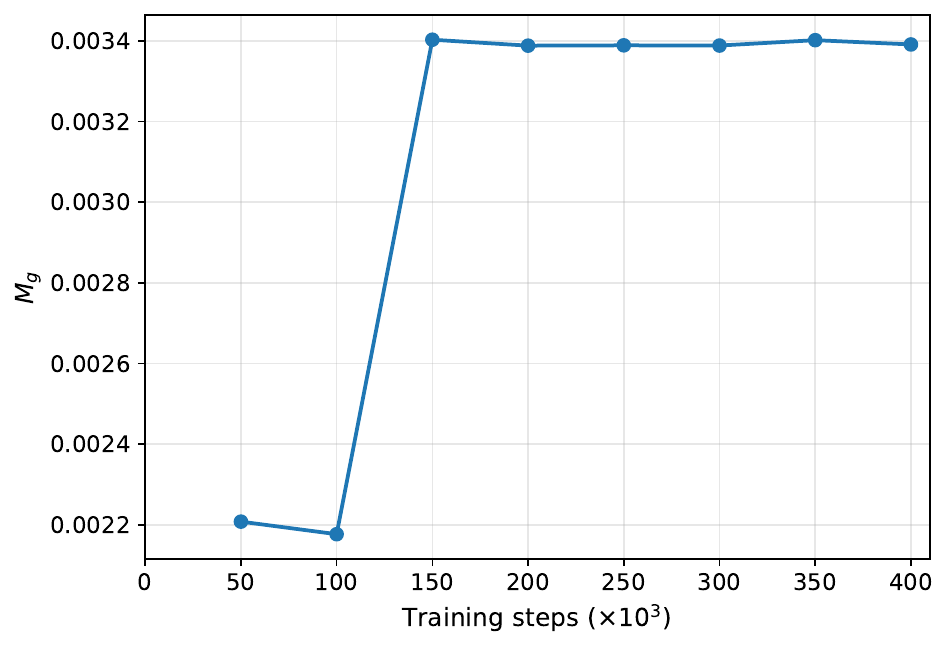}}
  \hfill
  \subfigure[$M_x$]{\includegraphics[width=0.32\textwidth]{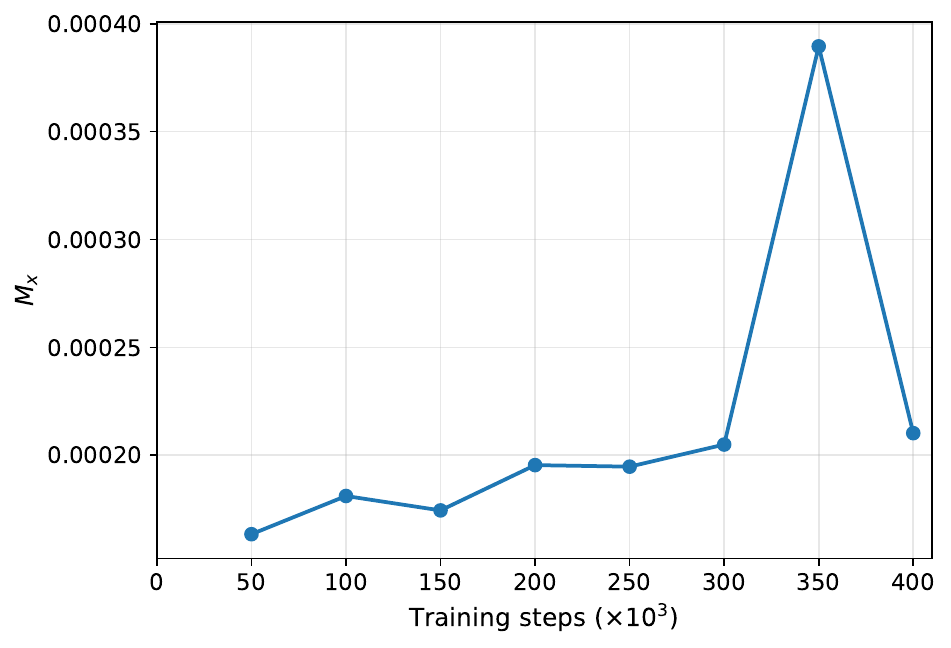}}
  \hfill
  \subfigure[$M_t$]{\includegraphics[width=0.32\textwidth]{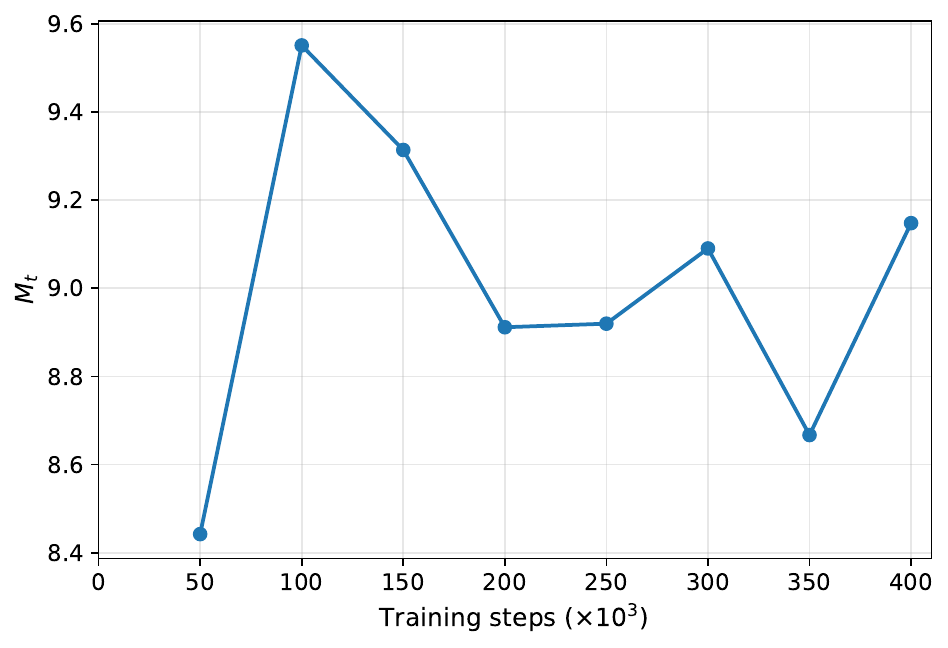}}
  \caption{Evolution of the parameters $M_g$, $M_t$, and $M_x$ in \autoref{assupmption} over the course of training.} \label{assumption_valid}
\end{figure}
\begin{figure}
  \centering
  \subfigure[$M_g'$]{\includegraphics[width=0.32\textwidth]{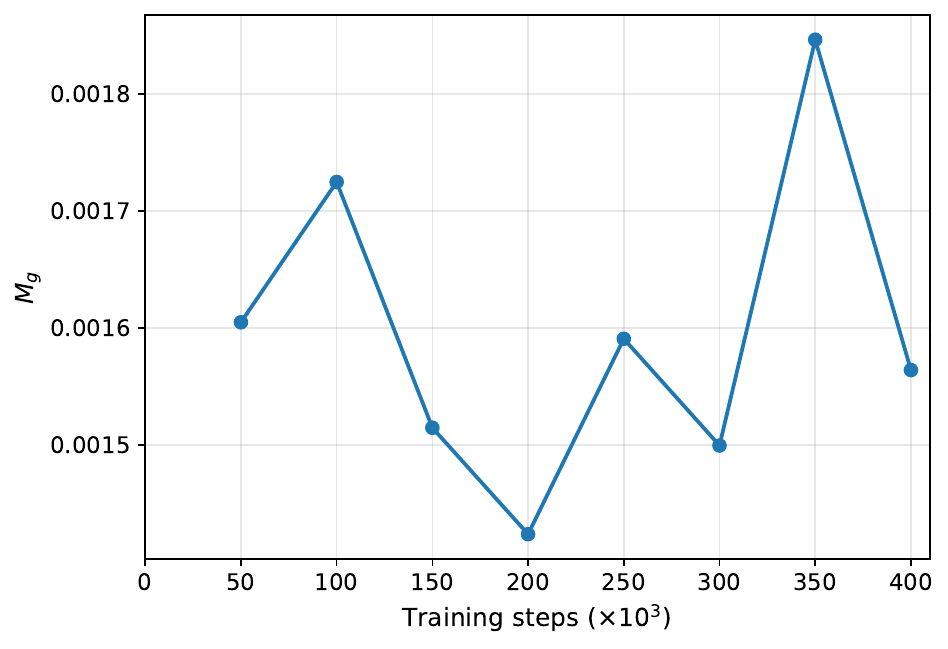}}
  \hfill
  \subfigure[$M_x'$]{\includegraphics[width=0.32\textwidth]{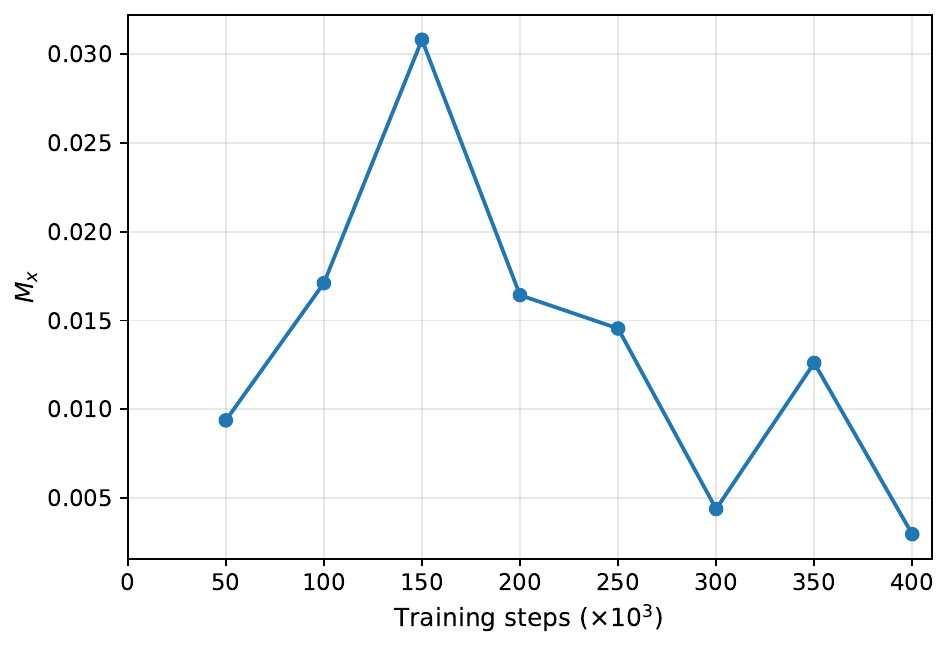}}
  \hfill
  \subfigure[$M_t'$]{\includegraphics[width=0.32\textwidth]{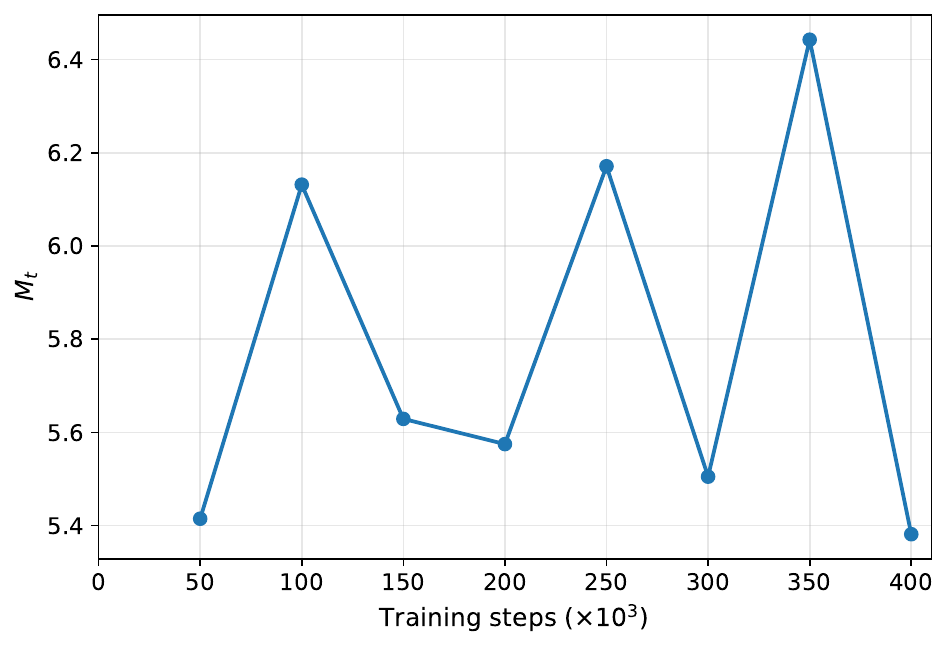}}
  \caption{Evolution of the parameters $M'_g$, $M'_t$, and $M'_x$ in \autoref{assupmption_x1} over the course of training.}\label{assumption_x1_valid}
\end{figure}

\begin{figure}
  \centering
  \subfigure[Loss of $u$-prediction]{\includegraphics[width=0.24\textwidth]{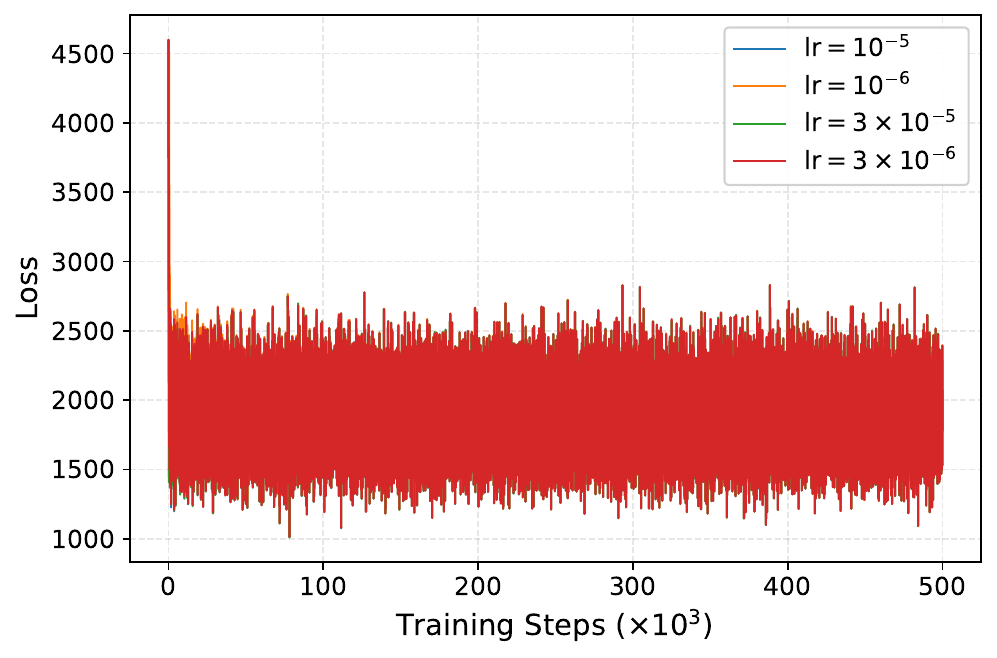}}
  \hfill
  \subfigure[Loss of $x_1$-prediction]{\includegraphics[width=0.24\textwidth]{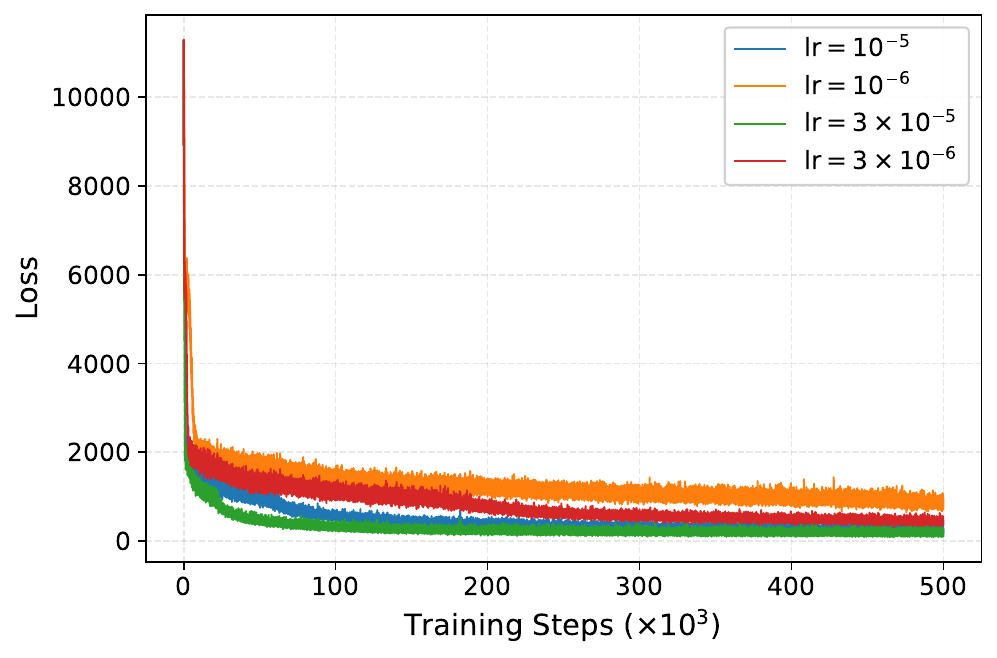}}
  \hfill
  \subfigure[Variance of $u$-prediction]{\includegraphics[width=0.24\textwidth]{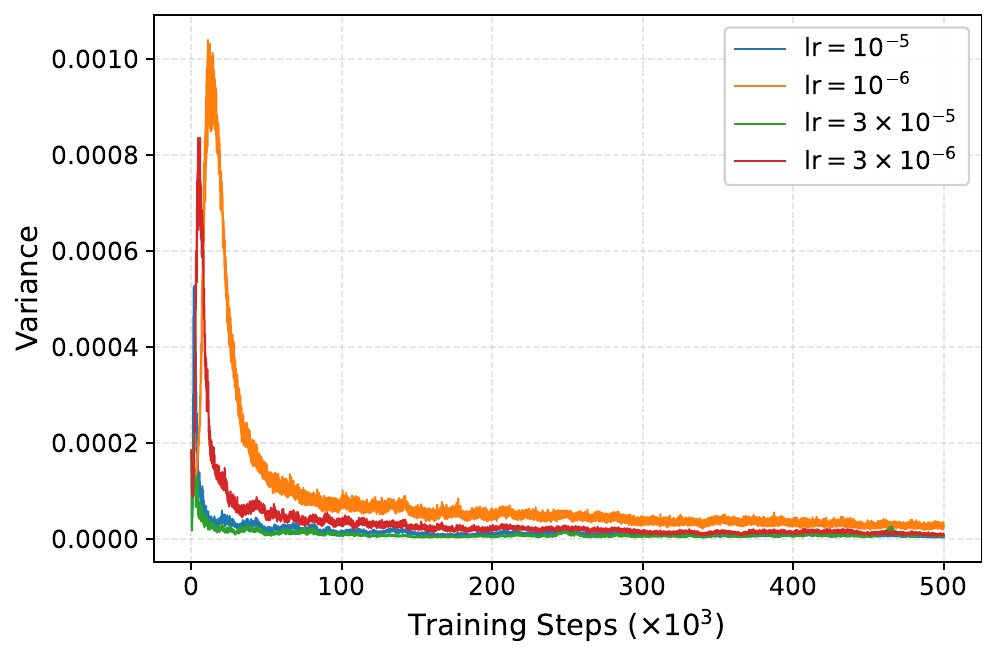}}
  \hfill
  \subfigure[Variance of $x_1$-prediction]{\includegraphics[width=0.24\textwidth]{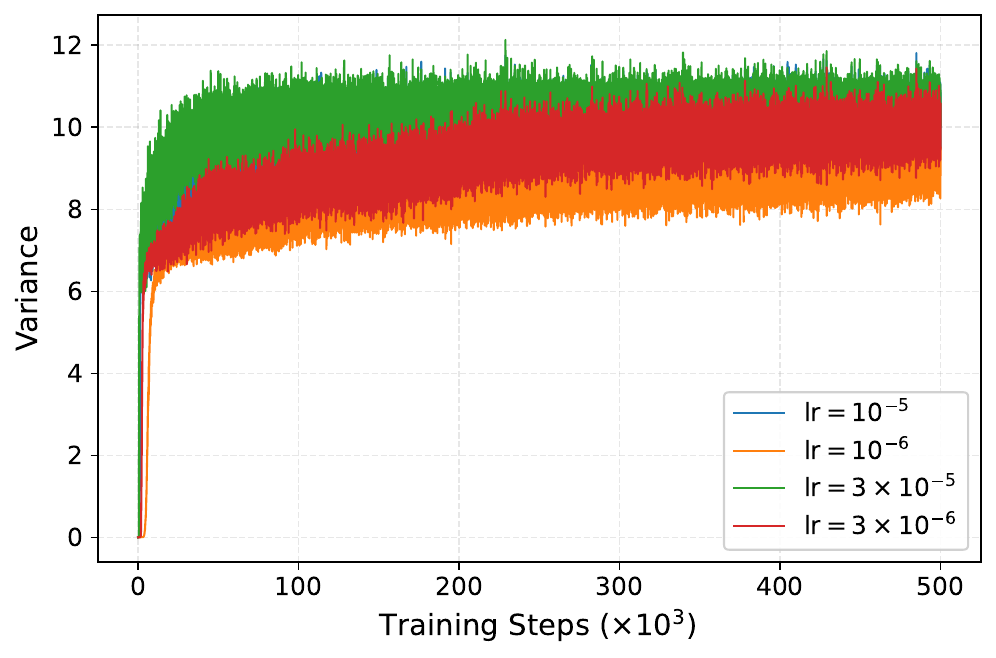}}
  \caption{Training behavior of $u$-prediction and $x_1$-prediction Flow Matching variants across different learning rates.  We report the variance of network outputs, where zero variance indicates a constant SDF field and corresponds to variance collapse.  The $u$-prediction variant suffers from spatial variance collapse and unstable optimization, whereas the $x_1$-prediction model exhibits stable variance and smooth training dynamics.}
  \vspace{0.6em}

  \subfigure{\includegraphics[width=0.98\textwidth]{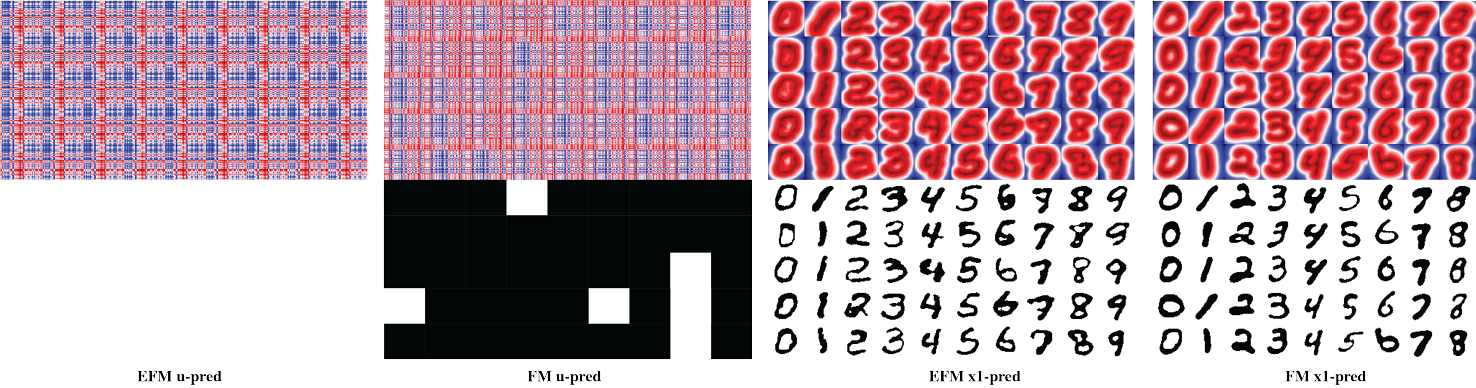}}
    \caption{Comparison of SDF generation using different prediction variants.  Due to spatial variance collapse in $u$-prediction flow matching, the model fails to generate meaningful digit SDFs.  Since $u$-prediction EMF relies on accurate estimation of instantaneous velocities, it also fails in this setting.  In contrast, $x_1$-prediction flow matching learns reliable dynamics, enabling $x_1$-prediction EMF to successfully generate high-quality SDFs.}
\label{fig:minst}
\end{figure}


\begin{figure}
  \centering
  \subfigure[FM 1 step]{\includegraphics[width=0.24\textwidth]{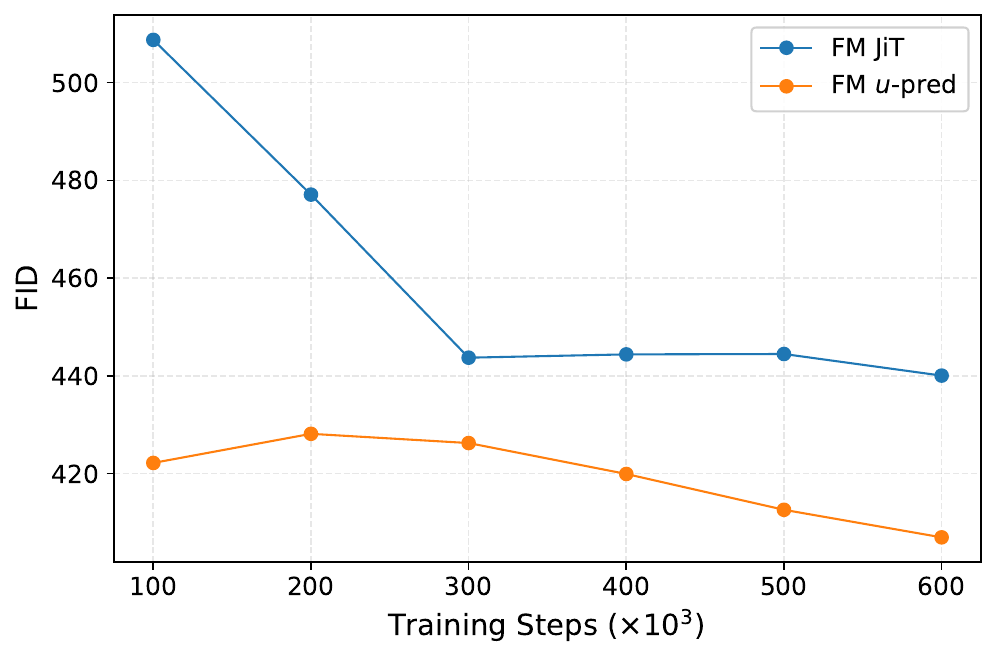}}
  \hfill
  \subfigure[FM 2 step]{\includegraphics[width=0.24\textwidth]{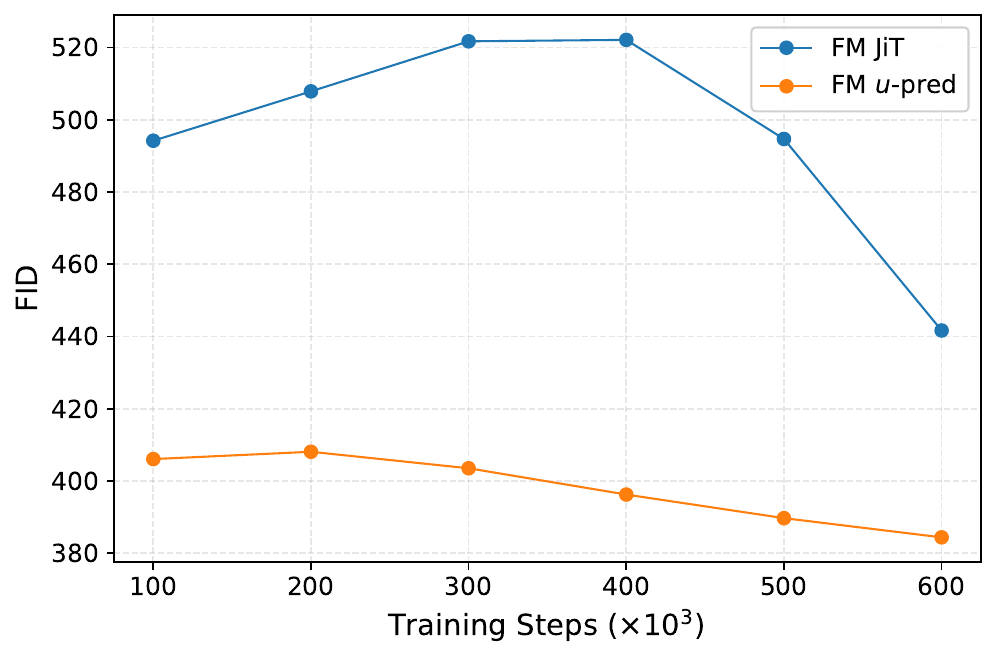}}
  \hfill
  \subfigure[MF\&EFM 1 step]{\includegraphics[width=0.24\textwidth]{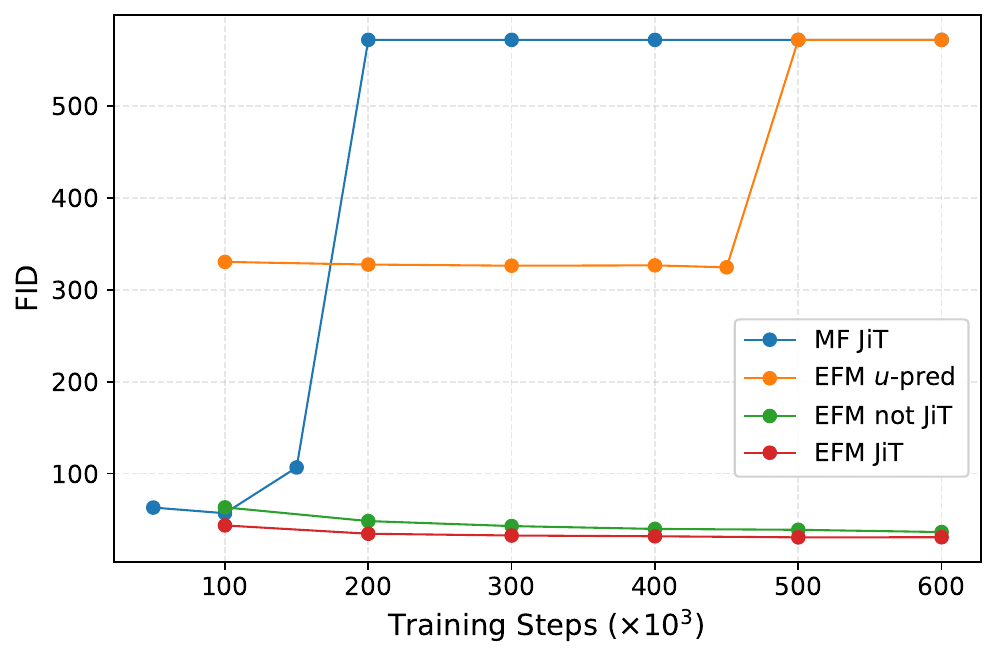}}
  \hfill
  \subfigure[MF\&EFM 2 step]{\includegraphics[width=0.24\textwidth]{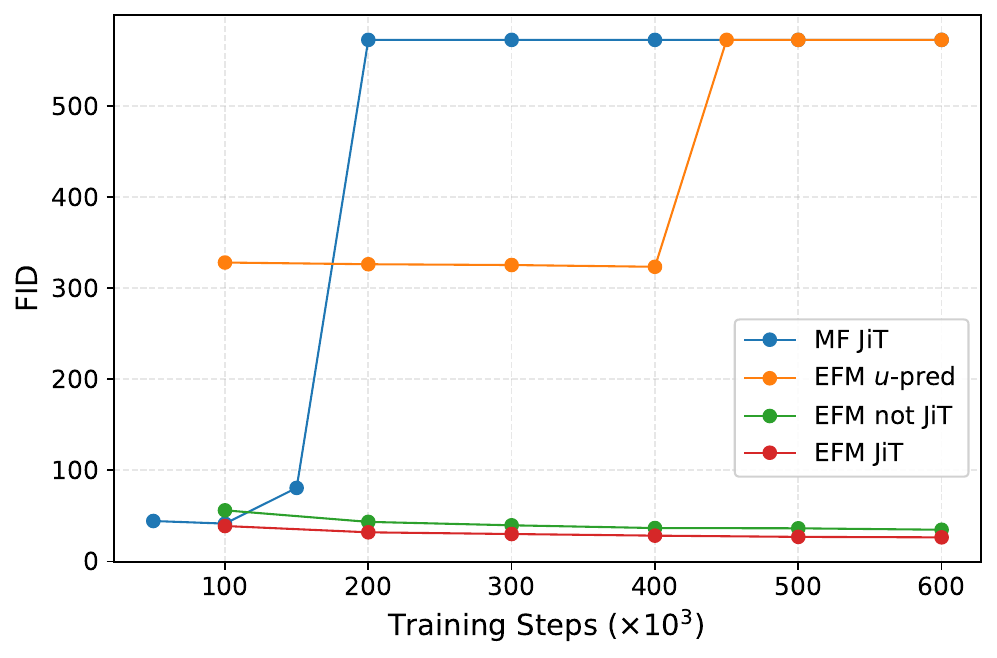}}
  \caption{Training dynamics of FID for different JiT-based generative methods.  We compare JiT-style flow matching (i.e., $x_1$-prediction FM with $u$-loss), $u$-prediction flow matching, JiT-style MeanFlow, $u$-prediction EMF, JiT-style EMF, and $x_1$-prediction EMF. Even with the JiT formulation, MeanFlow remains unstable, highlighting its inherent optimization difficulty. Similarly, $u$-prediction EMF still exhibits unstable behavior, indicating that the $x_1$-prediction variant is essential for stable and reliable training.}
   \label{fig:dynamics_jit_compare}
\end{figure}

\begin{figure*}
    \centering
    \includegraphics[width=0.99\linewidth]{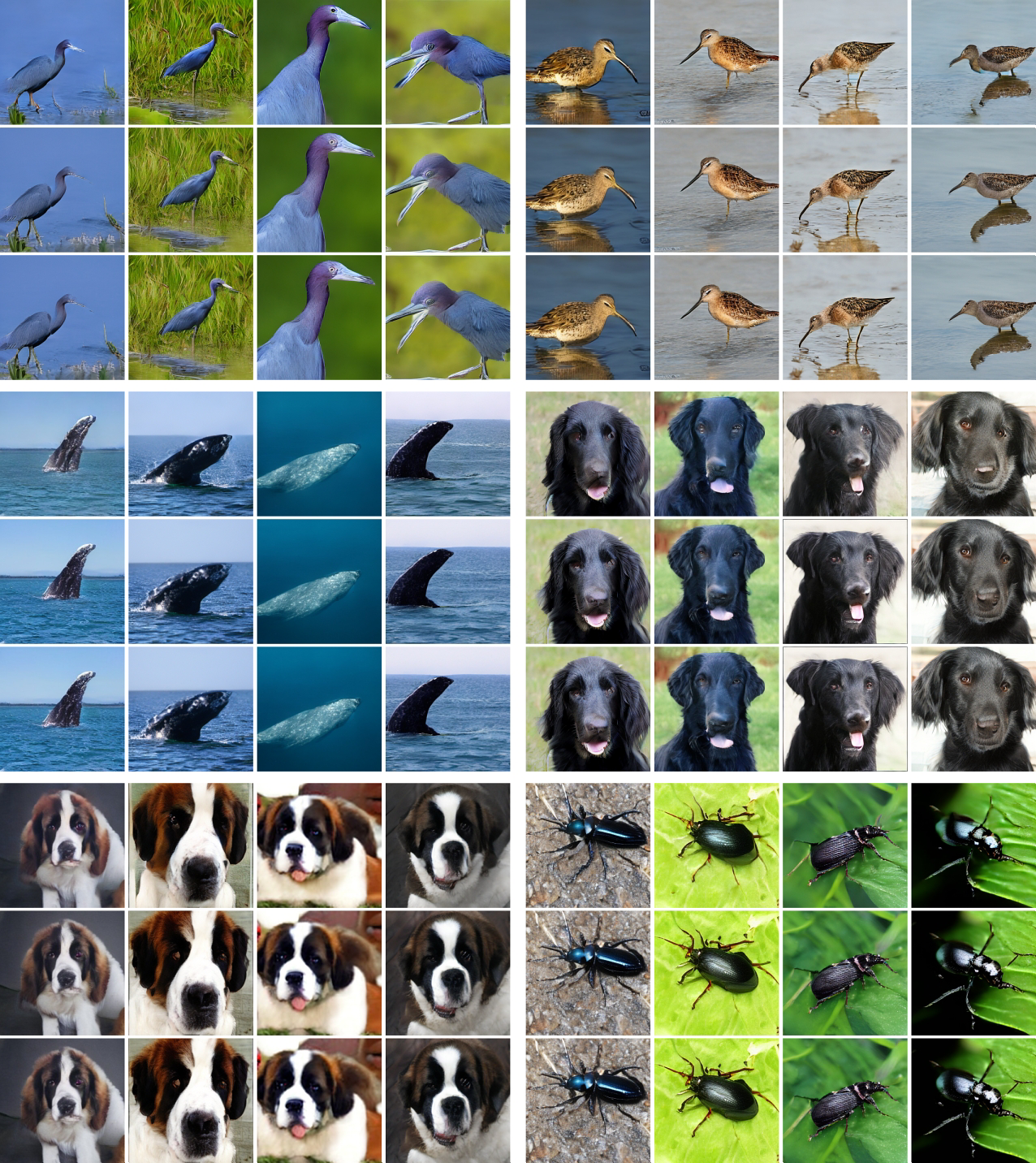}
    \caption{Part 1 of ImageNet~\cite{deng2009imagenet} generation result. First, second and third row shows 1-step, 2-steps and 4-steps generation respectively using the same condition.}
    \label{fig:imagenet_part1}
\end{figure*}

\begin{figure*}
    \centering
    \includegraphics[width=0.99\linewidth]{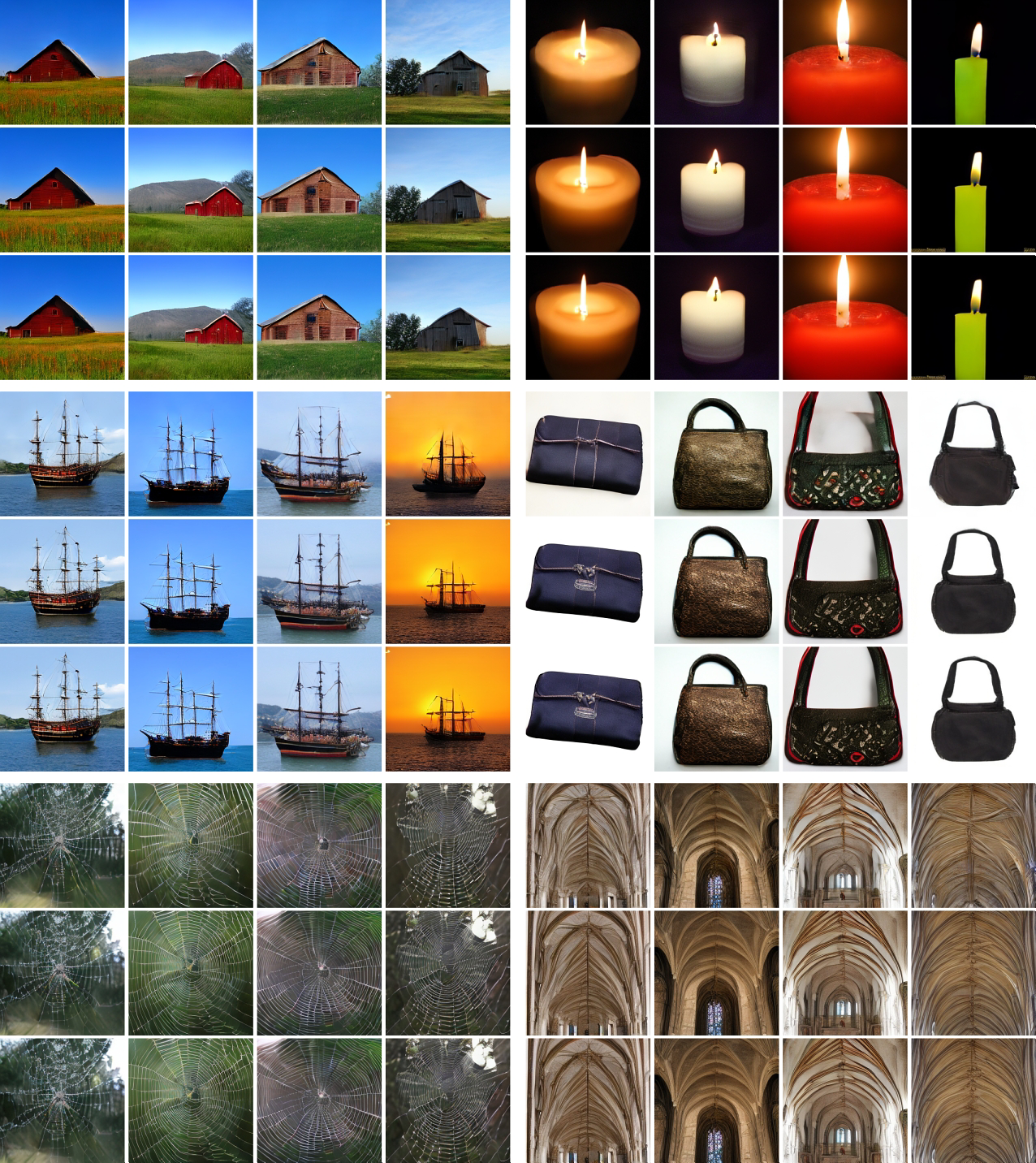}
    \caption{Part 2 of ImageNet~\cite{deng2009imagenet} generation result. First, second and third row shows 1-step, 2-steps and 4-steps generation respectively using the same condition.}
    \label{fig:imagenet_part2}
\end{figure*}

\begin{figure*}
    \centering
    \includegraphics[width=0.99\linewidth]{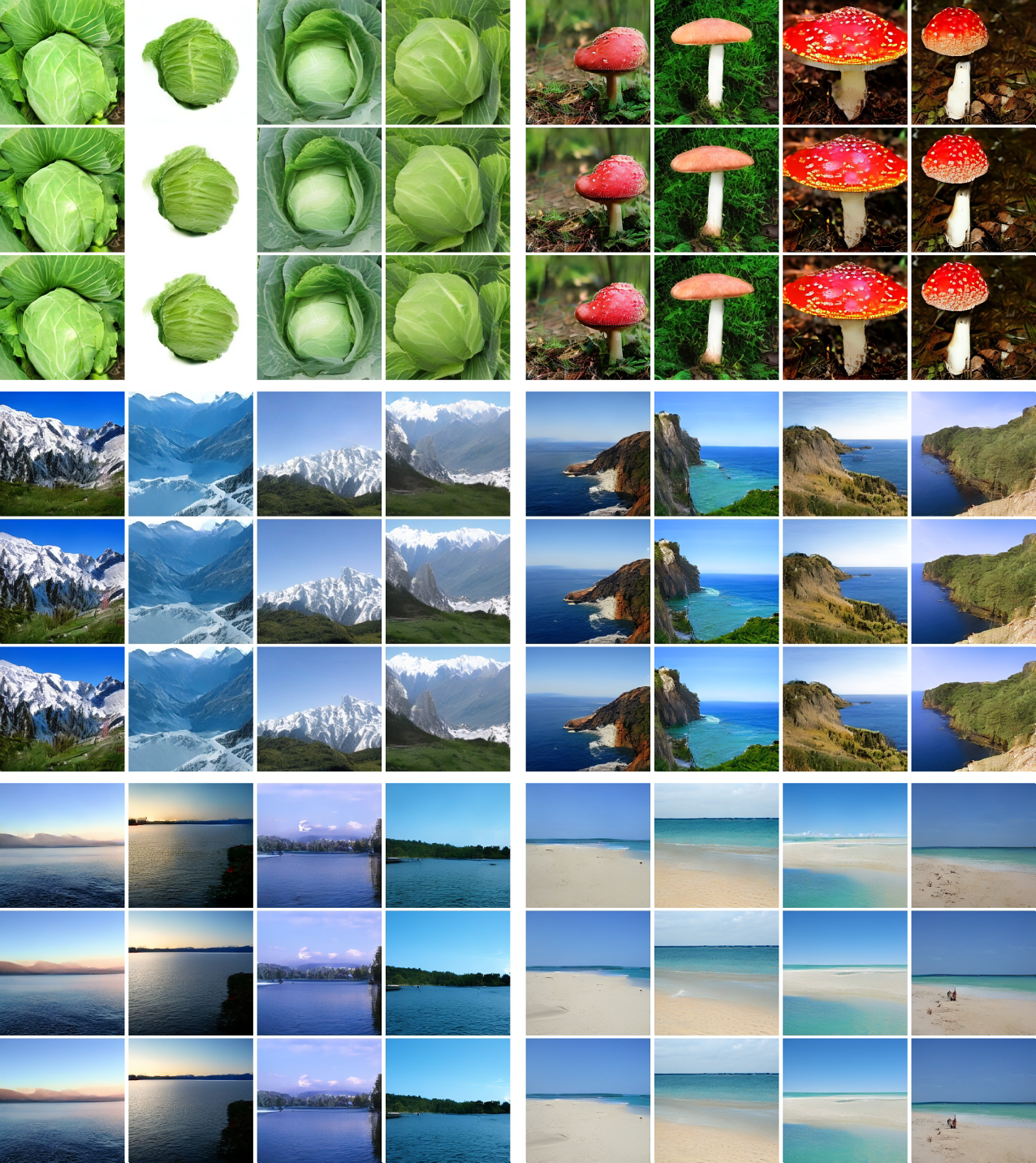}
    \caption{Part 3 of ImageNet~\cite{deng2009imagenet} generation result. First, second and third row shows 1-step, 2-steps and 4-steps generation respectively using the same condition.}
    \label{fig:imagenet_part3}
\end{figure*}

\begin{figure*}
    \centering
    \includegraphics[width=0.99\linewidth]{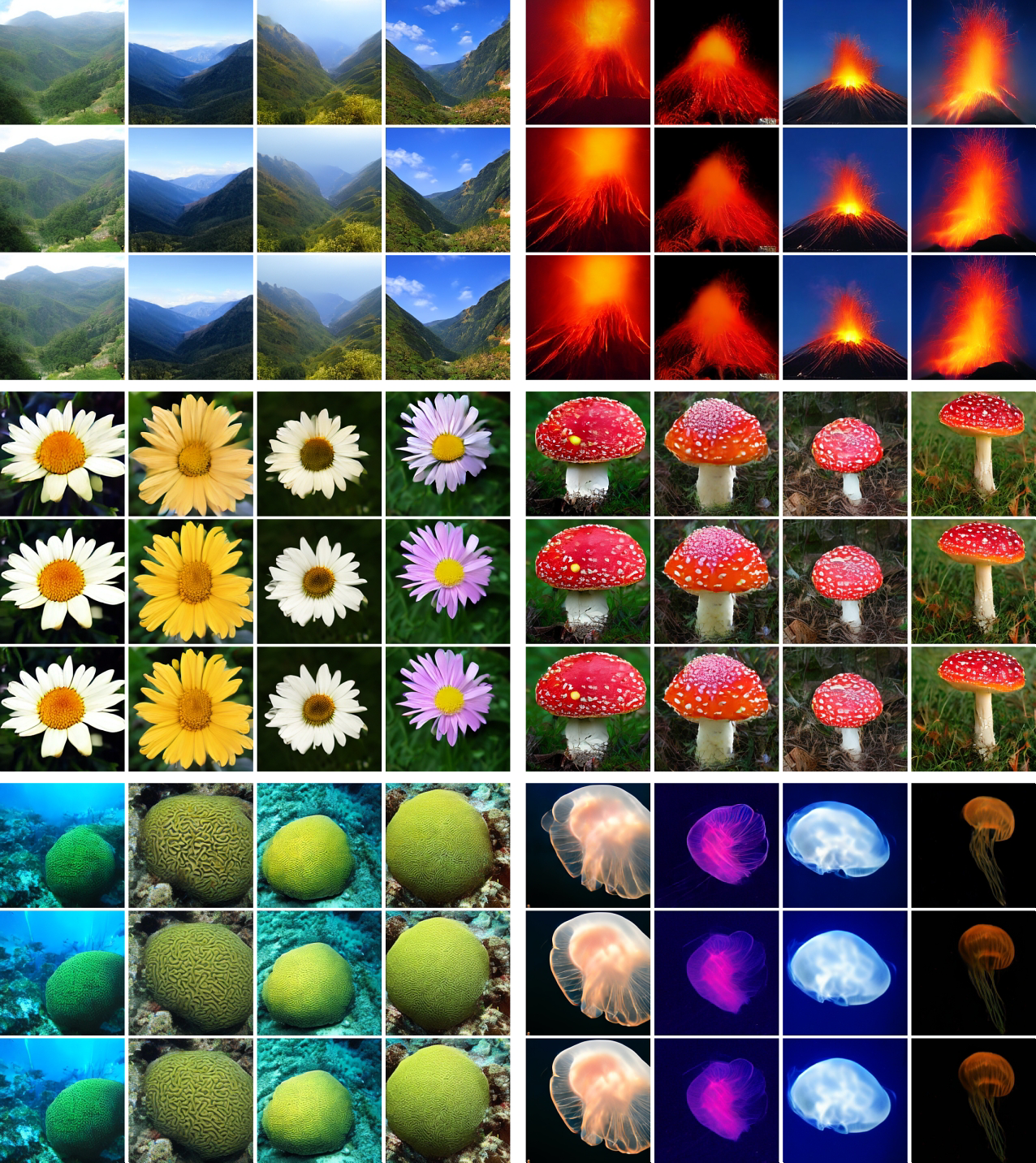}
    \caption{Part 4 of ImageNet~\cite{deng2009imagenet} generation result. First, second and third row shows 1-step, 2-steps and 4-steps generation respectively using the same condition.}
    \label{fig:imagenet_part3}
\end{figure*}

\end{document}